\documentclass{article}






\usepackage[utf8]{inputenc} 
\usepackage[T1]{fontenc}    
\usepackage{hyperref}       
\usepackage{url}            
\usepackage{booktabs}       
\usepackage{amsfonts}       
\usepackage{nicefrac}       
\usepackage{microtype}      
\usepackage[alpha]{mdpn}
\usepackage{amsmath}
\usepackage{array,tabularx}
\usepackage{thm-restate}
\usepackage{amsthm}	
\newtheorem{definition}{Definition}[]

\renewcommand{\vec}[1]{\mathbf{#1}}
\usepackage{subfigure}

\usepackage{url}            
\usepackage{booktabs}       
\usepackage{amsfonts}       
\usepackage{nicefrac}       
\usepackage{microtype}      
\usepackage{amsmath}
\usepackage{mathtools}  

\usepackage{algorithm}
\usepackage{algorithmic}

\usepackage{hyperref}

\usepackage{natbib}


\usepackage{theoremref} 
\usepackage{amsxtra} 

\usepackage{textcase}


\newcommand{\pib}{{\pi_b}}

\newcommand{\pieval}{{\pi_e}}

\newcommand{\bx}{{\boldsymbol x}}
\newcommand{\bw}{{\boldsymbol w}}

\newcommand{\data}{{\mathcal{D}}}



\DeclareMathOperator*{\argmax}{\arg\!\max}






\newcommand\numberthis{\addtocounter{equation}{1}\tag{\theequation}}

\usepackage{algorithmic}
\usepackage{enumitem}

\usepackage[accepted]{icml2020}


\begin{document}

\setlength{\abovedisplayskip}{5pt}
\setlength{\belowdisplayskip}{5pt}

\twocolumn[
\icmltitle{Reducing Sampling Error in Batch Temporal Difference
Learning}



\icmlsetsymbol{equal}{*}

\begin{icmlauthorlist}
\icmlauthor{Brahma S. Pavse}{ut}
\icmlauthor{Ishan Durugkar}{ut}
\icmlauthor{Josiah P. Hanna}{edin,wisc}
\icmlauthor{Peter Stone}{ut,sony}
\end{icmlauthorlist}

\icmlaffiliation{ut}{The University of Texas at Austin}
\icmlaffiliation{edin}{School of Informatics, University of Edinburgh}
\icmlaffiliation{sony}{Sony AI}
\icmlaffiliation{wisc}{To be joining the Computer Sciences Department, University of Wisconsin--Madison}

\icmlcorrespondingauthor{Brahma S. Pavse}{brahmasp@cs.utexas.edu}

\icmlkeywords{batch reinforcement learning,
temporal difference learning, importance sampling, off-policy, certainty-equivalence, sampling error,
data efficiency}

\vskip 0.3in
]



\printAffiliationsAndNotice{} 

\begin{abstract}
Temporal difference (TD) learning is one of the main foundations of modern reinforcement learning. This paper studies the use of TD(0), a canonical TD algorithm, to estimate the value function of a given policy from a batch of data. In this batch setting, we show that TD(0) may converge to an inaccurate value function because the update following an action is weighted according to the number of times that action occurred in the batch -- not the true probability of the action under the given policy. To address this limitation, we introduce \textit{policy sampling error corrected}-TD(0) (PSEC-TD(0)). PSEC-TD(0) first estimates the empirical distribution of actions in each state in the batch and then uses importance sampling to correct for the mismatch between the empirical weighting and the correct weighting for updates following each action. We refine the concept of a certainty-equivalence estimate and argue that PSEC-TD(0) is a more data efficient estimator than TD(0) for a fixed batch of data. Finally, we conduct an empirical evaluation of PSEC-TD(0) on three batch value function learning tasks, with a hyperparameter sensitivity analysis, and show that PSEC-TD(0) produces value function estimates with lower mean squared error than TD(0).
\end{abstract}

\section{Introduction}
Reinforcement
learning (RL) \cite{Sutton:1998:IRL:551283} algorithms have
been applied to a variety of
sequential-decision making problems such as robot
manipulation \cite{rlroboticssurvey, DBLP:journals/corr/GuHLL16} and autonomous driving \cite{sallab2017deep}.
Many RL algorithms 
learn an optimal control policy by estimating
the \textit{value function}, a function
that gives the expected return 
from each state 
when following a particular policy \cite{10.2307/2630487, 10.5555/26970, NIPS1999_1786}.
These algorithms
require accurate value function estimation with finite data.
A fundamental approach to value function learning is the 
temporal difference (TD) algorithm \cite{sutton1988learning}.

In this work, we focus on improving the accuracy of the value function learned by \textit{batch} TD, where TD updates for a value function are computed from a fixed batch of  data. We show that batch TD(0) may converge to an inaccurate value function since it ignores the known action probabilities of the policy it is evaluating.
For example, consider a single state in which the evaluation policy selects between action $a_1$ or $a_2$ with probability $0.5$.
If, in the finite batch of observed data, $a_1$ actually happens to occur twice as often as $a_2$ then TD updates following $a_1$ will receive twice as much weight as updates following $a_2$, even though in expectation they should receive the same weight.
We describe this finite-sample error in the value function estimate as \textit{policy sampling error}. To correct for policy sampling
error we propose to first estimate the maximum likelihood policy from the observed data and then use importance sampling \cite{precup2000eligibility} to account for the mismatch between the frequency of sampled actions and their true probability under the evaluation policy.
Variants of this technique have
been successful in multi-armed bandits \cite{Li2015TowardMO, DBLP:journals/corr/abs-1809-03084, DBLP:journals/corr/abs-1808-00232}, policy evaluation \cite{ICML2019-Hanna}, and policy gradient learning \cite{hanna2019reducing}.
However, we are the first to show that this technique can be used to 
correct for policy sampling error in value function estimation and the first to show the benefit of importance sampling in on-policy value function estimation. We show that by using the available policy information, our approach is more
data efficient than vanilla batch TD(0).
We call our new value function learning algorithm batch \textit{policy sampling error corrected-TD(0) (PSEC-TD(0))}. 

The contributions of the paper are the following:
\begin{enumerate}
    \itemsep0em
    \item Show that the fixed point that batch TD(0) converges to for a given policy is inaccurate with respect to the true value function.
    \item Introduce the batch PSEC-TD(0) algorithm that reduces the policy sampling error in batch TD(0).
    \item Refine the concept of a certainty-equivalence estimate for TD(0) \cite{sutton1988learning} and provide theoretical justification that batch PSEC-TD(0) is more data efficient than batch TD(0).
    \item Empirically analyze batch PSEC-TD(0) in the tabular and function approximation setting.
\end{enumerate}

\section{Background}
This section introduces notation and formally specifies the batch value function learning problem.

\subsection{Notation and Definitions}
Following the standard MDPNv1 notation \cite{DBLP:journals/corr/Thomas15a}, we consider a Markov decision process (MDP) with state space $\sset$, action space $\aset$, reward function $R$, transition dynamics function $P$, and discount factor $\gamma$ \cite{puterman2014markov}.
In any state $s$, an agent selects stochastic actions according to a policy $\pi$, $a \sim \pi(\cdot | s)$.
After taking an action $a$ in state $s$ the agent transitions to a new state $s' \sim P(\cdot | s,a)$ and receives reward $R(s, a, s')$.
We assume $\sset$ and $\aset$ to be finite;
however, our experiments also consider infinite sized $\sset$ and $\aset$. We consider the episodic, discounted, and finite 
horizon setting.
The policy and MDP jointly induce a \textit{Markov 
reward process} (MRP), in which the agent 
transitions between states $s$ and $s'$ with 
probability $P(s' | s)$ and receives reward 
$R(s,s')$. Finally, $\bx(s): \sset \rightarrow 
\mathbb{R}^d$ gives a column feature vector for 
each state $s \in \sset$.

We are concerned with computing
the value function, $v^{\pi}: \sset \rightarrow \mathbb{R}$, that gives the \textit{value} of any state.
The value of a particular state is the expected discounted return,  
i.e. the expected sum of discounted
rewards when following policy $\pi$ from that state:
{\small
\begin{equation}\footnotesize
v^{\pi}(s) \coloneqq \mathbf{E}_{\pi} \biggl [\sum_{k=0}^{L} \gamma^k R_{t+k+1} \biggm | s_t = s \biggr], \forall s \in \sset
\end{equation}
}
where $L$ is the terminal time-step and the expectation is taken over the distribution of future states, actions, and rewards under $\pi$ and $P$.

\subsection{Batch Value Prediction}
This work investigates the problem of approximating $v^{\pi_e}$ given a batch of data, $\mathcal{D}$, and an evaluation policy,  $\pi_e$.
Let a single episode, $\tau$, be defined
as $\tau := (s_0, a_0, r_{0}, s_{1},
...,$ $s_{L_{\tau-1}}, a_{L_{\tau-1}}, r_{L_{\tau-1}})$, where $L_\tau$ is the length
of the episode $\tau$. 
%
The batch of data consists
of $m$ episodes, i.e.,  $\mathcal{D} := \{\tau_i\}_{i = 0}^{m-1}$. The policy that generated the batch of data is
called the \textit{behavior policy}, $\pi_b$.
If $\pi_b$ is the same as $\pi_e$ for all episodes then learning is said to be done \textit{on-policy}; otherwise it is \textit{off-policy}.

In batch value prediction,
a value function learning algorithm uses a fixed batch of
data to learn an estimate $\hat{v}^{\pi_e}$ that approximates the true value function, $v^{\pi_e}$.
In this work, we introduce algorithmic and theoretical concepts with the linear approximation
of $v^{\pi_e}$:
\begin{align*}\footnotesize
\hat{v}^{\pi_e}(s) := \bw^T \bx(s)
\end{align*}
thus, in the linear case, we seek to find a weight vector $\bw$, such that $\bw^T \bx(s)$ approximates the true value, $v^{\pi_e}(s)$. However, our empirical
study also considers the non-linear approximation of $v^{\pi_e}$.
The error of the
predicted value function, $\hat{v}^{\pi_e}$, with respect to the true value function, $v^{\pi_e}$, is
measured by calculating the mean squared value error 
between  $v^{\pi_e}(s) \text{ and } \hat{v}^{\pi_e}(s)$ 
$\forall s \in \sset$ weighted by the proportion of 
time spent in each state under policy $\pieval$, 
$d_{\pieval}(s)$.
Thus, we seek to find a weight vector $\bw$ that minimizes:
{\small
\begin{equation}\footnotesize
\label{eq:mse_objective}
\operatorname{MSVE}(\bw) \coloneqq \sum_{s \in \sset} d_{\pieval}(s) \biggl (v^{\pi_e}(s) -  \bw^T \bx(s)\biggr)^2
\end{equation}
}
In this work, we compare data efficiency between two algorithms, $X$ and $Y$,
 as follows:
\begin{definition}{Data Efficiency.}
A prediction algorithm $X$ is more data efficient than algorithm $Y$ if estimates from $X$ have, on average, lower MSVE than estimates from $Y$ for a given batch size.
\end{definition}

\subsection{Batch Linear TD(0)}

A fundamental algorithm for value prediction is the single-step temporal difference learning algorithm, TD(0).
Algorithm \ref{algo:batch_linear_td} gives pseudo-code for the batch linear TD(0) algorithm described by \citet{sutton1988learning}.

 \begin{algorithm}[H]
  \caption{Batch Linear TD(0) to estimate $v^{\pi_e}$}
  \label{algo:batch_linear_td}
  \begin{algorithmic}[1]
    \STATE Input: policy to evaluate $\pi_e$, behavior policy $\pi_b$, batch $\mathcal{D}$, linear value function, $\hat{v}: \sset \times \mathbb{R}^d \rightarrow \mathbb{R}$, step-size $\alpha > 0$, convergence
    threshold $\Delta > 0$
    \STATE Initialize: weight vector $\bw_0$ arbitrarily (e.g.: $\bw_0 := \vec{0}$), aggregation vector $\bm{u} := \vec{0}$, batch process
    counter, $i = 0$
    \WHILE{$|\bw_{i + 1} - \bw_{i}| \geq \vec{1}\cdot\Delta$} 
	    \FOR{each episode, $\tau \in \mathcal{D}$}
	        \FOR{each transition, $(s,
	        a, r, s') \in \tau$}
                \STATE $\hat{y} \leftarrow r + \gamma \bw_i^T \bx(s')$\\
                \STATE $\rho \leftarrow \frac{\pi_e(a|s)}{\pi_b(a|s)}$ \COMMENT{for on-policy, $\pi_b = \pi_e$}
                \STATE $\bm{u} \leftarrow \bm{u} + \left[\rho \hat{y} -  \bw_i^T\bx(s)\right]\bx(s)$
            \ENDFOR
        \ENDFOR
        \STATE $\bw_{i + 1} \leftarrow \bw_{i} + \alpha\bm{u}$ \COMMENT{batch update}
        \STATE $\bm{u} \leftarrow \vec{0}$ \COMMENT {clear aggregation}
        \STATE $i \leftarrow i + 1$
	\ENDWHILE
  \end{algorithmic}
\end{algorithm}
\citet{sutton1988learning} proved that batch linear TD(0) converges to a fixed point
in the on-policy case i.e. when $\pi_e = \pib$.
%
An off-policy batch TD(0) algorithm uses 
importance sampling ratios to ensure that the expected update is the same as it would be if actions were taken with $\pi_e$ instead of $\pi_b$ 
\citep{precup2000eligibility}.
%
%
Unlike on-policy TD(0), off-policy TD(0) is \textit{not} 
guaranteed to converge \citep{baird1995residual}.

\section{Convergence of Batch Linear TD(0)}
\label{sec:td_convergence}

In this section, we discuss the convergence of  batch linear TD(0) to a fixed-point, the \textit{certainty equivalence estimate} (CEE) for the underlying Markov reward process (MRP).
We refine this concept to better reflect our objective of evaluating a policy in an MDP and then prove that batch TD(0) converges to an equivalent fixed point that ignores knowledge of the known evaluation policy, $\pieval$, leading to inaccuracy in the value function estimate.
This result motivates our proposed algorithm.

First, we introduce additional notation and assumptions.
In this section, we assume that we are in the on-policy setting ($\pib = \pieval$).
Let $\widehat{\sset}$ be the set of states and $\widehat{\aset}$ be the set of actions that appear in $\data$ and let $\bar{R}(s)$ be the mean reward received when transitioning from state $s$ in the batch $\data$.
Finally, if the notation includes a hat ( $\hat{}$  ), it is the maximum-likelihood estimate (MLE) according to $\data$. For
example, $\hat{\pi}$ is the MLE of $\pib$.
 \citet{sutton1988learning} proved that batch linear TD(0)
converges to the \textit{CEE}.
That is, it converges to the exact value function of the maximum likelihood MRP according to the observed batch.
This exact value function can be calculated using dynamic programming \citep{10.5555/862270, 10.5555/26970} with the MLE MRP transition function.
We call this value function estimate the \textit{Markov reward process certainty equivalence estimate} (MRP-CEE).
%
%
%
\begin{definition}{Markov Reward Process Certainty Equivalence Estimate
(MRP-CEE) Value Function.}
\label{def:mrp_cee}
The MRP-CEE is the value function $\hat{v}_\mathtt{MRP}$ that, $\forall s, s'\in \widehat{\sset}$, satisfies:
{\small
\begin{equation}\label{eq:mrp-dp}\footnotesize
    \hat{v}_\mathtt{MRP}(s) = \bar{R}(s) + \gamma \sum_{k \in \widehat{\sset}} \widehat{P}(s'|s) \hat{v}_\mathtt{MRP}(s').
\end{equation}
}
\end{definition}

Having now defined the MRP-CEE value function, we prove that batch TD(0) converges to the MRP-CEE value function.
This fact was first proven by  \citet{sutton1988learning} (see Theorem 3 of \citet{sutton1988learning}), however the original proof only considers rewards upon termination and no discounting.
The extension to rewards per-step and discounting is straightforward, but to the best of our knowledge has not appeared in the literature before.
Following Sutton's proof \cite{sutton1988learning}, we first prove the extension before
extending the proof to an MDP, where the data inefficiency of TD(0) becomes clear. Proof details are in Appendix 
\ref{sec:mrp_td_convergence_proof}.

\begin{restatable}[Batch Linear TD(0)
Convergence]{theorem}{thmrptdconvergence}
\label{th:mrp_td_convergence}
For any batch whose observation vectors $\{ \bx(s) | s \in \widehat{\mathcal{S}}\}$ are linearly independent, there exists an 
$\epsilon > 0$  such that, for all positive $\alpha < \epsilon$ and for any 
initial
weight vector, the predictions for linear TD(0) converge under
repeated presentations of the batch with weight updates after
each complete presentation to the fixed-point (\ref{eq:mrp-dp}).
\end{restatable}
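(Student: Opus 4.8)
The plan is to follow the argument Sutton used for Theorem~3 of \citet{sutton1988learning} (undiscounted, reward only on termination) and add the extra bookkeeping needed for per-step rewards and discounting. First I would write one complete presentation of the batch --- lines~6--8 of Algorithm~\ref{algo:batch_linear_td} --- in matrix form. Using that $\rho=1$ in the on-policy case and grouping the per-transition contributions by the state they leave (so the contribution of state $s$ carries the factor $n_s$, its visitation count, and the sampled next states and rewards collapse to $\widehat P(\cdot\mid s)$ and $\bar R(s)$), the accumulated vector is
\[
\vec u(\bw)=\vec X\vec D\bigl(\bar{\vec R}-(\mathbf I-\gamma\widehat{\vec P})\vec X^{\!\top}\bw\bigr),
\]
where $\vec X$ is the matrix with columns $\{\bx(s):s\in\widehat{\mathcal S}\}$, $\vec D=\operatorname{diag}(n_s)$, $\widehat{\vec P}$ is the MLE transition matrix on $\widehat{\mathcal S}$, and $\bar{\vec R}$ has entries $\bar R(s)$. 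Hence the batch update of line~10 is the affine recursion $\bw_{i+1}=(\mathbf I-\alpha\bA)\bw_i+\alpha\bb$ with $\bA\coloneqq\vec X\vec D(\mathbf I-\gamma\widehat{\vec P})\vec X^{\!\top}$ and $\bb\coloneqq\vec X\vec D\,\bar{\vec R}$.

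The crux is to show that every eigenvalue of $\bA$ has strictly positive real part, which I would obtain by showing the symmetric part $\tfrac12(\bA+\bA^{\!\top})$ is positive definite. For $z\in\mathbb R^d$ set $y\coloneqq\vec X^{\!\top}z$; then $z^{\!\top}\tfrac12(\bA+\bA^{\!\top})z=y^{\!\top}\vec D\,y-\gamma\,y^{\!\top}\vec D\widehat{\vec P}\,y$. An elementary AM--GM estimate --- bounding $2\lvert y_s y_{s'}\rvert\le y_s^2+y_{s'}^2$ and then using that each row of $\widehat{\vec P}$ sums to at most $1$ and that the number of batch transitions \emph{into} a state $s'$ is at most $n_{s'}$ --- gives $\lvert y^{\!\top}\vec D\widehat{\vec P}\,y\rvert\le\lVert y\rVert_{\vec D}^2$, hence $z^{\!\top}\tfrac12(\bA+\bA^{\!\top})z\ge(1-\gamma)\lVert\vec X^{\!\top}z\rVert_{\vec D}^2$. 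Since the observation vectors are linearly independent, $\vec X$ has full column rank, so the right-hand side is strictly positive for $z\neq\vec 0$; thus $\bA$ is nonsingular with spectrum in the open right half-plane. (If one allowed $\gamma=1$, the episodic structure --- at least one episode must start somewhere --- would make the transition-count bound strict and recover the same conclusion.)

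Finally I would invoke the standard fact that, given this spectral condition, there is an $\epsilon>0$ --- for instance $\epsilon=\min_{\lambda\in\operatorname{spec}(\bA)}2\operatorname{Re}(\lambda)/\lvert\lambda\rvert^2$ --- such that the spectral radius of $\mathbf I-\alpha\bA$ is below $1$ for every $0<\alpha<\epsilon$, so $\bw_i\to\bw^\star\coloneqq\bA^{-1}\bb$ from every initial $\bw_0$, and in particular the predictions $\bw_i^{\!\top}\bx(s)$ converge. To identify the limit with~\eqref{eq:mrp-dp}, rewrite $\bA\bw^\star=\bb$ as $\vec X\vec D(\mathbf I-\gamma\widehat{\vec P})\vec X^{\!\top}\bw^\star=\vec X\vec D\,\bar{\vec R}$; cancel the full-column-rank $\vec X$ and the nonsingular diagonal $\vec D$, then invert $\mathbf I-\gamma\widehat{\vec P}$ (nonsingular since $\gamma\widehat{\vec P}$ has row sums $<1$, hence spectral radius $<1$) to get $\vec X^{\!\top}\bw^\star=(\mathbf I-\gamma\widehat{\vec P})^{-1}\bar{\vec R}$, which is exactly the vector of values $\hat v_\mathtt{MRP}$ solving~\eqref{eq:mrp-dp}. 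Therefore $(\bw^\star)^{\!\top}\bx(s)=\hat v_\mathtt{MRP}(s)$ for all $s\in\widehat{\mathcal S}$. I expect the positive-definiteness step to be the only real obstacle; the matrix rewriting, the contraction property of $\mathbf I-\alpha\bA$, and the cancellation at the end are all routine.
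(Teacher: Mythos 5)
Your overall architecture matches the paper's: write one presentation of the batch in matrix form by collapsing transition counts to $\hat{d}_s\widehat{P}(s'\mid s)$ and $\bar{R}(s)$, reduce to an affine recursion, establish that the relevant matrix has spectrum in the open right half-plane via positive definiteness of its symmetric part, choose $\alpha$ below $2\operatorname{Re}(\lambda)/|\lambda|^2$, and identify the fixed point with \eqref{eq:mrp-dp} by cancelling the full-rank factors. Where you genuinely diverge is the key positive-definiteness lemma. The paper shows $\widehat{D}(I-\gamma\widehat{Q})+(\widehat{D}(I-\gamma\widehat{Q}))^T$ is strictly diagonally dominant and invokes the Gershgorin circle theorem, using the identity $\hat{d}^T=\hat{\mu}^T(I-\widehat{Q})^{-1}$ to show each row sum equals $\hat{d}_i(1-\gamma\sum_j\hat{p}_{ij})+\hat{\mu}_i\geq 0$. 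Your AM--GM bound on the quadratic form, using that outgoing transition counts equal $n_s$ and incoming counts are at most $n_{s'}$, is the same combinatorial fact packaged more directly, and for $\gamma<1$ it yields the cleaner quantitative bound $(1-\gamma)\|X^Tz\|_D^2$ without any appeal to Gershgorin; your parenthetical about $\gamma=1$ recovers exactly the paper's episode-start slack $\hat{\mu}_i$. This is a legitimate and arguably more elementary route to the same lemma.

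One step of yours would fail as written. You work in weight space with the $d\times d$ matrix $\bA=XD(I-\gamma\widehat{P})X^T$ and conclude that $\bA$ is nonsingular and $\bw_i\to\bA^{-1}\bb$ for every $\bw_0$. The hypothesis is only that the columns of $X$ are linearly independent, so $d\geq|\widehat{\sset}|$ is allowed; when $d>|\widehat{\sset}|$ there exists $z\neq 0$ with $X^Tz=0$, your lower bound degenerates to $0$, and $\bA$ is singular (indeed the component of $\bw_i$ orthogonal to the column space of $X$ is never updated, so the weights themselves do not converge to a $\bw_0$-independent limit). The paper sidesteps this by premultiplying the recursion by $\widehat{X}^T$ and analysing the $|\widehat{\sset}|\times|\widehat{\sset}|$ matrix $\widehat{X}^T\widehat{X}\widehat{D}(I-\gamma\widehat{Q})$, which is a product of nonsingular factors; only the predictions $\widehat{X}^T\bw_i$ are claimed to converge, which is all the theorem asserts. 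Your argument is repaired by the same move (or by noting that the nonzero spectra of $XD(I-\gamma\widehat{P})X^T$ and $X^TXD(I-\gamma\widehat{P})$ coincide and restricting attention to the column space of $X$); everything else, including the identification of the limit with \eqref{eq:mrp-dp}, then goes through.
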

In RL, the transitions of an MRP are a function of the behavior policy \emph{and} transition dynamics distributions.
That is $\forall s, s' \in \widehat{\sset}$:
{\small
\begin{align*}
    \begin{split}
        \widehat{P}(s'|s) &= \sum_{a \in \widehat{\aset}} \hat{\pi}(a|s)\widehat{P}(s'|s, a),\\
    \bar{R}(s) &= \sum_{a \in \widehat{\aset}} \hat{\pi}(a|s)\bar{R}(s, a) 
    \end{split}
\end{align*}
}
where $\bar{R}(s, a)$ is the mean reward observed in state $s$ on taking action $a$.
We define a new certainty-equivalence estimate that separates these two factors.
%
We call this new value function estimate the \textit{Markov decision process certainty equivalent estimate} (MDP-CEE).
\begin{definition}{Markov Decision Process Certainty Equivalence Estimate (MDP-CEE) Value Function.}
\label{def:mdp_cee}
The MDP-CEE is the value function, $\hat{v}_\mathtt{MDP}^{\hat{\pi}}$, that, $\forall s, s' \in \widehat{\sset}$, satisfies:
{\small
\begin{align}\label{eq:mdp-dp}\footnotesize
\begin{split}
    \hat{v}_\mathtt{MDP}^{\hat{\pi}}(s) = \sum_{a \in \widehat{\aset}} \hat{\pi}(a|s)\left( \bar{R}(s, a) +  \gamma \sum_{s' \in \widehat{\sset}}  \widehat{P}(s'|s, a) \hat{v}_\mathtt{MDP}^{\hat{\pi}}(s')\right)
\end{split}
\end{align}
}
\end{definition}
Given the definitions of $\widehat{P}$ and $\bar{R}$, the MRP-CEE (Definition \ref{def:mrp_cee}) and MDP-CEE (Definition \ref{def:mdp_cee}) are equivalent. Theorem \ref{th:mdp_td_convergence} gives the convergence of batch TD(0) to the MDP-CEE value function. Proof details are in Appendix \ref{sec:mpd_td_convergence_proof}.
\begin{restatable}[Batch Linear TD(0)
Convergence]{theorem}{thmdptdconvergence}
\label{th:mdp_td_convergence}
For any batch whose observation vectors $\{ \bx(s) | s 
\in  \widehat{\mathcal{S}}\}$ are linearly independent, there exists an 
$\epsilon > 0$  such that, for all positive $\alpha < \epsilon$ and for any 
initial
weight vector, the predictions for linear TD(0) converge under
repeated presentations of the batch with weight updates after
each complete presentation to the fixed-point (\ref{eq:mdp-dp}). 
\end{restatable}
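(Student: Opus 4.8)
The plan is to obtain Theorem~\ref{th:mdp_td_convergence} as an immediate corollary of Theorem~\ref{th:mrp_td_convergence}. First observe that in the on-policy setting every importance-sampling ratio $\rho$ in Algorithm~\ref{algo:batch_linear_td} equals $1$, so the iteration under study is exactly the batch linear TD(0) algorithm treated by Theorem~\ref{th:mrp_td_convergence}; that theorem already guarantees that, for step-sizes below some $\epsilon > 0$ and any initial weight vector, the predictions converge to the fixed point of~(\ref{eq:mrp-dp}). It therefore suffices to show that, on $\widehat{\sset}$, the MDP-CEE equation~(\ref{eq:mdp-dp}) and the MRP-CEE equation~(\ref{eq:mrp-dp}) determine the same value function, i.e. $\hat{v}_\mathtt{MDP}^{\hat{\pi}} = \hat{v}_\mathtt{MRP}$.

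The core step is a single substitution. Starting from~(\ref{eq:mdp-dp}) and inserting the definitions $\widehat{P}(s'|s) = \sum_{a \in \widehat{\aset}} \hat{\pi}(a|s)\widehat{P}(s'|s,a)$ and $\bar{R}(s) = \sum_{a \in \widehat{\aset}} \hat{\pi}(a|s)\bar{R}(s,a)$, one pulls the $\hat{\pi}(a|s)$-weighted sum through the affine right-hand side: $\sum_a \hat{\pi}(a|s)\bar{R}(s,a) = \bar{R}(s)$ and $\sum_a \hat{\pi}(a|s)\,\gamma\sum_{s'}\widehat{P}(s'|s,a)\hat{v}(s') = \gamma\sum_{s'}\big(\sum_a \hat{\pi}(a|s)\widehat{P}(s'|s,a)\big)\hat{v}(s') = \gamma\sum_{s'}\widehat{P}(s'|s)\hat{v}(s')$. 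Hence any $\hat{v}$ satisfying~(\ref{eq:mdp-dp}) for all $s \in \widehat{\sset}$ satisfies~(\ref{eq:mrp-dp}), and conversely: the two fixed-point systems are literally the same linear system. The identities used here are definitional — they are just the way empirical next-state frequencies and empirical mean rewards decompose over the actions actually observed in each state — so nothing beyond the stated definitions of $\widehat{P}$ and $\bar{R}$ is required.

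To finish, the Bellman-type system~(\ref{eq:mrp-dp}) has a unique solution as a function on $\widehat{\sset}$ ($I - \gamma\widehat{P}$ restricted to $\widehat{\sset}$ is invertible in the discounted case, and in the undiscounted episodic case the empirical MRP is proper, which yields the same conclusion). Since~(\ref{eq:mdp-dp}) \emph{is} that system, its unique solution is $\hat{v}_\mathtt{MDP}^{\hat{\pi}}(s) = \hat{v}_\mathtt{MRP}(s)$ for every $s \in \widehat{\sset}$. Linear independence of $\{\bx(s) : s \in \widehat{\sset}\}$ makes this value function uniquely representable by the linear architecture, so by Theorem~\ref{th:mrp_td_convergence} the TD(0) predictions converge to~(\ref{eq:mrp-dp}), hence to~(\ref{eq:mdp-dp}), which is the claim.

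I do not expect a real obstacle: the substantive analytic content — extending Sutton's argument to per-step rewards and discounting and establishing contraction/convergence of the batch iteration — lives entirely inside Theorem~\ref{th:mrp_td_convergence}. The only points deserving care are (i) checking that the reduction respects the restriction to the observed state set $\widehat{\sset}$, which holds because the aggregation vector $\bm{u}$ in Algorithm~\ref{algo:batch_linear_td} only ever receives contributions indexed by states appearing in $\data$; and (ii) spelling out uniqueness of the fixed point in the $\gamma = 1$ episodic case, where one must invoke properness of the empirical MRP rather than a plain Neumann-series argument.
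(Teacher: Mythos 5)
Your proof is correct and takes essentially the same route as the paper: the paper also reduces Theorem~\ref{th:mdp_td_convergence} to Theorem~\ref{th:mrp_td_convergence}, differing only in that it re-expands the batch update with explicit action sums and observes that the resulting matrix recursion coincides with the MRP one, whereas you argue directly that the two fixed-point systems (\ref{eq:mrp-dp}) and (\ref{eq:mdp-dp}) are the same linear system. Both reductions rest on the same definitional identities $\widehat{P}(s'|s)=\sum_{a}\hat{\pi}(a|s)\widehat{P}(s'|s,a)$ and $\bar{R}(s)=\sum_{a}\hat{\pi}(a|s)\bar{R}(s,a)$, which the paper states explicitly before Definition~\ref{def:mdp_cee}.
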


The MDP-CEE value function highlights two sources of estimation error in the value function estimate: $P \neq \widehat{P}$ and/or $\pieval \neq \hat{\pi}$.
We describe the former as \textit{transition sampling error} and the latter as \textit{policy sampling error}.
Transition sampling error may be unavoidable in a model-free setting since we do not know $P$.
However, we do know $\pieval$ and can use this knowledge to potentially correct policy sampling error.
In the next section, we present an algorithm that uses the knowledge of $\pieval$ to correct for policy sampling error and obtain a more accurate value function estimate.

%
%
%
%

\section{Batch Linear PSEC-TD(0)}
\label{sec:ristd}

In this section, we introduce the batch policy sampling error corrected-TD(0) (PSEC-TD(0)) algorithm that corrects for the policy sampling error in batch TD learning.
%
%
From Theorem \ref{th:mdp_td_convergence}, batch TD(0) converges to the
value function for the maximum likelihood policy, $\hat{\pi},$ instead of $\pieval$.
Under this view, PSEC-TD(0) treats policy sampling error as an off-policy learning problem and uses importance sampling \citep{precup2000eligibility} to correct the weighting of TD(0) updates from $\hat{\pi}$ to $\pieval$. 
Even though importance sampling is usually associated with off-policy learning, this approach is applicable in the on- \emph{and} off-policy cases.
%

In addition to $\data$ and $\pieval$, we assume we are given a set of policies, $\Pi$.
Batch PSEC-TD(0) first computes the maximum likelihood estimate of the behavior policy:
{\small
\begin{equation*}\footnotesize
   \hat{\pi} \coloneqq \argmax_{\pi' \in \Pi} \sum_{\tau \in \data} \sum_{t=0}^{L_{\tau - 1}} \log \pi'(a_t^\tau | s_t^\tau) 
\end{equation*}
}
This estimation can be done in a number of ways.
For example, in the tabular setting
we could use the empirical count of actions in each state.
This count-based
approach is often intractable, and hence, in many problems of interest we must rely on
function approximation.
When using function approximation, the policy estimate can be obtained by minimizing a negative log-likelihood loss function. Once $\hat{\pi}$ is computed, the batch PSEC-TD algorithm is the same as Algorithm \ref{algo:batch_linear_td} with $\hat{\pi}$ replacing $\pib$ in the importance sampling ratio.
That is, for transition $(s, a, r, s')$ in $\data$, the contribution to the weight update is 
$\bm{u} \leftarrow \bm{u} + \left[\hat{\rho} \hat{y} -  \bw_i^T\bx(s)\right]\bx(s)$, 
%
%
where $\hat{\rho} \coloneqq \frac{\pieval(a | s)}{\hat{\pi}(a| s)}$ is the  PSEC weight (refer to Line 8 in 
Algorithm \ref{algo:batch_linear_td}).
Thus, PSEC makes an importance sampling correction from the \emph{empirical} to the \emph{evaluation} policy distribution.


\subsection{Convergence of Batch Linear PSEC-TD(0)}
\label{sec:psec_convergence}

Section \ref{sec:td_convergence} showed that batch TD(0) converges to two equivalent certainty-equivalence estimates.
We now define a new certainty-equivalent estimate (CEE) to which our new batch PSEC-TD(0) algorithm converges.
Intuitively, the MDP-CEE estimate (Definition \ref{def:mdp_cee}) is the exact value function for the \emph{MLE of the behavior policy, $\hat{\pi}$}, in the MLE of the MDP environment; our new algorithm converges to the exact value function for $\pieval$ in the MLE of the MDP environment, making it more data efficient than batch TD(0)  once the batch size is large enough.

We define this new CEE as the \textit{PSEC
Markov Decision Process Certainty Equivalence Estimate} (PSEC-MDP-CEE) Value
Function.

\begin{definition}{PSEC Markov Decision Process Certainty Equivalence Estimate (PSEC-MDP-CEE) Value Function.}
\label{def:psec_mdp_cee}
The PSEC-MDP-CEE is the value function, $\hat{v}_\mathtt{PSEC-MDP}^{\pi_e}$, that, $\forall s,s' \in \widehat{\sset}$, satisfies:
%
{\small
\begin{align}\label{eq:ris-mdp-dp}\footnotesize
\begin{split}
	&\hat{v}_\mathtt{PSEC-MDP}^{\pi_e}(s) = \sum_{a \in \widehat{\aset}} \pi_e(a | s) [ \bar{R}(s, a) \\&+
	\gamma \sum_{k \in \widehat{\sset}} \widehat{P}(s'|s, a) \hat{v}_{\mathtt{PSEC-MDP}}^{\pi_e}(s'))]
\end{split}
\end{align}
}
\end{definition}
Theorem \ref{th:ris_td_convergence} states that batch PSEC-TD(0) converges to the new 
PSEC-MDP-CEE value function (Equation \ref{eq:ris-mdp-dp}). Proof details are in Appendix \ref{sec:ris_td_convergence_proof}.
\begin{restatable}[Batch Linear PSEC-TD(0)
Convergence]{theorem}{thristdconvergence}
\label{th:ris_td_convergence}
For any batch whose observation vectors $\{ \bx(s) | s 
\in  \widehat{\mathcal{S}}\}$ are linearly independent, there exists an 
$\epsilon > 0$  such that, for all positive $\alpha < \epsilon$ and for any 
initial
weight vector, the predictions for linear PSEC-TD(0) converge under
repeated presentations of the batch with weight updates after
each complete presentation to the fixed-point (\ref{eq:ris-mdp-dp}).
\end{restatable}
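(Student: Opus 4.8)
The plan is to mirror the structure of the proof of Theorem~\ref{th:mrp_td_convergence} (and its MDP restatement Theorem~\ref{th:mdp_td_convergence}), replacing the empirical policy $\hat\pi$ by the evaluation policy $\pieval$ wherever it enters the expected update. Concretely, I would first write the aggregated per-pass increment $\bm u$ as a deterministic linear map of the current weight vector. Fixing an arbitrary ordering $s_1,\dots,s_n$ of $\widehat{\sset}$, let $X$ be the $n\times d$ matrix whose rows are $\bx(s_j)^T$, and for each state $s$ let $N(s)$ be its visitation count in $\data$ and $N(s,a)$ the count of $(s,a)$. Summing Line~8 of Algorithm~\ref{algo:batch_linear_td} with $\hat\rho = \pieval(a|s)/\hat\pi(a|s)$ over the whole batch, and using the count-based MLE $\hat\pi(a|s) = N(s,a)/N(s)$, the coefficient multiplying each transition collapses: $\sum_{(s,a,\cdot,s')}\hat\rho\,\hat y = \sum_{s}N(s)\sum_{a}\pieval(a|s)\big(\bar R(s,a) + \gamma \sum_{s'}\widehat P(s'|s,a)\bw^T\bx(s')\big)$. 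I would package this as $\bm u = X^T D\,(\,\bar R_{\pieval} + \gamma \widehat P_{\pieval} X \bw - X\bw\,)$, where $D = \mathrm{diag}(N(s_1),\dots,N(s_n))$, $\widehat P_{\pieval}$ is the $n\times n$ matrix with entries $\sum_a \pieval(a|s)\widehat P(s'|s,a)$, and $\bar R_{\pieval}$ the vector with entries $\sum_a \pieval(a|s)\bar R(s,a)$ --- i.e. exactly the transition/reward model of Definition~\ref{def:psec_mdp_cee} but with the empirical dynamics $\widehat P$ and empirical mean rewards $\bar R$.

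Next I would show the iteration $\bw_{i+1} = \bw_i + \alpha\bm u(\bw_i)$ is an affine map $\bw_{i+1} = (I - \alpha X^T D (I - \gamma \widehat P_{\pieval}) X)\bw_i + \alpha X^T D\,\bar R_{\pieval}$, and that it converges for small $\alpha$ iff the matrix $M \coloneqq X^T D (I - \gamma \widehat P_{\pieval}) X$ has all eigenvalues with positive real part. Linear independence of $\{\bx(s)\}_{s\in\widehat\sset}$ makes $X$ full column rank, so $X^T D X$ is symmetric positive definite; the standard argument (as in Sutton's Theorem~3 and its use in the MRP case) is that $\widehat P_{\pieval}$ is a stochastic matrix on $\widehat\sset$, hence $I - \gamma\widehat P_{\pieval}$ is positive definite in the sense that $D(I-\gamma\widehat P_{\pieval})$ has positive-definite symmetric part when $D$ is a positive diagonal (this is where the ``proportion of visits'' weighting $D$ matters), and therefore $M = X^T D(I-\gamma\widehat P_{\pieval})X$ has eigenvalues with strictly positive real part. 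Choosing $\epsilon$ so that $\alpha < \epsilon$ forces the spectral radius of $I - \alpha M$ below $1$ gives geometric convergence of the predictions $X\bw_i$ to the unique fixed point $\bw^\star$ solving $X^T D(I-\gamma\widehat P_{\pieval})X\bw^\star = X^T D\bar R_{\pieval}$.

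Finally I would identify that fixed point with Equation~(\ref{eq:ris-mdp-dp}). Since $X$ is full column rank, the fixed-point equation is equivalent to requiring $X\bw^\star$ to lie in the range of $X$ and satisfy $X\bw^\star = \bar R_{\pieval} + \gamma\widehat P_{\pieval}X\bw^\star$ --- and in the tabular/linearly-independent case one can take $X$ square and invertible, so $X\bw^\star$ is exactly the vector of $\hat v_{\mathtt{PSEC\text{-}MDP}}^{\pieval}$ values, i.e. the solution of Definition~\ref{def:psec_mdp_cee}. The main obstacle is the positive-definiteness step: unlike on-policy TD(0), here the weighting diagonal $D$ comes from the \emph{empirical} visitation counts while the transition matrix $\widehat P_{\pieval}$ mixes the empirical dynamics with the \emph{evaluation} policy, so $D$ need not be the stationary distribution of $\widehat P_{\pieval}$ and the clean on-policy argument (where $D^\pi$ is stationary for $P^\pi$) does not directly apply. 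I would handle this by noting that, restricted to $\widehat\sset$ with all counts strictly positive and with $\gamma < 1$ (or with proper termination), $\gamma\widehat P_{\pieval}$ is a substochastic matrix with spectral radius $< 1$, so $(I - \gamma\widehat P_{\pieval})^{-1}$ exists with nonnegative entries, and then invoke the general fact that for any positive diagonal $D$ and any matrix $A$ with $\rho(I - A) < 1$-type contraction structure the product $X^T D A X$ is positive-stable when $X$ has full column rank --- the same lemma Sutton uses, applied with $A = I - \gamma\widehat P_{\pieval}$. If a sharper statement is needed I would fall back on showing directly that $\bar R_{\pieval} + \gamma\widehat P_{\pieval}(\cdot)$ is a $\gamma$-contraction in a weighted max-norm, giving a unique fixed point, and that TD(0)'s expected update is a positive multiple of the residual, which suffices for convergence under small step size.
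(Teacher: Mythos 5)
Your proposal is, in outline, the same as the paper's: expand the aggregated batch update over $(s,a,s')$ counts, use the count-based MLE so that $\hat{\pi}(a|s)\,\hat{\rho}(s,a)=\pieval(a|s)$ collapses the bootstrapped term onto the evaluation policy while the subtracted $\bw^T\bx(s)$ term keeps its empirical weight, arrive at an affine iteration governed by $\widehat{X}^T\widehat{X}\widehat{D}(I-\gamma\widehat{U})$ with $[\widehat{U}]_{ij}=\sum_a\pieval(a|i)\widehat{P}(j|i,a)$, and identify the fixed point with (\ref{eq:ris-mdp-dp}) using linear independence of the features. That is exactly the structure of the paper's Appendix proof of Theorem \ref{th:ris_td_convergence}.

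Where you diverge is in taking seriously the positive-stability step, and your worry is well placed: the paper at this point simply asserts that the remaining argument follows the steps shown for Equation (\ref{eq:conv_val}), but the key ingredient there --- diagonal dominance of the symmetrized $\widehat{D}(I-\gamma\widehat{Q})$, which rests on the count identity $\hat{d}^{\,T}=\hat{\mu}^T(I-\widehat{Q})^{-1}$ --- ties the diagonal weighting to the \emph{empirical} transition matrix $\widehat{Q}$ built from $\hat{\pi}$. After the PSEC correction the relevant matrix is $\widehat{U}$, built from $\pieval$, and the visitation counts are no longer consistent with it, so the row-sum bound can fail (consider a frequently visited state from which $\pieval$ concentrates on a rarely taken action). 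However, your proposed repairs do not close the gap. The ``general fact'' you invoke is not one: for an arbitrary positive diagonal $\widehat{D}$, the symmetric part of $\widehat{D}(I-\gamma\widehat{U})$ need not be positive definite even though $I-\gamma\widehat{U}$ is a nonsingular M-matrix, and the eigenvalue argument via the quadratic form $y^{*}\widehat{D}(I-\gamma\widehat{U})y=\mathrm{Re}(\lambda)\,z^{*}\widehat{X}^T\widehat{X}z$ needs positive definiteness, not mere positive stability of $I - \gamma\widehat U$. Your max-norm contraction fallback shows the PSEC Bellman operator has a unique fixed point, but the semi-gradient iteration applies $\alpha\widehat{X}^T\widehat{X}\widehat{D}$ to the residual rather than contracting it state-wise, so uniqueness does not give convergence of the iterates. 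In short, you have reconstructed the paper's argument, correctly located its one unproved step, and offered patches that do not repair it --- which leaves you essentially where the paper's own proof stands.
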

We remark that convergence has only been shown for the on-policy setting.
While PSEC-TD(0) can be applied in the off-policy setting, it may, like other semi-gradient TD methods, diverge when off-policy updates are made with function approximation \citep{baird1995residual}.
It is possible that combining PSEC-TD(0) with Emphatic TD \citep{mahmood2015emphatic} or Gradient-TD \citep{sutton2009fast} may result in provably convergent behavior with off-policy updates, however, that study is outside the scope of this work.
%
\subsection{Extending PSEC to other TD Variants}

In general, PSEC can improve any value function learning algorithm that computes the TD-error, $\delta$, or equivalent errors.
As an example, we consider the off-policy least-squares TD (LSTD) algorithm \citep{lstd, ghiassian2018online}, which analytically computes the exact parameters that minimize the TD-error in a batch of data using the following steps:
{\small
\begin{align*}
    A &= \sum_{(s, a, s') \in \data} \left[\hat{\rho} \bx(s) (\bx(s) - \gamma \bx(s'))^T\right]\\
    \mathbf{b} &= \sum_{(s, a, s') \in \data} R(s, a, s') \bx(s) \\
    \bw &= A^{-1}\mathbf{b},
\end{align*}
}
where $\hat{\rho}$ is the PSEC weight.
Even though we primarily consider TD(0) in this work, the extension to LSTD demonstrates that PSEC-TD can be extended to other value function learning algorithms.

\section{Empirical Study}
\label{sec:exp}

In this section, we empirically study PSEC-TD to answer the following questions:
\begin{enumerate}
    \itemsep0em
    \item Does batch PSEC-TD(0) lower MSVE compared to batch TD(0)?
    \item Does batch linear PSEC-TD(0) empirically converge to its certainty-equivalence solution?
    \item Does PSEC yield benefit when applied to LSTD? 
    \item What factors does PSEC's data efficiency depend on in the function approximation setting?
\end{enumerate}

We briefly describe the RL domains used in our experiments.
\begin{itemize}
\itemsep0em
    \item \textbf{Gridworld:} In this domain, an agent
    navigates a $4 \times 4$ grid to
    reach a corner. The \emph{state
    and action spaces are discrete} and we use a tabular representation for $\hat{v}^{\pi_e}$. PSEC-TD(0) uses
    count-based estimation for $\hat{\pi}$.
    The ground truth value function is computed with dynamic programming and  the MSVE computation uniformly weights the error in each state. In Section \ref{sec:exp_tabular}, we consider a \emph{deterministic}
    gridworld, where there is no transition dynamics 
    sampling error.
    \item \textbf{CartPole:} In this domain, an agent
    controls a cart to balance a pole upright. The \emph{state space is continuous
    and action space is discrete}.  We
    only consider the on-policy setting. The evaluation policy is a neural network trained using REINFORCE \citep{REINFORCE}.
    It has $2$ hidden layers with $16$ neurons. We evaluate PSEC with varying linear and neural network representations 
    for the value function. $\hat{\pi}$ maps the raw state features to a softmax distribution over the actions with 
    varying linear and neural network architectures. Since the true value function is unknown, we follow  \citet{DBLP:journals/corr/PanWW16} and use Monte Carlo rollouts from a fixed number of states sampled from episodes following the evaluation policy to approximate the ground-truth state-values of those states. 
    We then compute the MSVE between the learned values and the average Monte Carlo return from these sampled states.
    \item \textbf{InvertedPendulum:} This domain is similar to CartPole, and the objective is the same -- to balance a pole upright. However, the \emph{state and action spaces are both continuous}. We
    only consider the on-policy setting. The evaluation policy is a neural network trained by PPO \citep{ppo}. The network has $2$ hidden layers with $64$ neurons each. We evaluate PSEC with varying linear and neural network representations 
    for the value function. The $\hat{\pi}$ estimate consists of two components: 1) a linear or neural network mapping from raw state features to the mean vector of a Gaussian distribution, and 2) parameters representing the log standard deviation of each element of the output vector. As in CartPole, we compute Monte Carlo rollouts for sampled states.    
\end{itemize}

In all experiments, the value function learning algorithm iterates over the whole batch of data until convergence, after
which the MSVE of the final value function is computed. Some experiments include a parameter sweep
over the hyperparameters, which can be found in Appendix \ref{app:exp_details}.

\subsection{Tabular Setting}
\label{sec:exp_tabular}
In this set of experiments, we consider two variants of PSEC-TD that differ in the placement of the PSEC weight: 
\begin{itemize}
    \itemsep0em
    \item PSEC-TD-Estimate: Multiplies $\hat{\rho}$ by the new estimate: $\hat{y} = R + \gamma \bw^T \bx(s').$
    \item PSEC-TD: Multiplies $\hat{\rho}$ by the TD error: $\delta = ( R + \gamma \bw^T \bx(s')) - \bw^T \bx(s).$
\end{itemize}
For off-policy TD(0), these placements of $\hat{\rho}$ are equivalent in expectation although the method using the TD-error has been reported to perform better in practice \citep{ghiassian2018online}. In
this section, we focus on the on-policy results. Appendix \ref{app:gridworld_add_results} includes
off-policy results.

\subsubsection{Data Efficiency}
Figure \ref{fig:gridworld_ristd_compare} answers our first and third empirical questions, and shows that PSEC lowers MSVE compared to batch TD(0), and a variant of TD(0), LSTD(0).
The gap between PSEC and its TD counterpart increases dramatically with more data; we discuss this observation in 
Section \ref{sec:gridworld_conv_cee}.
\begin{figure}[H]
    \centering
        \subfigure[On-policy (PSEC-)TD(0)]{\label{fig:on_grid}\includegraphics[scale=0.185]{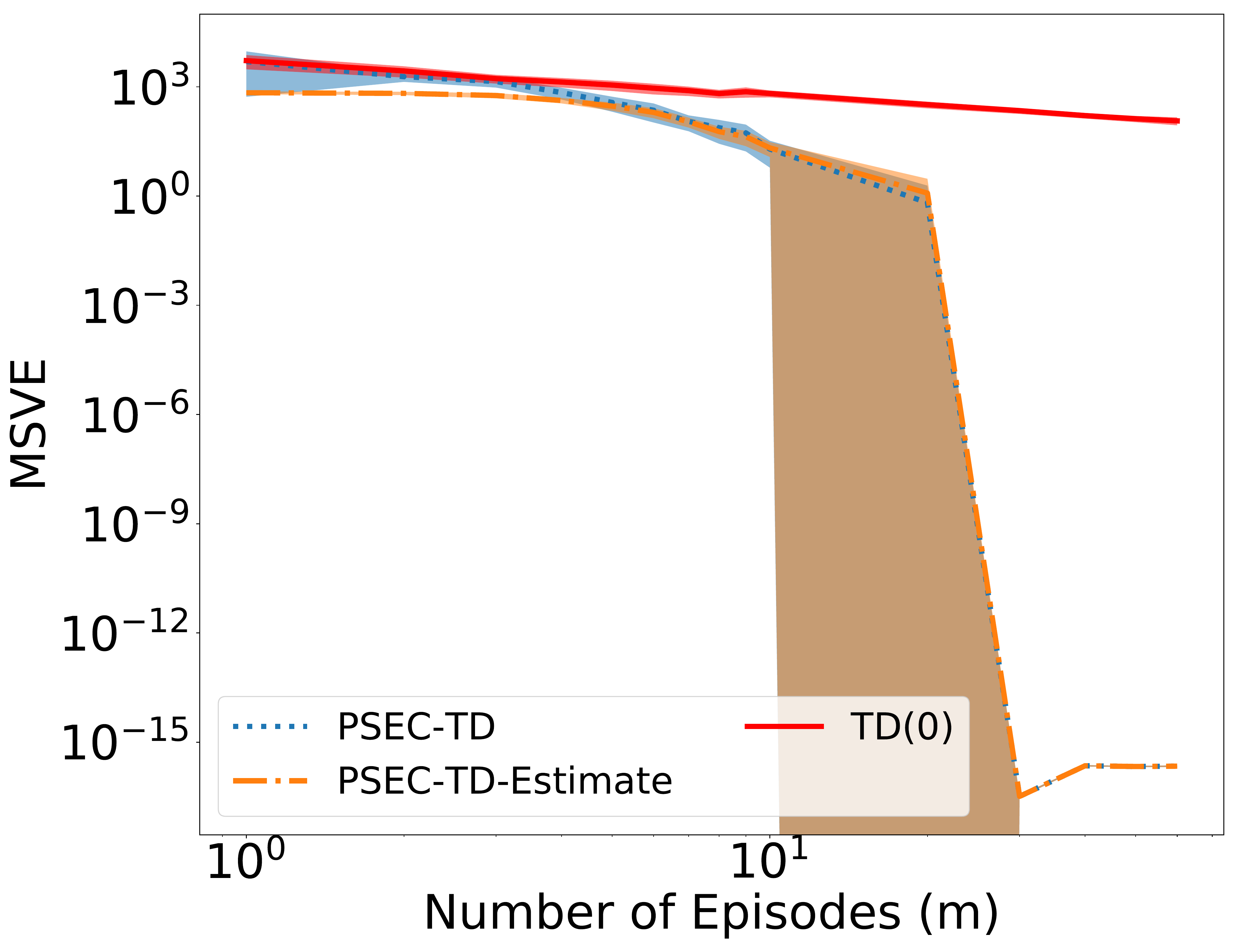}}
        \subfigure[On-Policy (PSEC-)LSTD(0)]{\label{fig:on_lstd}\includegraphics[scale=0.185]{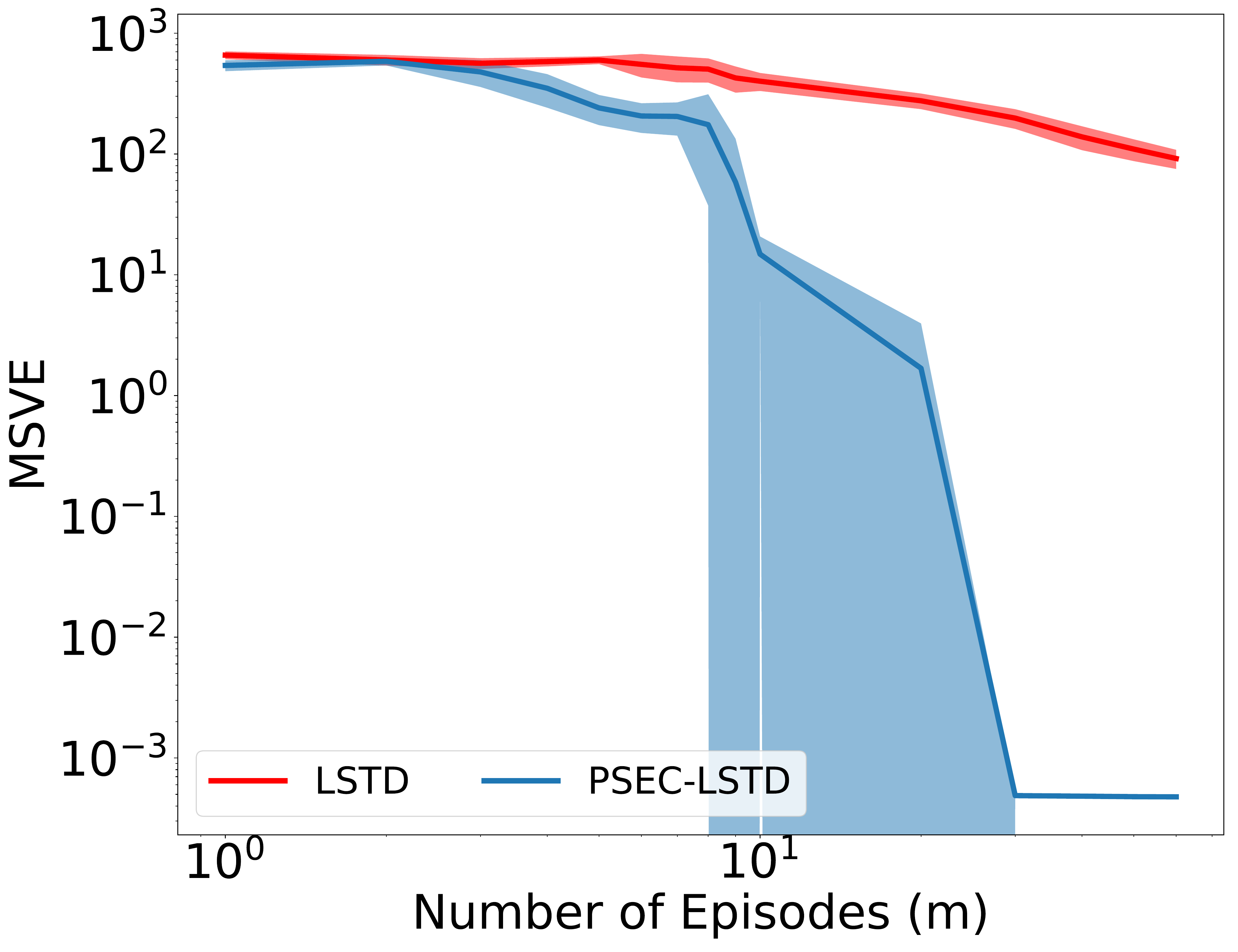}}
    \caption{\footnotesize Deterministic Gridworld experiments.
    Both axes are log-scaled. Errors are computed over $200$ trials with $95\%$ confidence intervals. Asymmetric
    confidence intervals are due to log-scaling.
    Figure \ref{fig:on_grid} and Figure \ref{fig:on_lstd} compare the data efficiency of PSEC-TD(0) and PSEC-LSTD(0) with
    their respective TD equivalents. Lower MSVE is better.}
    \label{fig:gridworld_ristd_compare}
\end{figure}

\subsubsection{Convergence to the PSEC-Certainty-Equivalence}
\label{sec:gridworld_conv_cee}

To address our second empirical question, we empirically verify that both variants of batch
linear PSEC, PSEC-TD
and PSEC-TD-Estimate, converge to the dynamic programming computed  PSEC-MDP-CEE value function
(\ref{eq:ris-mdp-dp}) in Gridworld. 
According
to Theorem \ref{th:ris_td_convergence},
batch linear PSEC-TD-Estimate converges to
the fixed-point (\ref{eq:ris-mdp-dp}) for all batch
sizes. 
\begin{figure}[H]
    \centering
        \subfigure[]{\label{fig:ristd_mdp_cee}\includegraphics[scale=0.185]{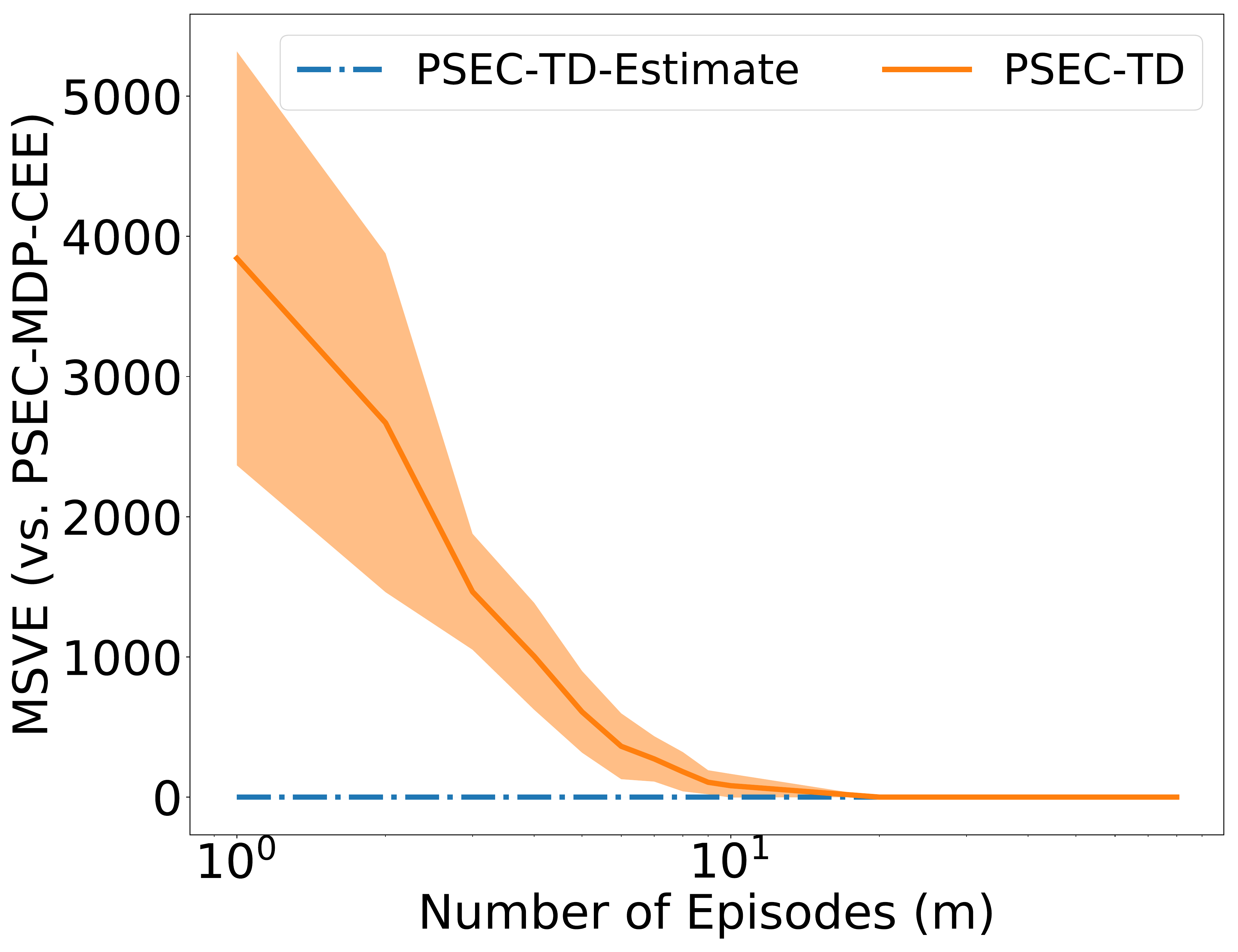}}
        \subfigure[]{\label{fig:ris_td_unvisited_ratio}\includegraphics[scale=0.185]{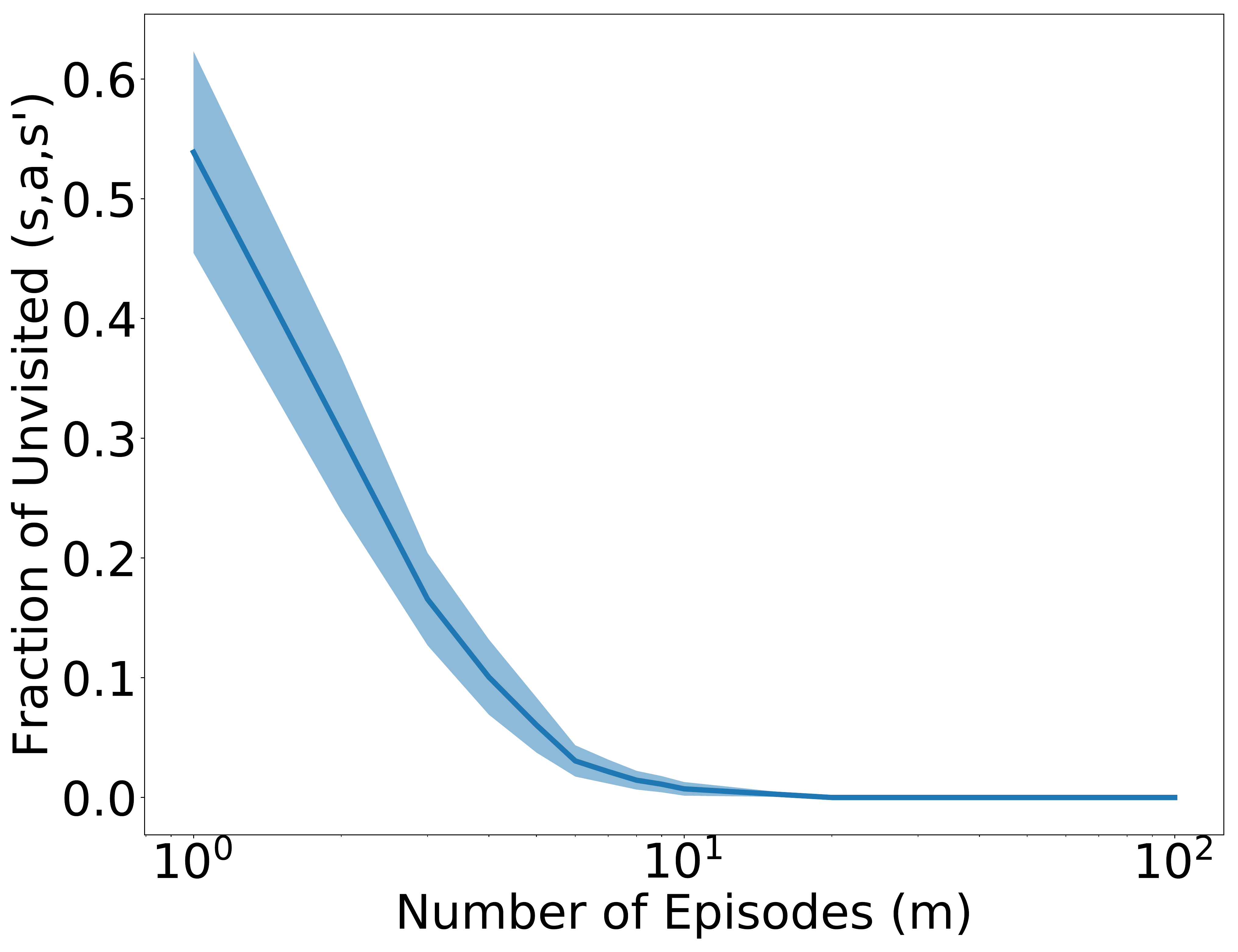}}
    \caption{\footnotesize Additional Gridworld experiments. Errors  are computed over $50$ trials with $95\%$ confidence intervals. Figure \ref{fig:ristd_mdp_cee} 
    shows  MSVE achieved by 
    variants of linear batch PSEC-TD(0), PSEC-TD and PSEC-TD-Estimate,
    with respect to the PSEC-MDP-CEE  (\ref{eq:ris-mdp-dp}).
    Figure \ref{fig:ris_td_unvisited_ratio} shows the fraction of unvisited $(s,a,s')$ tuples.}
    \label{fig:gridworld_ablation}
\end{figure}
We also empirically confirm
that the other variant of PSEC, PSEC-TD converges to the same fixed-point (\ref{eq:ris-mdp-dp}) when the following condition holds true:
only when all non-zero
probability actions for each state
in the batch have been
sampled at least once. We note that when this condition is false,
PSEC-TD-Estimate treats
the value of taking that action as $0$. For example, if a state, $s$,
appears in the batch and an action, $a$, that could take the 
agent to state $s'$ does not appear in the batch, then
PSEC-TD-Estimate treats the new estimate $R + \gamma \bw^T\bx(s')$ as $0$, which
is also done by the dynamic programming computation 
(\ref{eq:ris-mdp-dp}). We note that PSEC-TD converges to the
fixed-point (\ref{eq:ris-mdp-dp}) only when this
condition is true since the PSEC weight 
requires a fully supported probability distribution when
applied to the TD-error estimate.  From
Figure \ref{fig:ristd_mdp_cee}
and Figure \ref{fig:ris_td_unvisited_ratio}, we
can see that this condition
holds at batch size of $10$ episodes. We also note 
that PSEC-TD(0) corrects policy sampling error for
each $(s,a,s')$ transition. Thus, when all such transitions are
visited, PSEC fully corrects for all policy sampling error, which
occurs at batch size of $10$ episodes in this \emph{deterministic}
gridworld.

\subsection{Function Approximation Setting}
\label{sec:fa_cont_s_dis_a}

In this set of experiments, we answer our first and fourth empirical questions concerning function approximation in PSEC. Our experiments focus on applying only the second variant of PSEC, PSEC-TD, since we found that PSEC-TD-Estimate  diverges. The results shown below are for the on-policy case. In addition to results of PSEC as a function of data size, we conduct
 experiments on a fixed batch size to better understand how components of the PSEC training process impact
performance. Finally, we give a practical recommendation
for use of batch PSEC-TD(0).

In these
experiments, we have three function approximators: one for the value function; one to estimate the behavior policy;
and the pre-learned behavior policy itself. When any are referred to as ``fixed", it means its architecture is unchanged. 
Due to space constraints, we only show a subset of results from CartPole and Inverted Pendulum; however, a
fuller set of experiments
can be found in Appendix \ref{app:cartpole_exps} and \ref{app:inverted_pend_exps}. Note that  in all PSEC training settings, PSEC performs gradient steps using the full batch of data, uses a separate
batch of data as the validation data, and terminates training according to early stopping. Statistical significance is 
determined by Welch's test \citep{welch} with a significance level of $0.05$. For hyperparameter details
refer to Appendix \ref{app:cartpole_exps} and \ref{app:inverted_pend_exps}.

\subsubsection{Data Efficiency}
In CartPole, PSEC produced statistically significant improvement over TD 
in all batch sizes except $500$. In InvertedPendulum, like in Gridworld,
the improvement was marginal for smaller batch sizes, but produced statistically
significant improvement with larger batch sizes. As data gets larger, we observe
that both methods perform similarly for two reasons: 1) the PSEC weight approaches $1$, which effectively becomes TD(0) and 2) saturation in value function
representation capacity, which we discuss in Section
\ref{sec:func_approx_arch_model_sel}. Note that
while a thorough parameter sweep can
achieve better performance, it
is computationally expensive. The results shown here are with sweeps
over only the value function model class and PSEC learning rate.
\begin{figure}[H]
    \centering
    \subfigure[CartPole]{\label{fig:exp_cart_msve_batch}\includegraphics[scale=0.19]{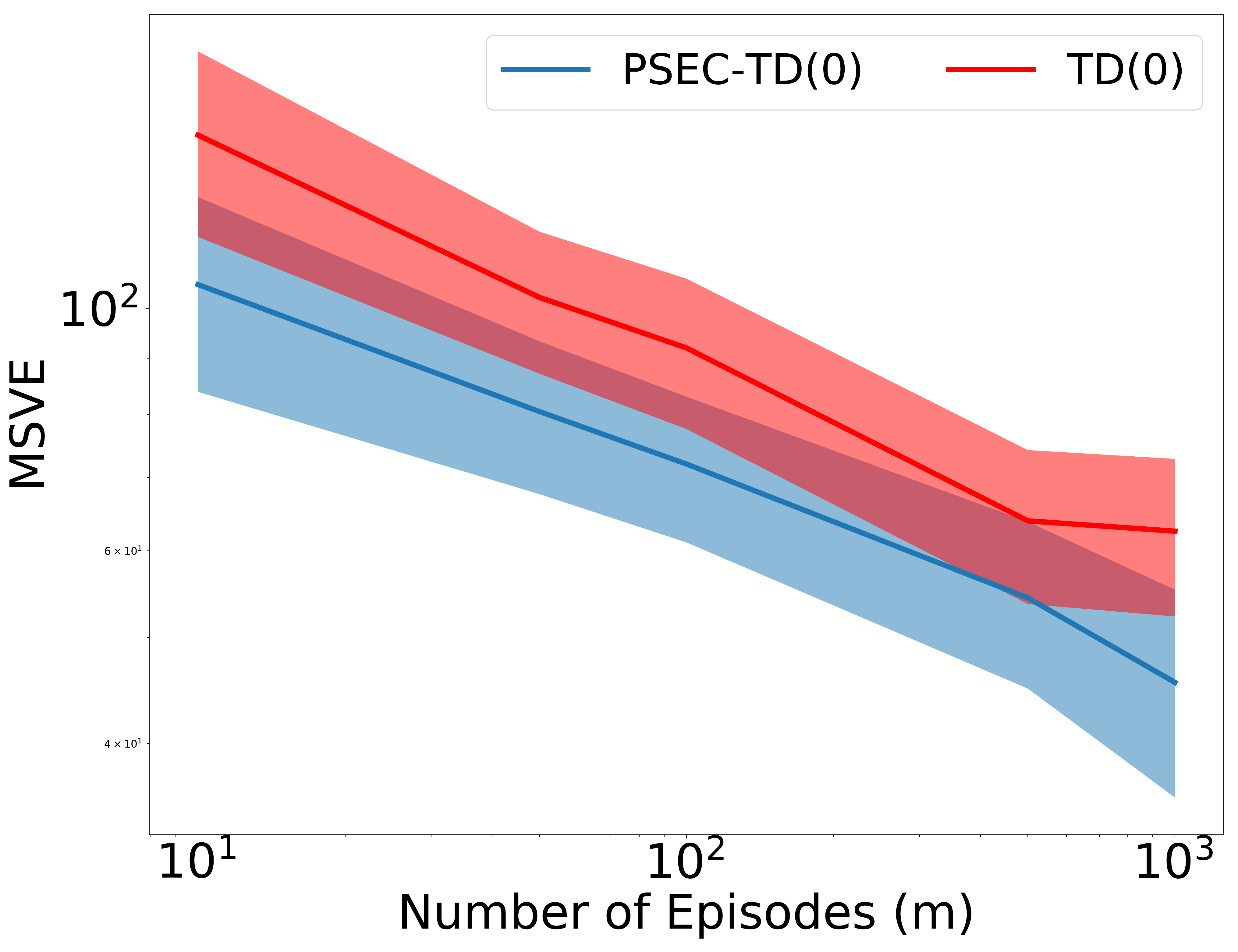}}
    \subfigure[InvertedPendulum]{\label{fig:exp_invpen_msve_batch}\includegraphics[scale=0.19]{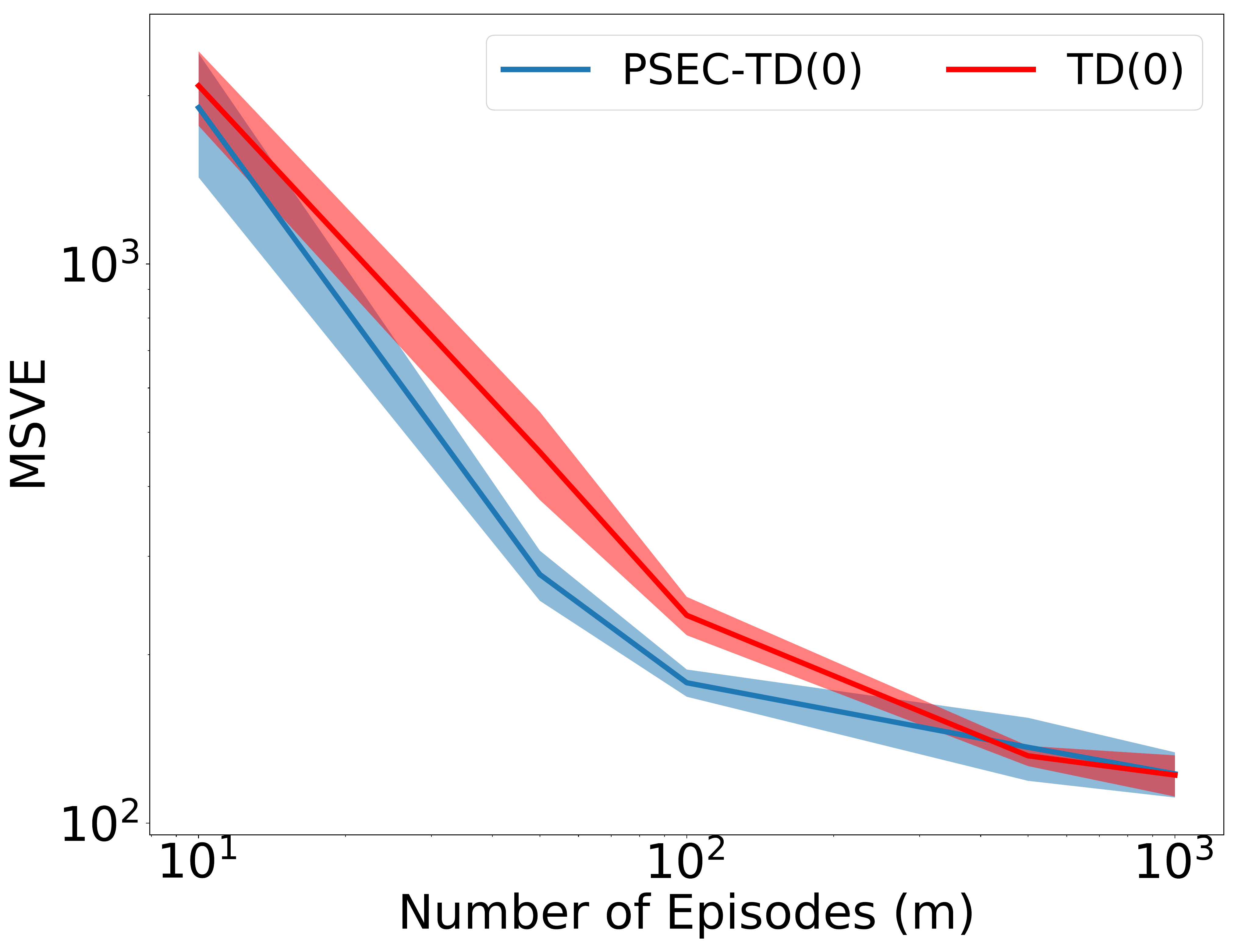}}
    \caption{\footnotesize Comparing data efficiency of PSEC and TD
    on different batch sizes. Results for Figure \ref{fig:exp_cart_msve_batch}
    and Figure \ref{fig:exp_invpen_msve_batch} are averaged over $400$  and
    $250$
    trials resp. with shaded region of $95\%$ confidence. Both axes are log-scaled. Lower MSVE is better.}
\end{figure}

\subsubsection{Architecture Model Selection}
\label{sec:func_approx_arch_model_sel}

Figure \ref{fig:exp_psec_vs_vf} illustrates the impact of different value function classes on the data efficiency of
TD and PSEC, while holding the PSEC model and behavior policy architectures fixed, on CartPole. We generally found that more expressive value function representations resulted in better data efficiency by both algorithms. We also found that the gap between 
PSEC and TD increased as the VF representation became more expressive. We hypothesize that even though
PSEC finds a more accurate fixed point than TD in the space of all value functions, the shown  
 difference between the two algorithms is dependent on the space of representable value functions --  a more representable function class can capture the difference between the two
algorithms better. The lighter shades mean that any difference between PSEC and TD was statistically insignificant.

Figure \ref{fig:exp_psec_arch} compares the data efficiency of PSEC against TD with varying PSEC neural network
model architectures, while the value function and behavior policy architectures are fixed, on CartPole.  In general, we found that more expressive network models
produced better PSEC weights since they were able to better capture the MLE of
the policy from the data. Unlike the NN PSEC policies, the linear function PSEC policy did not
produce a statistically significant improvement over TD. 

\begin{figure}[]
    \centering
    \subfigure[]{\label{fig:exp_psec_vs_vf}\includegraphics[scale=0.33]{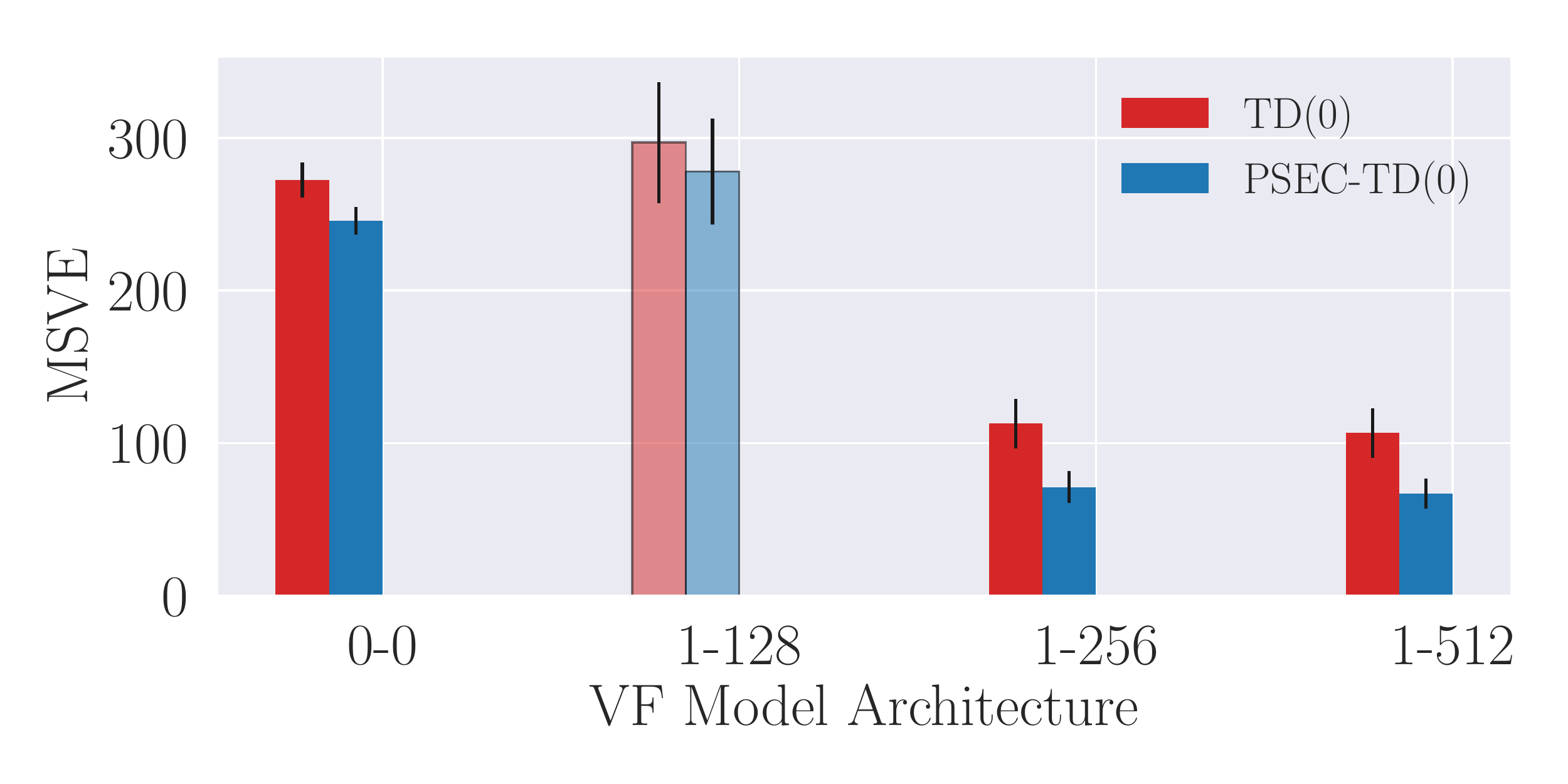}}
    \subfigure[]{\label{fig:exp_psec_arch}\includegraphics[scale=0.33]{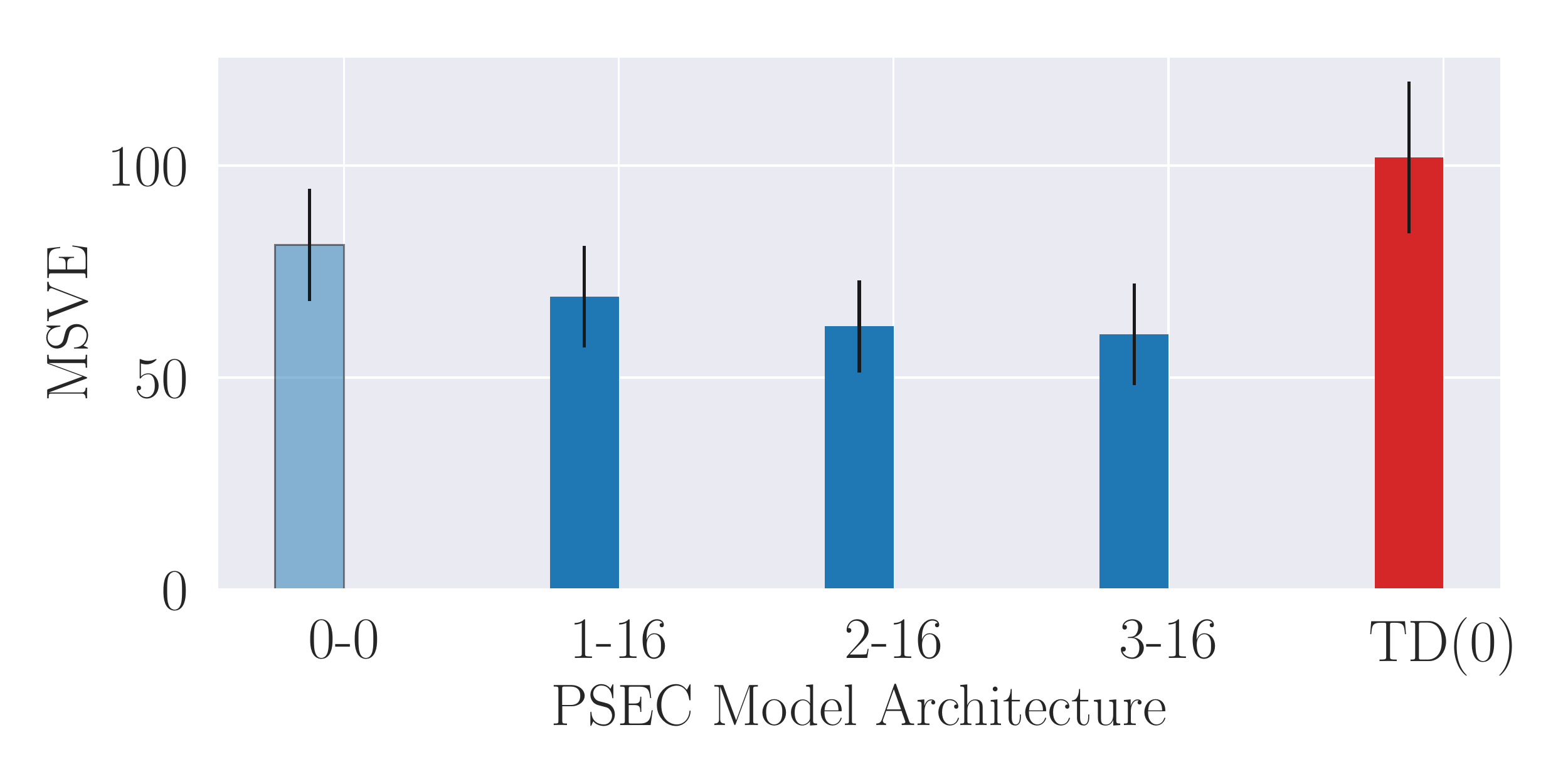}}
    \caption{\footnotesize Figure \ref{fig:exp_psec_vs_vf} and Figure \ref{fig:exp_psec_arch} compare data efficiency 
    of PSEC, with varying VF model architectures, and PSEC, with 
    varying model
    arch, respectively against TD on CartPole. Both
     use a batch size of $10$ episodes, and results shown are averaged over $300$ trials with error bars of $95\%$ confidence. Darker shades represent statistically significant results. The label on
    the $x$ axis shown is ($\#$ hidden layers - $\#$ neurons). Lower MSVE is better.}
\end{figure}



\subsubsection{Sensitivity Studies}
Due to space limitations, we defer the empirical analysis of
other effects to Appendices \ref{app:cartpole_exps} and
\ref{app:inverted_pend_exps}. Figure \ref{fig:exp_psec_vs_lr}
and Figure \ref{fig:exp_psec_vs_lr_cont_act} indicate that a
small learning rate for the PSEC model is preferred. Figure \ref{fig:exp_psec_overfitting} and \ref{fig:exp_psec_overfitting_cont_act}
indicate that some overfitting by the PSEC model is tolerable,
and perhaps, preferable, but extreme overfitting can degrade
performance.
\paragraph{Practical Recommendation} Based on our experiments, 
we recommend the following: 1) an expressive value function that can 
represent the more accurate fixed-point of PSEC-TD, 2) a PSEC model class 
that can represent the true behaviour policy but with awareness that 
extreme overfitting may hamper
performance, and 3) a small learning rate.

\section{Related Work}
\label{sec:related_work}

In this section, we discuss the literature on importance sampling with an estimated behavior policy and reducing sampling error in reinforcement learning.

The approach in this work 
has been motivated by prior work showing that importance sampling with an estimated behavior policy can lower variance when estimating an expected value in RL.
\citet{ICML2019-Hanna} introduce a family of methods called regression importance sampling methods (RIS) and show that they have lower variance than importance sampling with the true behavior policy.
\citet{hanna2019reducing} show that a similar technique led to more sample-efficient policy gradient learning.
These works are related to work in the multi-armed bandit  
 \cite{Li2015TowardMO, DBLP:journals/corr/abs-1809-03084, DBLP:journals/corr/abs-1808-00232}, causal inference \cite{doi:10.1111/1468-0262.00442,10.2307/2289440}, and
Monte Carlo integration \cite{RePEc:oup:biomet:v:94:y:2007:i:4:p:985-991, Delyon_2016} literature.
In contrast, our work focuses on \textit{value function learning}, where the focus is on learning the expected return at every state visited by the agent instead of across a set of actions (multi-armed bandit) or for some start states that are a subset of all the states the agent visits.

PSEC-TD(0) corrects policy sampling error 
through importance sampling with an estimated behavior policy.
Other works avoid policy sampling error entirely by computing analytic expectations.
Expected SARSA \cite{4927542}, learns action-values by
analytically computing the expected return of the next state during 
bootstrapping as opposed to using
the value of the sampled next action.
The Tree-backup algorithm \cite{Precup2000EligibilityTF} extends Expected SARSA to a multi-step algorithm.
$Q(\sigma)$ \cite{asis2017multistep} unifies SARSA \cite{NIPS1995_1109, Rummery94on-lineq-learning}, Expected SARSA, and Tree-backups, to
find a balance between sampling and analytic expectation computation. 
Our work is distinct from these in that
we focus on learning state values which may be preferable for prediction as well as a variety of actor-critic approaches \cite{NIPS1999_1786, mnih2016asynchronous}.
To the best of our knowledge, no other approach exists for correcting policy sampling error when learning state values. 

\section{Summary and Discussion}
\label{sec:disc}

In batch value function approximation, we observed that TD(0) may converge to an inaccurate estimate of the value function due to policy sampling error.
We proposed batch PSEC-TD(0) as a method to correct this error and showed that it leads to a more data efficient
estimator  than batch TD(0). In this paper, we theoretically analyzed PSEC-TD and empirically evaluated it in the tabular and function approximation
settings. Our empirical study validated that PSEC converges to a more accurate fixed point than TD, and 
studied how the numerous components in the PSEC training setup impact its data efficiency with respect to TD.

Despite the data efficiency benefits that batch  PSEC-TD(0)
introduced, there are limitations.
First, it requires knowledge of the evaluation policy, which on-policy TD(0) does not.
%
%
This comparative disadvantage is only for the on-policy setting as both TD(0) and PSEC-TD(0) require knowledge of the evaluation policy for the off-policy setting.
Additionally, PSEC-TD(0), in the off-policy case, has the advantage of not requiring knowledge of the behavior policy $\pib$.
Second, the policy estimation step required by PSEC-TD(0) could potentially be computationally expensive.
%
For instance, requiring the computation and storage of $\mathcal{O}(|\sset||\aset|)$ parameters in the tabular setting.
%
%


There are several directions for future work.
First, our work
focused on batch
 TD(0).
We expect that a variant of PSEC can improve value function learning with $n$-step TD and TD($\lambda$). 
Second, with an improved value function learning
algorithm, it would be interesting to see if an agent can
learn better control policies.  Third, it would be interesting
to theoretically and empirically study PSEC when learning
the state-action values. Finally, automatically finding the optimal training setting
for PSEC in the function approximation setting is another important direction for future work.

\section*{Acknowledgements}
We thank Darshan Thaker, Elad Liebman, Reuth Mirsky, Sanmit Narvekar, Scott
Niekum, Sid Desai,  Yunshu Du, and the anonymous reviewers
for reviewing our work and for their helpful
comments. This work has taken place in the Learning Agents Research
Group (LARG) at the Artificial Intelligence Laboratory, The University
of Texas at Austin.  LARG research is supported in part by grants from
the National Science Foundation (CPS-1739964, IIS-1724157,
NRI-1925082), the Office of Naval Research (N00014-18-2243), Future of
Life Institute (RFP2-000), Army Research Office (W911NF-19-2-0333),
DARPA, Lockheed Martin, General Motors, and Bosch.  The views and
conclusions contained in this document are those of the authors alone.
Peter Stone serves as the Executive Director of Sony AI America and
receives financial compensation for this work.  The terms of this
arrangement have been reviewed and approved by the University of Texas
at Austin in accordance with its policy on objectivity in research.

\medskip

\bibliography{main}
\bibliographystyle{icml2020}

\onecolumn
\newpage
\normalsize
\appendix

\section{Matrix Notation for Proofs}
\label{notation:matrix}

In this section, we introduce matrix-related notation for the proofs. Part of this notation is derived from
 \citet{sutton1988learning}.
We refer to state features using vectors indexed by the state. So features $\bx(i)$ for state $i$ is referred to as $\bx_i$.
Reward $r(j|i, a)$ (the reward for transitioning to state $j$ from state $i$ after taking action $a$) is referred to by $r_{ij}^a$.
The policy $\pi(a|i)$ is $\pi_i^a$ and the transition function $P(j|i, a)$ is accordingly the value $p_{ij}^a$.
$\mathcal{N}$ and $\mathcal{T}$ are the set of non-terminal and terminal states respectively, and $\mathcal{A}$ is the set of possible actions.
$\mathcal{D}$ is the batch of data used to train the value function. For any transition from a non-terminal state
to a (non-) terminal state, $i \to j$, $1 \leq i \leq |\mathcal{N}|$ and $1 \leq j \leq |\mathcal{N} \cup \mathcal{T}|$.
Finally, the maximum-likelihood estimate (MLE) of the above quantities according to $\mathcal{D}$, is given with a hat ( $\hat{}$ ) on top of the quantity. Further notations that are used in the proofs are explained when introduced.

\begin{center}
\begin{tabular}{ m{4em}|m{31em}|m{5em} } 
 \toprule 
 Notation & \center{Description} & Dimension \\ \midrule
 $\sset$ & set of states & $|\sset|$ \\
 \hline
  $\widehat{\sset}$ & set of states that appear in batch $\mathcal{D}$ & $|\widehat{\sset}|$ \\
 \hline
 $\aset$ & set of actions & $|\aset|$ \\
 \hline
   $\widehat{\aset}_i$ & set of actions that appear in batch $\mathcal{D}$ when agent is in state $i$ & $|\widehat{\aset}_i|$ \\
 \hline
 $\mathcal{N} \subset \sset$ & non-terminal states & $|\mathcal{N}|$\\
 \hline
   $\widehat{\mathcal{N}} \subset \widehat{\sset}$ & non-terminal states that appear in batch $\mathcal{D}$ & $|\widehat{\mathcal{N}}|$ \\
 \hline 
 $\mathcal{T} \subset \sset$ & terminal states & $|\mathcal{T}|$ \\
 \hline
   $\widehat{\mathcal{T}} \subset \widehat{\sset}$ & terminal states that appear in batch $\mathcal{D}$ & $|\widehat{\mathcal{T}}|$ \\
 \hline 
 $I$ & identity matrix &  $|\mathcal{N}|\times|\mathcal{N}|$ \\
 \hline
 $p_{jk}^{\pi}$ & probability of transitioning from state $j$ to state $k$ under a policy, $\pi$ & - \\
  \hline
 $p^a_{jk}$ & probability of transitioning from state $j$ to state $k$ after taking action $a$ & - \\
 \hline
 $\bar{r}_j$ & mean reward when transitioning from state $j$ & -\\
 \hline
  $\bar{r}_{ij}^a$ & mean reward when transitioning from state $i$ to state $j$ after taking action $a$ & -\\
 \hline
 $Q$ & $Q_{jk}\coloneqq p_{jk}^{\pi}$ & $|\mathcal{N}|\times|\mathcal{N}|$ \\
 \hline
 $[\mathbf{m}]_i$ & expected reward on transitioning from state $i$ to non-terminal state $j$ i.e. 
  $\sum_{j \in \mathcal{N}} p_{ij}^{\pi} r_{ij}$ or  $\sum_{j \in \mathcal{N}} \sum_{a \in \mathcal{A}} \pi_i^a p_{ij}^ar_{ij}^a$
    & $|\mathcal{N}|$\\
 \hline
 $[\mathbf{h}]_i$ & expected reward on transitioning from state $i$ to non-terminal state $j$  to terminal state i.e. $\sum_{j \in \mathcal{T}} p_{ij}^{\pi} r_{ij} $
 or $\sum_{j \in \mathcal{T}} \sum_{a \in \mathcal{A}} \pi_i^a p_{ij}^ar_{ij}^a$& $|\mathcal{N}|$ \\
 \hline
 $d_{\pieval}(i)$ & $\forall i \in \sset$ weighted proportion of time spent in state $i$ under policy $\pieval$ & -\\
\bottomrule
\end{tabular}
\end{center}

\section{Fixed-point for an MDP in the Per-step Reward and Discounted Case}
\label{appendix:fixedpoints}

In this section, we establish several fixed-points that we expect the value function to converge to in the
discounted per-step reward case for an MRP and MDP. Note that \citet{sutton1988learning} derived these fixed-points for MRPs when rewards are only received on termination and there is no discounting.
We first specify the fixed-points for an MRP in the discounted per-step reward case, and then extend this result for MDPs. 

For both an MRP and MDP, we establish two types of fixed-points in the per-step reward and discounted case. The first fixed-point is the \textit{true} 
fixed-point in that it is the value function computed assuming that 
we have access to the true policy and transition dynamics distributions. Ideally, we would like
our value function learning algorithms to converge to this fixed-point. The second fixed-point is the 
\textit{certainty-equivalence estimate} fixed-point, which is the value function computed using the maximum-likelihood
estimates of the policy and transition dynamics from a batch of fixed data. We note that due to sampling error
in the policy and transition dynamics, the certainty-equivalence estimate is an \textit{inaccurate} estimate of
the true value function. Finally, and only for MDPs, we specify the fixed-point that we expect the value function to
learn after applying PSEC.


\subsection{MRP True Fixed-Point}
The true value function $v$ for a state, $i \in \mathcal{N}$, induced by the policy-integrated transition dynamics $p^{\pi}$, reward function $r$, and policy, $\pi$, is given by:
\begin{align*}
     v(i) &= \sum_{j \in \mathcal{N} \cup \mathcal{T}} p_{ij}^{\pi} \left[r_{ij} +
     \gamma v(j)\right] &\text{Bellman equation} \\
     &= \sum_{j \in \mathcal{N}} p_{ij}^{\pi} \left[r_{ij} +
     \gamma v(j)\right] + \sum_{j \in \mathcal{T}} p_{ij}^{\pi}r_{ij} &\text{expected
     return from $\mathcal{T}, v(\mathcal{T}) = 0$}\\
     &= \sum_{j \in \mathcal{T}} p_{ij}^{\pi} r_{ij} +  \sum_{j \in \mathcal{N}} p_{ij}^{\pi}\left[r_{ij} +
     \gamma\left[\sum_{k \in \mathcal{N} \cup \mathcal{T}} p_{jk}^{\pi}\left[r_{jk} +
     \gamma v(k)\right]\right]\right] & \text{recursively apply $v(i)$}
\end{align*}
\begin{align*}
     v(i) &= \sum_{j \in \mathcal{T}} p_{ij}^{\pi} r_{ij} + \sum_{j \in \mathcal{N}} p_{ij}^{\pi} r_{ij} +
     \gamma \sum_{j \in \mathcal{N}} p_{ij}^{\pi} \sum_{k \in \mathcal{N} \cup \mathcal{T}}  p_{jk}^{\pi}r_{jk}  \\&+
     \gamma^2 \sum_{j \in \mathcal{N}} p_{ij}^{\pi} \sum_{k \in \mathcal{N} \cup \mathcal{T}}p_{jk}^{\pi} v(k)\\
     &= \sum_{j \in \mathcal{T}} p_{ij}^{\pi} r_{ij} + \sum_{j \in \mathcal{N}} p_{ij}^{\pi} r_{ij} \\ &+
     \gamma \sum_{j \in \mathcal{N}} p_{ij}^{\pi} \sum_{k \in \mathcal{N} } p_{jk}^{\pi}r_{jk} + 
     \gamma \sum_{j \in \mathcal{N}} p_{ij}^{\pi} \sum_{k \in \mathcal{T}} p_{jk}^{\pi}r_{jk} \\&+
     \gamma^2 \sum_{j \in \mathcal{N}} p_{ij}^{\pi} \sum_{k \in \mathcal{N}} p_{jk}^{\pi} v(k) 
     &\text{splitting $\mathcal{N}$ and $\mathcal{T}$}\\
\end{align*}

 We define vectors, $\mathbf{h}$
and $\mathbf{m}$ with,
$[\mathbf{h}]_{i} = \sum_{j \in \mathcal{T}} p_{ij}^{\pi} r_{ij}$,  $[\mathbf{m}]_{i} = \sum_{j \in \mathcal{N}} p_{ij}^{\pi} r_{ij}$,
and $Q$ is the true transition matrix of the Markov reward process induced by $\pi$ and $P$, i.e., $[Q]_{ij} = p_{ij}^{\pi}$. Then continuing from
above, we have
\begin{align}
     v(i) &= [\mathbf{h}]_i + [\mathbf{m}]_i + \gamma Q[\mathbf{h}]_i + \gamma Q[\mathbf{m}]_i + \gamma^2 Q^2[\mathbf{h}]_i + \gamma^2 Q^2[\mathbf{m}]_i
     + \ldots &\text{unrolling $v(\mathcal{N} \cup \mathcal{T}$)}\\
     &= \left[\sum\limits_{k = 0}^{\infty}(\gamma Q)^k(\mathbf{m} + \mathbf{h})\right]_i\\
    v(i)  &= \left[(I - \gamma Q)^{-1} (\mathbf{m} + \mathbf{h})\right]_i \label{eq:mrp-truefixedpoint}
\end{align}

The existence of the limit and inverse are assured
by Theorem A.1 in \citet{sutton1988learning}. The theorem
is applicable here since $\lim_{k\to\infty} (\gamma Q)^k = 0$.

\subsection{MRP Certainty-Equivalence Fixed -Point}
\label{appendix:mrp_cee_fixed_point}

For the certainty-equivalence fixed-point, we consider a batch of data, $\mathcal{D}$. We follow the same steps and similar notation from Equation (\ref{eq:mrp-truefixedpoint}),
with the slight modification that the maximum-likelihood estimate (MLE) of the above quantities according to
$\mathcal{D}$, is given with a hat ( $\hat{}$ ) on top of the quantity.
The observed sets of non-terminal and terminal states in the batch are given by  $\widehat{\mathcal{N}}$ and $\widehat{\mathcal{T}}$ respectively.

Then similar to above, we can derive the certainty-equivalence estimate of the value function according to
the MLE of the MRP transition dynamics from the batch for a particular state $i$, $\forall i \in \widehat{N}$ is:

\begin{align}
    \hat{v}(i)  &= \left[(I - \gamma \widehat{Q})^{-1} (\hat{\mathbf{m}} + \hat{\mathbf{h}})\right]_i \label{eq:mrp-ceefixedpoint}
\end{align}

\subsection{MDP True Fixed-Point}

The true value function, $v^\pi$, for a policy, $\pi$, for a state $i$, $\forall i \in  \mathcal{N}$, induced by the transition dynamics
and reward function, $p$ and $r$ is given by:

\begin{align*}
     v^\pi(i) &= \sum_{a \in \mathcal{A}} \pi_i^a\sum_{j \in \mathcal{N} \cup \mathcal{T}} p_{ij}^a \left[r_{ij}^a +
     \gamma v^\pi(j)\right] &\text{Bellman equation}\\
     &= \sum_{a \in \mathcal{A}} \pi_i^a\sum_{j \in \mathcal{N}} p_{ij}^a \left[r_{ij}^a +
     \gamma v^\pi(j)\right] + \sum_{a \in \mathcal{A}} \pi_i^a\sum_{j \in \mathcal{T}} p_{ij}^ar_{ij}^a &\text{expected
     return from $\mathcal{T}, v^\pi(\mathcal{T}) = 0$}
\end{align*}
\begin{align*}
     v^\pi(i) &= \sum_{a \in \mathcal{A}} \pi_i^a \sum_{j \in \mathcal{T}} p_{ij}^ar_{ij}^a + \sum_{a \in \mathcal{A}}  \pi_i^a\sum_{j \in \mathcal{N}} p_{ij}^a \left[r_{ij}^a +
     \gamma\left[\sum_{a' \in \mathcal{A}}  \pi_j^{a'}\sum_{k \in \mathcal{N} \cup \mathcal{T}} p_{jk}^{a'}\left[r_{jk}^{a'} +
     \gamma v^\pi(k)\right]\right]\right]\\
     &= \sum_{j \in \mathcal{T}} \sum_{a \in \mathcal{A}} \pi_i^a p_{ij}^ar_{ij}^a + \sum_{j \in \mathcal{N}} \sum_{a \in \mathcal{A}} \pi_i^ap_{ij}^a r_{ij}^a +
     \gamma \sum_{j \in \mathcal{N}} \sum_{a \in \mathcal{A}} \pi_i^a p_{ij}^a \sum_{k \in \mathcal{N} \cup \mathcal{T}} \sum_{a' \in \mathcal{A}} \pi_j^{a'}p_{jk}^{a'}r_{jk}^{a'}  \\&+
     \gamma^2 \sum_{j \in \mathcal{N}} \sum_{a \in \mathcal{A}} \pi_i^a p_{ij}^a \sum_{k \in \mathcal{N} \cup \mathcal{T}}\sum_{a' \in \mathcal{A}}\pi_j^{a'}p_{jk}^{a'} v^\pi(k)\\
     &= \sum_{j \in \mathcal{T}} \sum_{a \in \mathcal{A}} \pi_i^a p_{ij}^ar_{ij}^a + \sum_{j \in \mathcal{N}} \sum_{a \in \mathcal{A}} \pi_i^ap_{ij}^a r_{ij}^a \\ &+
     \gamma \sum_{j \in \mathcal{N}} \sum_{a \in \mathcal{A}} \pi_i^a p_{ij}^a \sum_{k \in \mathcal{N} } \sum_{a' \in \mathcal{A}} \pi_j^{a'}p_{jk}^{a'}r_{jk}^{a'} + 
     \gamma \sum_{j \in \mathcal{N}} \sum_{a \in \mathcal{A}} \pi_i^a p_{ij}^a \sum_{k \in \mathcal{T}} \sum_{a' \in \mathcal{A}} \pi_j^{a'}p_{jk}^{a'}r_{jk}^{a'} \\&+
     \gamma^2 \sum_{j \in \mathcal{N}} \sum_{a \in \mathcal{A}} \pi_i^a p_{ij}^a \sum_{k \in \mathcal{N}}\sum_{a' \in \mathcal{A}} \pi_j^{a'}p_{jk}^{a'} v^\pi(k)
     &\text{splitting $\mathcal{N}$ and $\mathcal{T}$}\\
\end{align*}

Similar to earlier, we have vectors, $\mathbf{h}$
and $\mathbf{m}$ with,
$[\mathbf{h}]_{i} = \sum_{j \in \mathcal{T}} \sum_{a \in \mathcal{A}} \pi_i^a p_{ij}^ar_{ij}^a$,  $[\mathbf{m}]_{i} = \sum_{j \in \mathcal{N}} \sum_{a \in \mathcal{A}} \pi_i^a p_{ij}^ar_{ij}^a$,
and $Q$ is the true transition matrix of the Markov reward process induced by $\pi$ and $P$, i.e., $[Q]_{ij} = \sum_a \pi_i^ap_{ij}^a$. The terms are not overloaded since the expectation over the true policy yields the same values. Then continuing from
above, we have
\begin{align}
     v^\pi(i) &= [\mathbf{h}]_i + [\mathbf{m}]_i + \gamma Q[\mathbf{h}]_i + \gamma Q[\mathbf{m}]_i + \gamma^2 Q^2[\mathbf{h}]_i + \gamma^2 Q^2[\mathbf{m}]_i
     + \ldots &\text{unrolling $v^\pi(\mathcal{N} \cup \mathcal{T}$)}\\
     &= \left[\sum\limits_{k = 0}^{\infty}(\gamma Q)^k(\mathbf{m} + \mathbf{h})\right]_i\\
    v^\pi(i)  &= \left[(I - \gamma Q)^{-1} (\mathbf{m} + \mathbf{h})\right]_i \label{eq:truefixedpoint}
\end{align}

The existence of the limit and inverse are assured
by Theorem A.1 in \citet{sutton1988learning}. The theorem
is applicable here since $\lim_{k\to\infty} (\gamma Q)^k = 0$. 

\subsection{MDP Certainty-Equivalence Fixed-Point}
\label{appendix:mdp_cee_fixed_point}

Similar to the above subsection, for certainty-equivalence fixed-point, we consider a batch of data, $\mathcal{D}$, with the maximum-likelihood estimate (MLE) of the above quantities according to
$\mathcal{D}$ given with a hat ( $\hat{}$ ) on top of the quantity.
The observed sets of non-terminal and terminal states in the batch are given by  $\widehat{\mathcal{N}}$ and $\widehat{\mathcal{T}}$ respectively.

Then similar to above, we can derive the certainty-equivalence estimate of the value function according to
the MLE of the policy and transition dynamics from the batch for a particular state $i$, $\forall i \in \widehat{N}$ is:

\begin{equation}
    v^{\hat{\pi}}(i)  = \left[(I - \gamma \widehat{Q})^{-1} (\mathbf{\hat{m}} + \mathbf{\hat{h}})\right]_i \label{eq:ceefixedpoint}
\end{equation}
 
 This fixed-point is called the certainty-equivalence estimate (CEE) \citep{sutton1988learning} for an MDP. We note that 
 MLE of the policy and transition dynamics according to the batch may not be representative
 of the true policy and transition dynamics. In that case, MDP-CEE (Equation (\ref{eq:ceefixedpoint})) is inaccurate
 with respect to Equation (\ref{eq:truefixedpoint}) due to policy and transition dynamics \textit{sampling error}.
 
 \subsection{Policy Sampling Error Corrected MDP Certainty-Equivalence Fixed-Point}
\label{appendix:psec_mdp_cee_fixed_point}

We now derive a new fixed-point, the \textit{policy sampling error corrected MDP certainty-equivalence fixed-point}. This 
fixed-point corrects the policy sampling error that occurs in the value function given by Equation (\ref{eq:ceefixedpoint}),
making the estimation more accurate with respect to the true value function given by Equation (\ref{eq:truefixedpoint}).

We introduce the PSEC weight, $\hat{\rho}_i^a = \frac{\pi_i^a}{\hat{\pi}_i^a}$, 
with $\pi$ being the policy that we are interested in evaluating and $\hat{\pi}$ being the MLE of the policy
according to batch $\mathcal{D}$. $\hat{\rho}$ is then applied to the above quantities to introduce a slightly
modified notation. In particular, $\hat{\rho}$ applied to $\widehat{Q}$ results in $[\widehat{U}]_{ij} =\sum_{a \in \mathcal{\hat{A}}_i} \hat{\rho}_i^a \hat{\pi}_i^a \hat{p}_{ij}^a$, and applied to vectors $\mathbf{\hat{h}}$ and $\mathbf{\hat{m}}$ results
in $[\mathbf{\mathbf{\hat{l}}}]_i  = \sum_{j \in \mathcal{T}} \sum_{a \in \mathcal{\hat{A}}_i} \hat{\rho}_i^a\hat{\pi}_i^a \hat{p}_{ij}^a\bar{r}_{ij}^a$ and $[\mathbf{\mathbf{\hat{o}}}]_i = \sum_{j \in \mathcal{N}} \sum_{a \in \mathcal{\hat{A}}_i} \hat{\rho}_i^a\hat{\pi}_i^a \hat{p}_{ij}^a\bar{r}_{ij}^a $ respectively. After simplification, we have $[\widehat{U}]_{ij} = \sum_{a \in \mathcal{\hat{A}}_i} \pi_i^a \hat{p}_{ij}^a$, $[\mathbf{\mathbf{\hat{l}}}]_{i} = \sum_{j \in \mathcal{T}} \sum_{a \in \mathcal{\hat{A}}_i} \pi_i^a \hat{p}_{ij}^a\bar{r}_{ij}^a$, and $[\mathbf{\mathbf{\hat{o}}}]_{i} = \sum_{j \in \mathcal{N}} \sum_{a \in \mathcal{\hat{A}}_i} \pi_i^a \hat{p}_{ij}^a\bar{r}_{ij}^a$. Using
these policy sampling error corrected quantities, we can derive the fixed-point for true policy, $\pi$, in a similar manner 
as earlier:

\begin{equation}
    v^{\pi}(i)  = \left[(I - \gamma \widehat{U})^{-1} (\mathbf{\mathbf{\hat{o}}} + \mathbf{\mathbf{\hat{l}}})\right]_i \label{eq:risceefixedpoint}
\end{equation}

In computing this new fixed-point, we have corrected for the policy sampling error, resulting
in a more accurate estimation of Equation (\ref{eq:truefixedpoint}) than Equation (\ref{eq:ceefixedpoint}). Now, the value function 
is computed for the true policy that we are interested in evaluating, $\pi$.

\section{Convergence of Batch Linear TD(0) to the MRP CE Fixed-Point}
\label{sec:mrp_td_convergence_proof}
\thmrptdconvergence*


\begin{proof}
Batch linear TD(0) makes an update
to the weight vector, $\bw_n$ (of dimension, length of the feature vector), after each presentation of the
batch:
\begin{align*}
    \bw_{n+1} &= \bw_n + \sum_{\tau \in \mathcal{D}} \sum_{t=1}^{L_\tau} \alpha \left[  (\bar{r}_t + \gamma \bw_n^T \bx_{t+1}) - \bw_n^T \bx_t\right] \bx_t
\end{align*}
where $\mathcal{D}$ is the batch of episodes, $L_\tau$ is the length of each
episode $\tau$,
and $\alpha$ is the learning rate.

We can re-write the whole presentation of the batch
of data in terms of the number of times there was a 
transition from state $i$ to state $j$ in the batch i.e.
$\hat{c}_{ij} = \hat{d}_i \hat{p}_{ij}$, where $\hat{d}_i$ is the
number of times state $i \in \widehat{\mathcal{N}}$ appears
in the batch.

\begin{flalign*}
    \bw_{n+1} &= \bw_n + \sum_{\tau \in \mathcal{D}} \sum_{t=1}^{L_\tau} \alpha \left[  (\bar{r}_t + \gamma \bw_n^T \bx_{t+1} - \bw_n^T \bx_t)\right] \bx_t\\
    &= \bw_n + \alpha\sum_{i \in \widehat{\mathcal{N}}} \sum_{j \in \widehat{\mathcal{N}} \cup \widehat{\mathcal{T}}} \hat{c}_{ij} \left[(\bar{r}_{ij} + \gamma \bw_n^T \bx_j - \bw_n^T \bx_i)\right] \bx_i \\
    &= \bw_n + \alpha\sum_{i \in \widehat{\mathcal{N}}} \sum_{j \in \widehat{\mathcal{N}} \cup \widehat{\mathcal{T}}} \hat{d}_i \hat{p}_{ij}  \left[(\bar{r}_{ij} + \gamma \bw_n^T \bx_j - \bw_n^T \bx_i)\right] \bx_i \\
    &= \bw_n + \alpha\sum_{i \in \widehat{\mathcal{N}}} \sum_{j \in \widehat{\mathcal{N}} \cup \widehat{\mathcal{T}}} \hat{d}_i \hat{p}_{ij} (\bar{r}_{ij} + \gamma \bw_n^T \bx_j) \bx_i - \alpha\sum_{i \in \widehat{\mathcal{N}}} \sum_{j \in \widehat{\mathcal{N}} \cup \widehat{\mathcal{T}}} \hat{d}_i \hat{p}_{ij} (\bw_n^T \bx_i) \bx_i\\
    &= \bw_n + \alpha\sum_{i \in \widehat{\mathcal{\widehat{\mathcal{N}}}}} \hat{d}_i \bx_i \sum_{j \in \widehat{\mathcal{N}} \cup \widehat{\mathcal{T}}} \hat{p}_{ij} (\bar{r}_{ij} + \gamma \bw_n^T \bx_j)  - \alpha\sum_{i \in \widehat{\mathcal{N}}} \hat{d}_i (\bw_n^T \bx_i)\bx_i  \sum_{j \in \widehat{\mathcal{N}} \cup \widehat{\mathcal{T}}} \hat{p}_{ij} \\
    &= \bw_n + \alpha\sum_{i \in \widehat{\mathcal{N}}} \hat{d}_i \bx_i\left[\sum_{j \in \widehat{\mathcal{N}} \cup \widehat{\mathcal{T}}} \hat{p}_{ij} (\bar{r}_{ij} + \gamma \bw_n^T \bx_j) -  \bw_n^T \bx_i \right] && \text{
    $\sum_{j \in \widehat{\mathcal{N}} \cup \widehat{\mathcal{T}}}  \hat{p}_{ij} = 1$}\\
    &= \bw_n + \alpha\sum_{i \in \widehat{\mathcal{N}}} \hat{d}_i \bx_i\left[\left(\sum_{j \in \widehat{\mathcal{N}}} \hat{p}_{ij} (\bar{r}_{ij} + \gamma \bw_n^T \bx_j)\right)
    + \left(\sum_{j \in \widehat{\mathcal{T}}} \hat{p}_{ij} \bar{r}_{ij}\right)-  \bw_n^T \bx_i \right] && \text{If $\bx_j \in \mathcal{\widehat{T}},\bw_n^T \bx_j = 0$} \\
    &= \bw_n + \alpha\sum_{i \in \widehat{\mathcal{N}}} \hat{d}_i \bx_i\left[\left(\sum_{j \in \widehat{\mathcal{N}}} \hat{p}_{ij} \bar{r}_{ij}\right)+ \left(\gamma\sum_{j \in \widehat{\mathcal{N}}} \hat{p}_{ij} \bw_n^T \bx_j\right)
    + \left(\sum_{j \in \widehat{\mathcal{T}}}  \hat{p}_{ij}\bar{r}_{ij}\right)  -  \bw_n^T \bx_i \right]
\end{flalign*}

\begin{align} \label{eqn:mrp_mat_form}
    \bw_{n+1} = \bw_n + \alpha \widehat{X}\widehat{D} \left[\mathbf{\hat{m}} + \gamma \widehat{Q}\widehat{X}^T\bw_n
    + \mathbf{\hat{h}} - \widehat{X}^T\bw_n \right]
\end{align}

where $\widehat{X}$ denotes the matrix (of dimensions, length of the feature vector by $|\widehat{\sset}|$) with columns, $\bx_i \in \widehat{\mathcal{S}}$ and $\widehat{D}$ is a diagonal matrix (of dimensions, $|\widehat{\sset}|$ by $|\widehat{\sset}|$) with $\widehat{D}_{ii} = \hat{d}_i$. Given the successive updates to the weight vector $\bw_n$, we now consider the actual values predicted as  the following by multiplying $\widehat{X}^T$ on both sides:

\begin{align*}
    \widehat{X}^T\bw_{n+1} &= \widehat{X}^T\bw_n + \alpha \widehat{X}^T \widehat{X}\widehat{D} \left( \mathbf{\hat{m}} + \mathbf{\hat{h}} + \gamma \widehat{Q}\widehat{X}^T\bw_n - \widehat{X}^T \bw_n\right) \\
    &= \widehat{X}^T\bw_n + \alpha \widehat{X}^T \widehat{X}\widehat{D}\left(\mathbf{\hat{m}} + \mathbf{\hat{h}}\right) + \alpha \widehat{X}^T \widehat{X}\widehat{D} \left(\gamma \widehat{Q}\widehat{X}^T\bw_n - \widehat{X}^T \bw_n\right)\\
    &= \alpha \widehat{X}^T \widehat{X}\widehat{D}\left(\mathbf{\hat{m}} + \mathbf{\hat{h}}\right) + \left(I - \alpha \widehat{X}^T \widehat{X}\widehat{D} \left(I - \gamma \widehat{Q}\right)\right)\widehat{X}^T\bw_n
\end{align*}
We then unroll the above equation by recursively applying $\widehat{X}^T\bw_{n}$ till $n = 0$.

\begin{align*}
    \widehat{X}^T\bw_{n+1} &= \alpha \widehat{X}^T \widehat{X}\widehat{D}\left(\mathbf{\hat{m}} + \mathbf{\hat{h}}\right) + \left(I - \alpha \widehat{X}^T \widehat{X}\widehat{D} \left(I - \gamma \widehat{Q}\right)\right)\alpha \widehat{X}^T \widehat{X}\widehat{D}\left(\mathbf{\hat{m}} + \mathbf{\hat{h}}\right) \\&+ \left(I - \alpha \widehat{X}^T \widehat{X}\widehat{D} \left(I - \gamma \widehat{Q}\right)\right)^2\widehat{X}^T\bw_{n - 1} \\
    & \vdots\\
    &= \sum_{k = 0}^{n - 1} \left(I - \alpha \widehat{X}^T \widehat{X}\widehat{D} \left(I - \gamma \widehat{Q}\right)\right)^k\alpha \widehat{X}^T \widehat{X}\widehat{D}\left(\mathbf{\hat{m}} + \mathbf{\hat{h}}\right) \\&+\left(I - \alpha \widehat{X}^T \widehat{X}\widehat{D} \left(I - \gamma \widehat{Q}\right)\right)^n\widehat{X}^T\bw_0 \numberthis \label{eq:conv_val}
\end{align*}

Assuming that as $n \rightarrow \infty$, $ (I - \alpha \widehat{X}^T \widehat{X} \widehat{D} (I - \gamma \widehat{Q}))^n \rightarrow 0$, we can drop the second term and the sequence $\{\widehat{X}^T\bw_n\}$ converges to:
\begin{align*}
    \lim_{n\to \infty}\widehat{X}^T \bw_n &= \left(I - (I - \alpha \widehat{X}^T \widehat{X} \widehat{D} (I - \gamma \widehat{Q}))\right)^{-1} (\alpha \widehat{X}^T \widehat{X}\widehat{D} (\mathbf{\hat{m}} + \mathbf{\hat{h}}))\\
    &= (I - \gamma \widehat{Q})^{-1} \widehat{D}^{-1} (\widehat{X}^T\widehat{X})^{-1} \alpha^{-1} \alpha \widehat{X}^T \widehat{X} \widehat{D} (\mathbf{\hat{m}} + \mathbf{\hat{h}}) \\
    &= (I - \gamma \widehat{Q})^{-1}(\mathbf{\hat{m}} + \mathbf{\hat{h}})\\
\lim_{n\to \infty}\mathbb{E}\left[\bx_i^T\bw_n\right] &= \left[(I - \gamma \widehat{Q})^{-1}(\mathbf{\hat{m}} + \mathbf{\hat{h}})\right]_i, \forall i\in \mathcal{\widehat{N}}
\end{align*}

What is left to show now is $n \rightarrow \infty$, $ (I - \alpha \widehat{X}^T \widehat{X} \widehat{D} (I - \gamma \widehat{Q}))^n \rightarrow 0$.
Following \citet{sutton1988learning}, we first show that $\widehat{D} (I - \gamma \widehat{Q})$ is positive definite, and then that $\widehat{X}^T \widehat{X} \widehat{D} (I - \gamma \widehat{Q})$ has a full set of eigenvalues all of whose real parts are positive.
This enables us to show that $\alpha$ can be chosen so that eigenvalues of $ (I - \alpha \widehat{X}^T \widehat{X} \widehat{D} (I - \gamma \widehat{Q}))$ are less than $1$ in modulus, which assures us that its powers converge to $0$.





To show that  $\widehat{D} (I - \gamma \widehat{Q})$ is positive definite, we refer to the 
Gershgorin Circle theorem \citep{izvestija/gerschgorin31}, which states that if a matrix, $A$, is real, symmetric,
and strictly diagonally dominant with positive diagonal entries, then $A$ is positive definite. However, we cannot
apply this theorem as is to $\widehat{D} (I - \gamma \widehat{Q})$ since the matrix is not necessarily symmetric. To use
the theorem, we first apply another theorem (Theorem A.3 from \citet{sutton1988learning}) that states: a square matrix $A$ is positive definite if and only if $A + A^T$
is positive definite. So it suffices to show that $\widehat{D} (I - \gamma \widehat{Q}) + (\widehat{D} (I - \gamma \widehat{Q}))^T$
is positive definite.

Consider the matrix $S = \widehat{D} (I - \gamma \widehat{Q}) + (\widehat{D} (I - \gamma \widehat{Q}))^T$. We know that
$S$ is real and symmetric. It remains to show that the diagonal entries are positive and that $S$ is strictly diagonally
dominant. First, we look at the diagonal entries,  $S_{ii} = 2[\widehat{D}(I - \gamma \widehat{Q})]_{ii} = 
2\hat{d}_i(1 - \gamma\hat{p}_{ii}) > 0, \forall i\in \widehat{\mathcal{N}}$, which are positive. Second, we have the non-diagonal
entries for $i \not= j$ as $S_{ij} = [\widehat{D}(I - \gamma \widehat{Q})]_{ij} + [\widehat{D}(I - \gamma \widehat{Q})]_{ji}
= -\gamma\hat{d}_i\hat{p}_{ij} - \gamma\hat{d}_j\hat{p}_{ji} \leq 0$, which are nonpositive. We want to show that $|S_{ii}| \geq\sum_{j \neq i}|S_{ij}|$, with strict inequality holding for at least one $i$;  we know that the diagonal elements $S_{ii} > 0$ and non-diagonal elements  $S_{ij} \leq 0$, $i\not=j$. Hence,
to show that $S$ is strictly diagonally dominant, it is enough to show that $S_{ii} > -\sum_{j \neq i}S_{ij}$, which means
we can simply show that the sum of each entire row is greater than $0$, i.e. $\sum_j S_{ij} > 0$.

Before we show that $\sum_j S_{ij} > 0$, we note that $\hat{d}^T = \hat{\mu}^T (I - \widehat{Q})^{-1}$ where
$\hat{\mu}_i$ is the empirical state distribution of state $i$. Given the definitions
of $\hat{d}$, $\hat{\mu}$, and $\widehat{Q}$, this fact follows from \citet{kemeny1960finite} and is used by
 \citet{sutton1988learning}. Using this fact, we show that $\sum_j S_{ij} \geq 0$:

\begin{align*}
    \sum_{j} S_{ij} &= \sum_{j} \left([\widehat{D}(I - \gamma \widehat{Q})]_{ij} + [\widehat{D}(I - \gamma \widehat{Q})]^T_{ij}\right) \\
    &= \hat{d}_i\sum_{j}([I - \gamma \widehat{Q}]_{ij} + \sum_j \hat{d}_j[I - \gamma \widehat{Q}]^T_{ij}) \\
    &= \hat{d}_i\sum_{j}(1 - \gamma \hat{p}_{ij} ) + \left[\hat{d}^T (I - \widehat{Q})\right]_i \\
    &= \hat{d}_i\sum_{j}(1 - \gamma p_{ij} ) + \left[\hat{\mu}^T (I - \widehat{Q})^{-1}(I - \widehat{Q})\right]_i && \text{$\hat{d}^T = \hat{\mu}^T (I - \widehat{Q})^{-1}$}\\
    &= \hat{d}_i(1 - \gamma \sum_j \hat{p}_{ij}) + \hat{\mu}_i \\
    &\geq 0,
\end{align*}
where the final inequality is strict since $\hat{\mu}$  is positive for at least one element. Given the above, we have
shown that $S$ is real, symmetric, and strictly diagonally dominant; hence, $S$ is positive definite according to the
Gershgorin Circle theorem \citep{izvestija/gerschgorin31}. Since, $S
= \widehat{D} (I - \gamma \widehat{Q}) + (\widehat{D} (I - \gamma \widehat{Q}))^T$ is
positive definite, we have $\widehat{D} (I - \gamma \widehat{Q})$ to be positive definite.


Now we need to show that $\widehat{X}^T\widehat{X}\widehat{D} (I - \gamma \widehat{Q})$ has a full set of eigenvalues, all of whose real parts are positive. We know that $\widehat{X}^T\widehat{X}\widehat{D} (I - \gamma \widehat{Q})$ has a full
set of eigenvalues for the same reason shown by \citet{sutton1988learning}, i.e.  $\widehat{X}^T\widehat{X}\widehat{D} (I - \gamma \widehat{Q})$ is a product of three non-singular matrices, which means $\widehat{X}^T\widehat{X}\widehat{D} (I - \gamma \widehat{Q})$ is nonsingular as well; hence, no eigenvalues are $0$ i.e. its set of eigenvalues is full.

Consider $\lambda$ and $y$ to be an eigevalue and eigenvector pair of $\widehat{X}^T\widehat{X}\widehat{D} (I - \gamma \widehat{Q})$. First lets consider that $y$ may be a complex number and is of the form $y = a + bi$, and let $z = (\widehat{X}^T\widehat{X})^{-1}y, y\neq 0$. Second, we consider $\widehat{D} (I - \gamma \widehat{Q})$  from earlier i.e. where
$*$ is the conjugate-transpose:

\begin{align*}
    y^*\widehat{D} (I - \gamma \widehat{Q})y & = z^* \widehat{X}^T \widehat{X} \widehat{D} (I - \gamma \widehat{Q})y && \text{substituting $y^*$}\\
   &= z^*\lambda y && \text{$\widehat{X}^T\widehat{X}\widehat{D} (I - \gamma \widehat{Q})y = \lambda y$}\\
   &= \lambda z^*\widehat{X}^T\widehat{X}z && \text{substituting $y$}\\
   &= \lambda (\widehat{X}z)^*\widehat{X}z\\
   (a^T - b^Ti )(\widehat{D} (I - \gamma \widehat{Q}))(a^T + b^Ti )&= \lambda (\widehat{X}z)^*\widehat{X}z && \text{substituting
   $y^*$ and $y$}
\end{align*}
From the above equality, we know that the real parts (Re) of the LHS and RHS are equal as well i.e.
\begin{align*}
    \text{Re}\left(y^*\widehat{D} (I - \gamma \widehat{Q})y\right) &= \text{Re}\left(\lambda (\widehat{X}z)^*\widehat{X}z\right) \\   
    a^T\widehat{D} (I - \gamma \widehat{Q})a + b^T\widehat{D} (I - \gamma \widehat{Q})b &= (\widehat{X}z)^*\widehat{X}z \text{Re}\left(\lambda \right)
\end{align*}

LHS must be strictly positive since we already proved that $\widehat{D} (I - \gamma \widehat{Q})$ is positive definite and
by definition,  RHS, $(\widehat{X}z)^*\widehat{X}z$, is strictly positive as well. Thus, the Re($\lambda$) must be positive. 
Finally, using this result we want to show that the eigenvalues of 
$ (I - \alpha \widehat{X}^T \widehat{X} \widehat{D} (I - \gamma \widehat{Q}))$
are of modulus less than $1$ for a suitable $\alpha$.

First, we can see that $y$ is also an eigenvector of $ (I - \alpha \widehat{X}^T \widehat{X} \widehat{D} (I - \gamma \widehat{Q}))$, since $ (I - \alpha \widehat{X}^T \widehat{X} \widehat{D} (I - \gamma \widehat{Q}))y = y - \alpha\lambda y
= (1 - \lambda\alpha)y$, where $\lambda' = (1 - \alpha\lambda)$ is an eigenvalue of $ (I - \alpha \widehat{X}^T \widehat{X} \widehat{D} (I - \gamma \widehat{Q}))$. Second, we want to find suitable $\alpha$ such that the modulus of
 $\lambda'$ is less than $1$. We have the modulus of $\lambda'$:
 
 \begin{align*}
    \| \lambda' \| &= \|1 - \alpha \lambda \| \\
    &= \sqrt{(1 - \alpha a)^2 + (-\alpha b^2)}  && \text{substituting $\lambda  = a + bi$ of general complex form}\\
    &= \sqrt{1 - 2 \alpha a + \alpha^2 a^2 + \alpha^2 b^2} \\
    &= \sqrt{1 - 2 \alpha a + \alpha^2 (a^2 + b^2)} \\
    &< \sqrt{1 - 2 \alpha a + \alpha \frac{2a}{(a^2 + b^2)} (a^2 + b^2)} && \text{using $\alpha = \frac{2a}{(a^2 + b^2)}$}\\
    &= \sqrt{1 - 2 \alpha a + 2 \alpha a} = 1
\end{align*}

From above, we can see that if $\alpha$ is chosen such that $0 < \alpha < \frac{2a}{a^2 + b^2}$, then $\lambda'$ will have modulus less than $1$. Then using the theorem that states: if a matrix $A$ has $n$ independent eigenvectors with eigenvalues $\lambda_i$,
then $A^k \to 0$ as $k \to \infty$ if and only if all $\|\lambda_i \|< 1$, which implies that $\lim_{n \to \infty} \left(I - \alpha \widehat{X}\widehat{D}(I - \widehat{Q})\widehat{X}^T\right)^n = 0$, taking the trailing element in Equation (\ref{eq:conv_val}) to $0$ for a suitable $\alpha$. We thus prove convergence to the fixed point in  Equation (\ref{eq:mrp-dp}) if a batch linear TD(0) update is used with an appropriate step size $\alpha$.
\end{proof}

\section{Convergence of Batch Linear TD(0) to the MDP CE Fixed-Point}
\label{sec:mpd_td_convergence_proof}
\thmdptdconvergence*

\begin{proof}
Batch linear TD(0) makes an update
to weight vector, $\bw_n$ (of dimension, length of the feature vector), after each presentation of the
batch:
\begin{align*}
    \bw_{n+1} &= \bw_n + \sum_{\tau \in \mathcal{D}} \sum_{t=1}^{L_\tau} \alpha \left[  (\bar{r}_t + \gamma \bw_n^T \bx_{t+1}) - \bw_n^T \bx_t\right] \bx_t
\end{align*}
where $\mathcal{D}$ is the batch of episodes, $L_\tau$ is the length of each
episode $\tau$,
and $\alpha$ is the learning rate.

We can re-write the whole presentation of the batch
of data in terms of the number of times there was a 
transition from state $i$ to state $j$ when taking
action $a$ in the batch i.e. $\hat{c}_{ij}^a = \hat{d}_i \hat{\pi}_i^a \hat{p}_{ij}^a$, where $\hat{d}_i$ is the
number of times state $i \in \widehat{\mathcal{N}}$ appears
in the batch.

\begin{flalign*}
    \bw_{n+1} &= \bw_n + \sum_{\tau \in \mathcal{D}} \sum_{t=1}^{L_\tau} \alpha \left[  (\bar{r}_t + \gamma \bw_n^T \bx_{t+1} - \bw_n^T \bx_t)\right] \bx_t &\\
    &= \bw_n + \alpha\sum_{i \in \widehat{\mathcal{N}}} \sum_{j \in \widehat{\mathcal{N}} \cup \widehat{\mathcal{T}}} \sum_{a\in \mathcal{\hat{A}}_i} \hat{c}_{ij}^a \left[(\bar{r}_{ij}^a + \gamma \bw_n^T \bx_j - \bw_n^T \bx_i)\right] \bx_i &\\
    &= \bw_n + \alpha\sum_{i \in \widehat{\mathcal{N}}} \sum_{j \in \widehat{\mathcal{N}} \cup \widehat{\mathcal{T}}} \sum_{a \in \mathcal{\widehat{A}}_i} \hat{d}_i \hat{\pi}_i^a \hat{p}_{ij}^a \left[(\bar{r}_{ij}^a + \gamma \bw_n^T \bx_j - \bw_n^T \bx_i)\right] \bx_i\\
    &= \bw_n + \alpha\sum_{i \in \widehat{\mathcal{N}}} \sum_{j \in \widehat{\mathcal{N}} \cup \widehat{\mathcal{T}}} \sum_{a \in \mathcal{\hat{A}}_i} \hat{d}_i \hat{p}_{ij}^a \hat{\pi}_i^a (\bar{r}_{ij}^a + \gamma \bw_n^T \bx_j) \bx_i - \alpha\sum_{i \in \widehat{\mathcal{N}}} \sum_{j \in \widehat{\mathcal{N}} \cup \widehat{\mathcal{T}}} \sum_{a \in \mathcal{\hat{A}}_i} \hat{d}_i \hat{\pi}_i^a \hat{p}_{ij}^a (\bw_n^T \bx_i) \bx_i\\
    &= \bw_n + \alpha\sum_{i \in \widehat{\mathcal{\widehat{\mathcal{N}}}}} \hat{d}_i \bx_i \sum_{j \in \widehat{\mathcal{N}} \cup \widehat{\mathcal{T}}} \sum_{a\in \mathcal{\hat{A}}_i}  \hat{p}_{ij}^a \hat{\pi}_i^a(\bar{r}_{ij}^a + \gamma \bw_n^T \bx_j)  - \alpha\sum_{i \in \widehat{\mathcal{N}}} \hat{d}_i (\bw_n^T \bx_i)\bx_i  \sum_{j \in \widehat{\mathcal{N}} \cup \widehat{\mathcal{T}}} \sum_{a\in \mathcal{\hat{A}}_i}  \hat{\pi}_i^a \hat{p}_{ij}^a
    \end{flalign*}
    \begin{align*}
    &= \bw_n + \alpha\sum_{i \in \widehat{\mathcal{N}}} \hat{d}_i \bx_i\left[\sum_{j \in \widehat{\mathcal{N}} \cup \widehat{\mathcal{T}}} \sum_{a \in \mathcal{\hat{A}}_i}  \hat{p}_{ij}^a \hat{\pi}_i^a(\bar{r}_{ij}^a + \gamma \bw_n^T \bx_j) -  \bw_n^T \bx_i \right] && \text{
    $\sum_{j \in \widehat{\mathcal{N}} \cup \widehat{\mathcal{T}}} \sum_{a\in \mathcal{\hat{A}}_i} \hat{\pi}_i^a \hat{p}_{ij}^a = 1$}\\
    &= \bw_n + \alpha\sum_{i \in \widehat{\mathcal{N}}} \hat{d}_i \bx_i\left[\left(\sum_{j \in \widehat{\mathcal{N}}} \sum_{a\in \mathcal{\hat{A}}_i}  \hat{p}_{ij}^a \hat{\pi}_i^a(\bar{r}_{ij}^a + \gamma \bw_n^T \bx_j)\right)
    + \left(\sum_{j \in \widehat{\mathcal{T}}} \sum_{a \in \mathcal{\hat{A}}_i}  \hat{p}_{ij}^a \hat{\pi}_i^a\bar{r}_{ij}^a\right)  -  \bw_n^T \bx_i \right]
    && \text{If $\bx_j \in \mathcal{\widehat{T}},\bw_n^T \bx_j = 0$}
    \end{align*}
    \begin{align*}
    &= \bw_n + \alpha\sum_{i \in \widehat{\mathcal{N}}} \hat{d}_i \bx_i\left[\left(\sum_{j \in \widehat{\mathcal{N}}} \sum_{a\in \mathcal{\hat{A}}_i}  \hat{p}_{ij}^a \hat{\pi}_i^a\bar{r}_{ij}^a\right) + \left(\gamma\sum_{j \in \widehat{\mathcal{N}}} \sum_{a\in \mathcal{\hat{A}}_i }  \hat{p}_{ij}^a \hat{\pi}_i^a \bw_n^T \bx_j\right)
    + \left(\sum_{j \in \widehat{\mathcal{T}}} \sum_{a\in \mathcal{\hat{A}}_i}  \hat{p}_{ij}^a \hat{\pi}_i^a \bar{r}_{ij}^a\right)  -  \bw_n^T \bx_i \right]
\end{align*}
\begin{align} \label{eqn:mdp_mat_form}
    \bw_{n+1} = \bw_n + \alpha \widehat{X}\widehat{D} \left[\mathbf{\hat{m}} + \gamma \widehat{Q}\widehat{X}^T\bw_n
    + \mathbf{\hat{h}} - \widehat{X}^T\bw_n \right]
\end{align}

where $\widehat{X}$ denotes the matrix (of dimensions, length of the feature vector by $|\widehat{\sset}|$) with columns, $\bx_i \in \widehat{\mathcal{S}}$  and $\widehat{D}$ is a diagonal matrix (of dimensions, $|\widehat{\sset}|$ by $|\widehat{\sset}|$) with $\widehat{D}_{ii} = \hat{d}_i$.

Notice that Equation (\ref{eqn:mdp_mat_form}) is the same as Equation (\ref{eqn:mrp_mat_form}) since the considered
MRP and MDP settings are equivalent. Due to this
similarity, we omit the proof from here below as it
is identical to the Theorem \ref{th:mrp_td_convergence} proof.

\end{proof}

\section{Convergence of Batch Linear PSEC-TD(0) to the PSEC-MDP-CE Fixed-Point}
\label{sec:ris_td_convergence_proof}
\subsection{PSEC Correction Applied to the New Estimate}
\label{sec:psec_supp_new_est}

We now show that batch linear PSEC-TD(0) converges to the policy corrected MDP-CE established in Equation (\ref{eq:risceefixedpoint}), which is equivalent to Equation (\ref{eq:ris-mdp-dp})

\thristdconvergence*
\begin{proof}
The proof for PSEC-TD(0) follows in large part the structure of the proof for TD(0).
Below we highlight the salient points in the proof.

Batch linear PSEC-TD(0) makes an update
to the weight vector, $\bw_n$ (of dimension, length of the feature vector), after each presentation of the
batch:
\begin{align*}
    \bw_{n+1} &= \bw_n + \sum_{\tau \in \mathcal{D}} \sum_{t=1}^{L_\tau} \alpha  \left[ \hat{\rho}_t (\bar{r}_t + \gamma \bw_n^T \bx_{t+1}) - \bw_n^T \bx_t\right] \bx_t
\end{align*}
where $\mathcal{D}$ is the batch of episodes, $L_\tau$ is the length of each
episode $\tau$, $\hat{\rho}_t$
is the PSEC correction weight at time $t$ for a given episode
$\tau$, and $\alpha$ is the learning rate. 





We can re-write the whole presentation of the batch
of data in terms of the number of times there was a 
transition from state $i$ to state $j$ when taking
action $a$ in the batch i.e. $\hat{c}_{ij}^a = \hat{d}_i \hat{\pi}_i^a \hat{p}_{ij}^a$, where $\hat{d}_i$ is the
number of times state $i \in \widehat{\mathcal{N}}$ appears
in the batch.

\begin{align*}
    \bw_{n+1} &= \bw_n + \sum_{\tau \in \mathcal{D}} \sum_{t=1}^{L_\tau} \alpha  \left[ \hat{\rho}_t (\bar{r_t} + \gamma \bw_n^T \bx_{t+1}) - \bw_n^T \bx_t\right] \bx_t\\
    &= \bw_n + \alpha\sum_{i \in \widehat{\mathcal{N}}} \sum_{j \in \widehat{\mathcal{N}} \cup \widehat{\mathcal{T}}} \sum_{a\in \mathcal{\hat{A}}_i} \hat{c}_{ij}^a  \left[ \hat{\rho}_i^a (\bar{r}_{ij}^a + \gamma \bw_n^T \bx_j) - \bw_n^T \bx_i\right] \bx_i \\
    &= \bw_n + \alpha\sum_{i \in \widehat{\mathcal{N}}} \sum_{j \in \widehat{\mathcal{N}} \cup \widehat{\mathcal{T}}} \sum_{a \in \mathcal{\hat{A}}_i} \hat{d}_i \hat{\pi}_i^a \hat{p}_{ij}^a  \left[ \hat{\rho}_i^a (\bar{r}_{ij}^a + \gamma \bw_n^T \bx_j) - \bw_n^T \bx_i\right] \bx_i\\
    &= \bw_n + \alpha\sum_{i \in \widehat{\mathcal{N}}} \sum_{j \in \widehat{\mathcal{N}} \cup \widehat{\mathcal{T}}} \sum_{a \in \mathcal{\widehat{A}}_i} \hat{d}_i \hat{\pi}_i^a \hat{p}_{ij}^a  \left[ \frac{\pi_i^a}{\hat{\pi}_i^a} (\bar{r}_{ij}^a + \gamma \bw_n^T \bx_j) - \bw_n^T \bx_i\right] \bx_i\\
    &= \bw_n + \alpha\sum_{i \in \widehat{\mathcal{N}}} \sum_{j \in \widehat{\mathcal{N}} \cup \widehat{\mathcal{T}}} \sum_{a \in \mathcal{\hat{A}}_i} \hat{d}_i \hat{p}_{ij}^a \pi_i^a(\bar{r}_{ij}^a + \gamma \bw_n^T \bx_j) \bx_i - \alpha\sum_{i \in \widehat{\mathcal{N}}} \sum_{j \in \widehat{\mathcal{N}} \cup \widehat{\mathcal{T}}} \sum_{a \in \mathcal{\hat{A}}_i} \hat{d}_i \hat{\pi}_i^a \hat{p}_{ij}^a (\bw_n^T \bx_i) \bx_i\\
    &= \bw_n + \alpha\sum_{i \in \widehat{\mathcal{\widehat{\mathcal{N}}}}} \hat{d}_i \bx_i \sum_{j \in \widehat{\mathcal{N}} \cup \widehat{\mathcal{T}}} \sum_{a\in \mathcal{\hat{A}}_i}  \hat{p}_{ij}^a \pi_i^a(\bar{r}_{ij}^a + \gamma \bw_n^T \bx_j)  - \alpha\sum_{i \in \widehat{\mathcal{N}}} \hat{d}_i (\bw_n^T \bx_i)\bx_i  \sum_{j \in \widehat{\mathcal{N}} \cup \widehat{\mathcal{T}}} \sum_{a\in \mathcal{\hat{A}}_i}  \hat{\pi}_i^a \hat{p}_{ij}^a
    \end{align*}
    \begin{align*}
    &= \bw_n + \alpha\sum_{i \in \widehat{\mathcal{N}}} \hat{d}_i \bx_i\left[\sum_{j \in \widehat{\mathcal{N}} \cup \widehat{\mathcal{T}}} \sum_{a \in \mathcal{\hat{A}}_i}  \hat{p}_{ij}^a \pi_i^a(\bar{r}_{ij}^a + \gamma \bw_n^T \bx_j) -  \bw_n^T \bx_i \right] && \text{
    $\sum_{j \in \widehat{\mathcal{N}} \cup \widehat{\mathcal{T}}} \sum_{a\in \mathcal{\hat{A}}_i}  \hat{\pi}_i^a \hat{p}_{ij}^a = 1$}\\
    &= \bw_n + \alpha\sum_{i \in \widehat{\mathcal{N}}} \hat{d}_i \bx_i\left[\left(\sum_{j \in \widehat{\mathcal{N}}} \sum_{a\in \mathcal{\hat{A}}_i}  \hat{p}_{ij}^a \pi_i^a(\bar{r}_{ij}^a + \gamma \bw_n^T \bx_j)\right)
    + \left(\sum_{j \in \widehat{\mathcal{T}}} \sum_{a \in \mathcal{\hat{A}}_i}  \hat{p}_{ij}^a \pi_i^a\bar{r}_{ij}^a\right)  -  \bw_n^T \bx_i \right]
    && \text{If $\bx_j \in \mathcal{\widehat{T}},\bw_n^T \bx_j = 0$}
    \end{align*}
    \begin{align*}
    &= \bw_n + \alpha\sum_{i \in \widehat{\mathcal{N}}} \hat{d}_i \bx_i\left[\left(\sum_{j \in \widehat{\mathcal{N}}} \sum_{a\in \mathcal{\hat{A}}_i}  \hat{p}_{ij}^a \pi_i^a\bar{r}_{ij}^a\right) + \left(\gamma\sum_{j \in \widehat{\mathcal{N}}} \sum_{a\in \mathcal{\hat{A}}_i }  \hat{p}_{ij}^a \pi_i^a \bw_n^T \bx_j\right)
    + \left(\sum_{j \in \widehat{\mathcal{T}}} \sum_{a\in \mathcal{\hat{A}}_i}  \hat{p}_{ij}^a \pi_i^a\bar{r}_{ij}^a\right)  -  \bw_n^T \bx_i \right]\\
    &= \bw_n + \alpha \widehat{X}\widehat{D} \left[\mathbf{\hat{o}} + \gamma \widehat{U}\widehat{X}^T\bw_n
    + \mathbf{\hat{l}} - \widehat{X}^T\bw_n \right]
\end{align*}

where $\widehat{X}$ denotes the matrix (of dimensions, length of the feature vector by $|\widehat{\sset}|$) with columns, $\bx_i \in \widehat{\mathcal{S}}$  and $\widehat{D}$ is a diagonal matrix (of dimensions, $|\widehat{\sset}|$ by $|\widehat{\sset}|$) with $\widehat{D}_{ii} = \hat{d}_i$.

Assuming that as $n \rightarrow \infty$, $ (I - \alpha \widehat{X}^T \widehat{X} \widehat{D} (I - \gamma \widehat{U}))^n \rightarrow 0$, we can drop the second term and the sequence $\{\widehat{X}^T\bw_n\}$ converges to:
\begin{align*}
    \lim_{n\to \infty}\widehat{X}^T \bw_n &= \left(I - (I - \alpha \widehat{X}^T \widehat{X} \widehat{D} (I - \gamma \widehat{U}))\right)^{-1} (\alpha \widehat{X}^T \widehat{X}\widehat{D} (\mathbf{\hat{o}} + \mathbf{\hat{l}}))\\
    &= (I - \gamma \widehat{U})^{-1} \widehat{D}^{-1} (\widehat{X}^T\widehat{X})^{-1} \alpha^{-1} \alpha \widehat{X}^T \widehat{X} \widehat{D} (\mathbf{\hat{o}} + \mathbf{\hat{l}}) \\
    &= (I - \gamma \widehat{U})^{-1}(\mathbf{\hat{o}} + \mathbf{\hat{l}})\\
\lim_{n\to \infty}\mathbb{E}\left[\bx_i^T\bw_n\right] &= \left[(I - \gamma \widehat{U})^{-1}(\mathbf{\hat{o}} + \mathbf{\hat{l}})\right]_i, \forall i\in \mathcal{\hat{N}}
\end{align*}

What is left to show now is that as $n \rightarrow \infty$, $ (I - \alpha \widehat{X}^T \widehat{X} \widehat{D} (I - \gamma \widehat{U}))^n \rightarrow 0$, which we can show by following the steps shown for Equation (\ref{eq:conv_val}). Thus we prove convergence to the fixed-point (\ref{eq:ris-mdp-dp}).

\end{proof}
\subsection{Convergence to the MDP True Fixed-Point with Infinite Data}

With batch linear PSEC-TD(0) we have corrected for the
policy sampling error in batch linear TD(0). The remaining
inaccuracy of the policy sampling corrected 
certainty-equivalence fixed-point is due to the 
transition dynamics sampling error. In a model-free
setting, however, we cannot correct for this error in
the same way we corrected the policy sampling error.

We argue that as the batch size approaches infinite, the
maximum-likelihood estimate of the transition dynamics
will approach the true transition dynamics i.e. $\hat{p} \to p$. It then follows that
in expectation, the true value function will be reached. Thus,
the batch linear PSEC-TD(0) with an infinite batch size
will correctly converge to the true value function
fixed-point given by Equation (\ref{eq:truefixedpoint}).

\section{Additional Empirical Results}

In this section, we include additional results that were omitted in the main text due to space constraints.

\subsection{Tabular Setting: Discrete States and Actions}
\label{app:gridworld_add_results}
\subsubsection{Off-policy Results}
For off-policy TD(0), we always use the variant that applies the importance weight to the TD-error.

\begin{figure}[h]
    \centering
        \subfigure[Off-Policy TD]{\label{fig:off_grid}\includegraphics[scale=0.175]{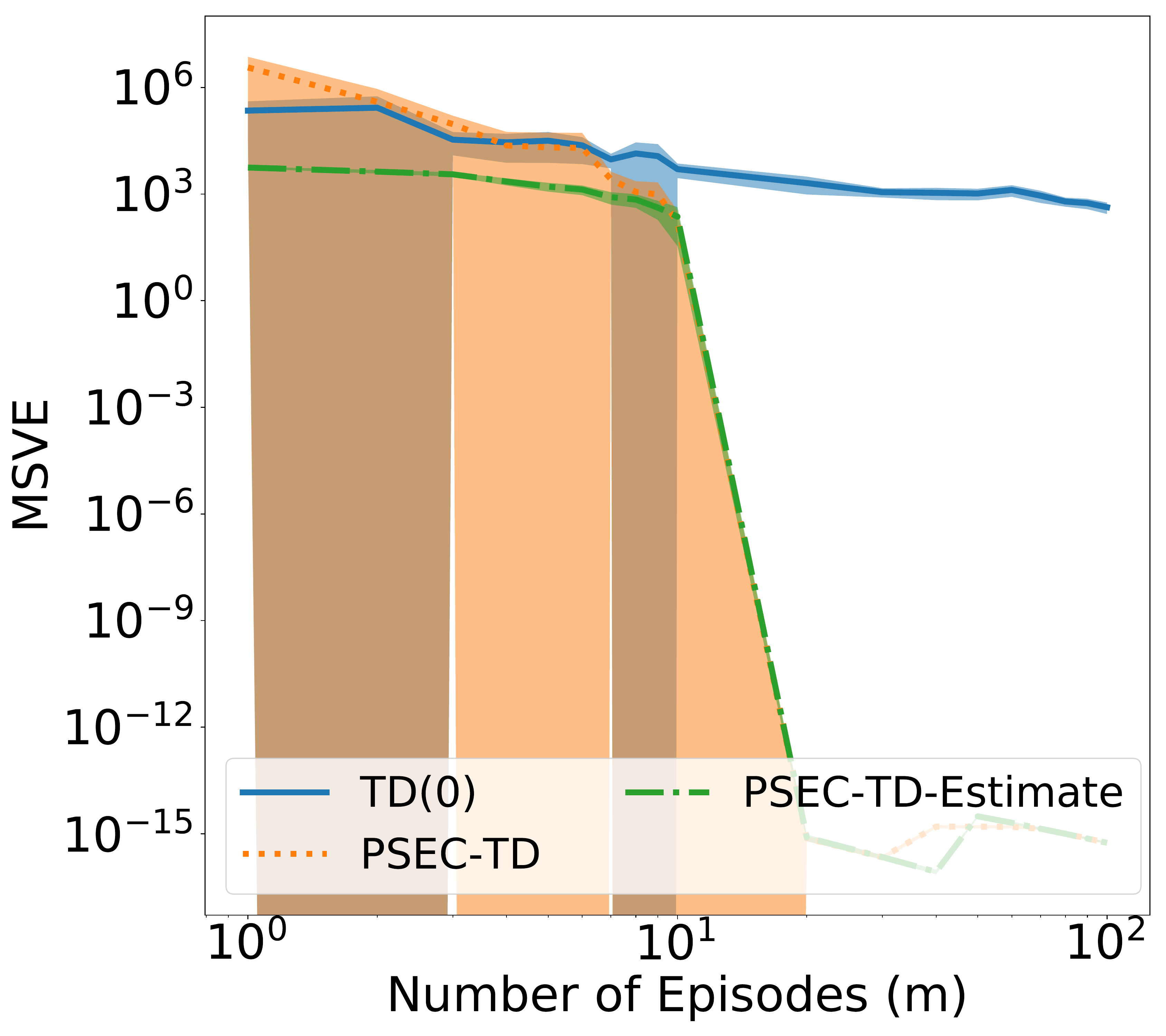}}
        \subfigure[Off-Policy LSTD]{\label{fig:off_lstd}\includegraphics[scale=0.175]{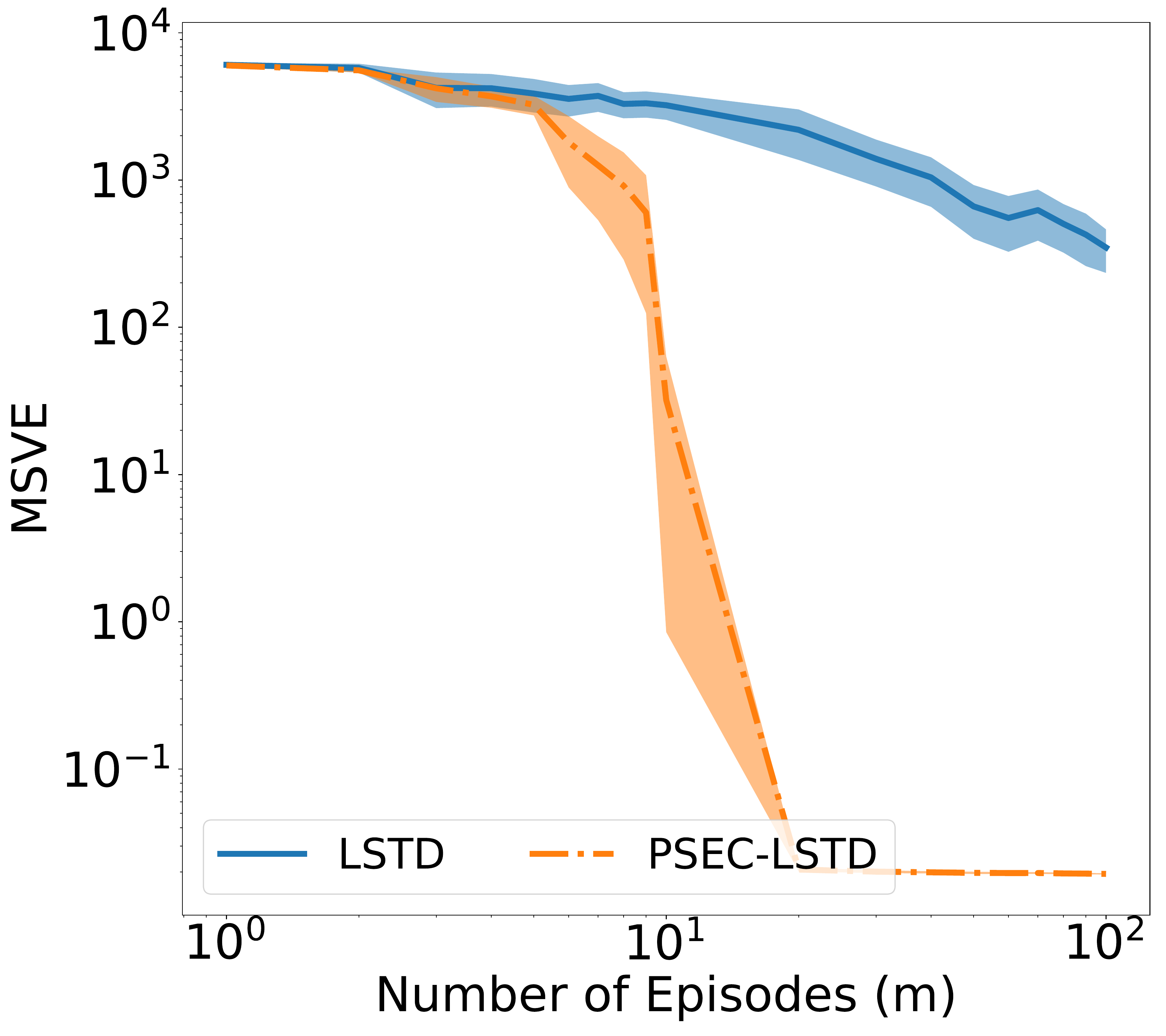}}
    \caption{\footnotesize Deterministic Gridworld  experiments.
    Both axes are log-scaled, leading to the asymmetric confidence intervals. Errors are computed over $50$ trials with $95\%$ confidence intervals.
   Figure \ref{fig:off_grid} and Figure \ref{fig:off_lstd}
    compare the final errors achieved by variants of PSEC-TD(0) and TD(0), and PSEC-LSTD(0) and LSTD(0) 
    respectively for varying batch size in the  off-policy case.}
\end{figure}

For off-policy TD(0) we only consider the PSEC-TD variant
as we found multiplying the new estimate by the weight was divergent. All methods show higher variance for the off-policy setting, however, PSEC-TD(0) variants still provide more accurate value function estimates.

\subsubsection{Effect of Environment Stochasticity}
\begin{figure}[H]
    \centering
        \includegraphics[scale=0.175]{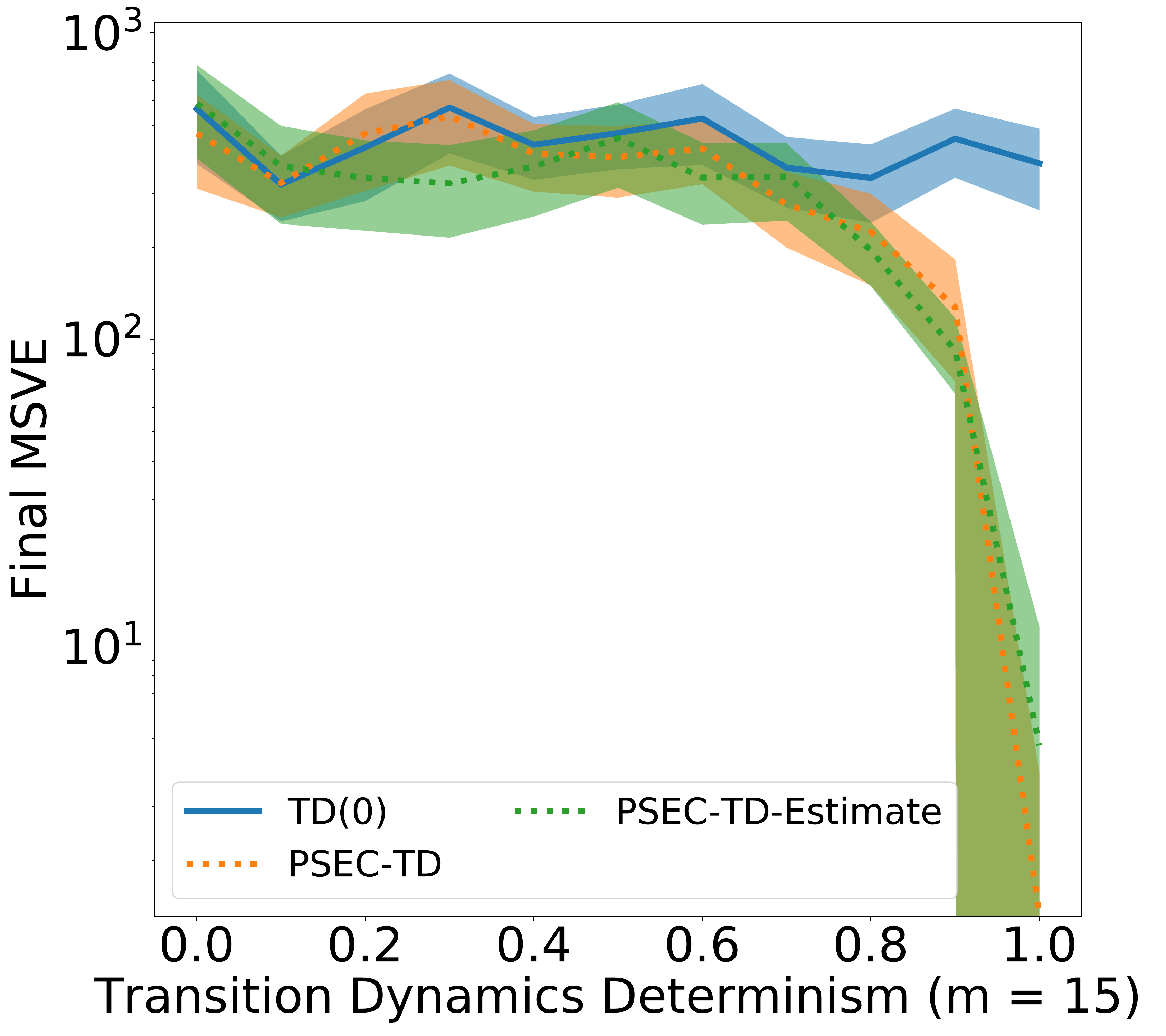}
    \caption{\footnotesize Additional Gridworld experiments. Errors  are computed over $50$ trials with $95\%$ confidence intervals. Figure \ref{fig:gridworld_stoch} is
    a $y$-axis log scaled graph that shows the final error (averaged over $100$ trials) achieved by the two variants of PSEC-TD(0) and TD(0)
    for a given batch size ($15$ episodes) with varying levels of determinism
    of the transition dynamics.}
    \label{fig:gridworld_stoch}
\end{figure}

According
to Theorem \ref{th:mdp_td_convergence}, TD suffers from policy \textit{and}
transition dynamics sampling error. We study this
observation through Figure \ref{fig:gridworld_stoch}, which
illustrates how the performance of PSEC
changes with different levels of
transition dynamics determinism for a fixed batch
size. In
Gridworld, the
determinism is varied according to a parameter, $p$,
where the environment becomes purely deterministic
or stochastic as $p \to 1$ or $p \to 0$ respectively.  From 
Theorem \ref{th:ris_td_convergence} we expect PSEC
to fully correct for the policy sampling error but not transition dynamics sampling error.
Figure \ref{fig:gridworld_stoch} confirms that PSEC
is achieves a lower final MSVE than TD as $p \to 1$.
As $p \to 0$, the transition dynamics 
become the dominant source of sampling error and PSEC-TD(0) and TD(0) perform similarly.

\subsection{Function Approximation: Continuous States and Discrete Actions (CartPole)}
\label{app:cartpole_exps}

In each experiment below, unless stated, the following components were fixed: a batch size of $10$ episodes, the value function was represented
with a neural network of single hidden layer of $512$ neurons using tanh activation, the gradients were normalized to unit norm 
before the gradient descent step was performed, we used a learning rate of $1.0$ and decayed the learning rate by $10\%$ every
$50$ presentations of the batch to the algorithm. The true MSVE was computed by $200$ Monte Carlo rollouts for
$150$ sampled states.

\subsubsection{Data Efficiency}

From Figure \ref{fig:exp_cart_msve_batch}. Both algorithms used a learning rate of $1.0$ and decayed
the learning rate by $5\%$ every $10$ presentations of the batch. The PSEC model architecture was a neural
network with $3$ hidden layers with $16$ neurons each. Batch sizes of $10$, $50$, and $500$ episodes
used a learning rate of $0.0125$, batch size of $100$ used $0.003125$, and batch size of $1000$ used
$0.001563$.

\subsubsection{Effect of Value Function Model Architecture}

From Figure \ref{fig:exp_psec_vs_vf}. PSEC used a model architecture of $3$ hidden layers with $16$ neurons each and tanh activation, and learning
rate of $0.025$. 

\subsubsection{Effect of PSEC Learning Rate}

\begin{figure}[]
    \centering
    \includegraphics[scale=0.16]{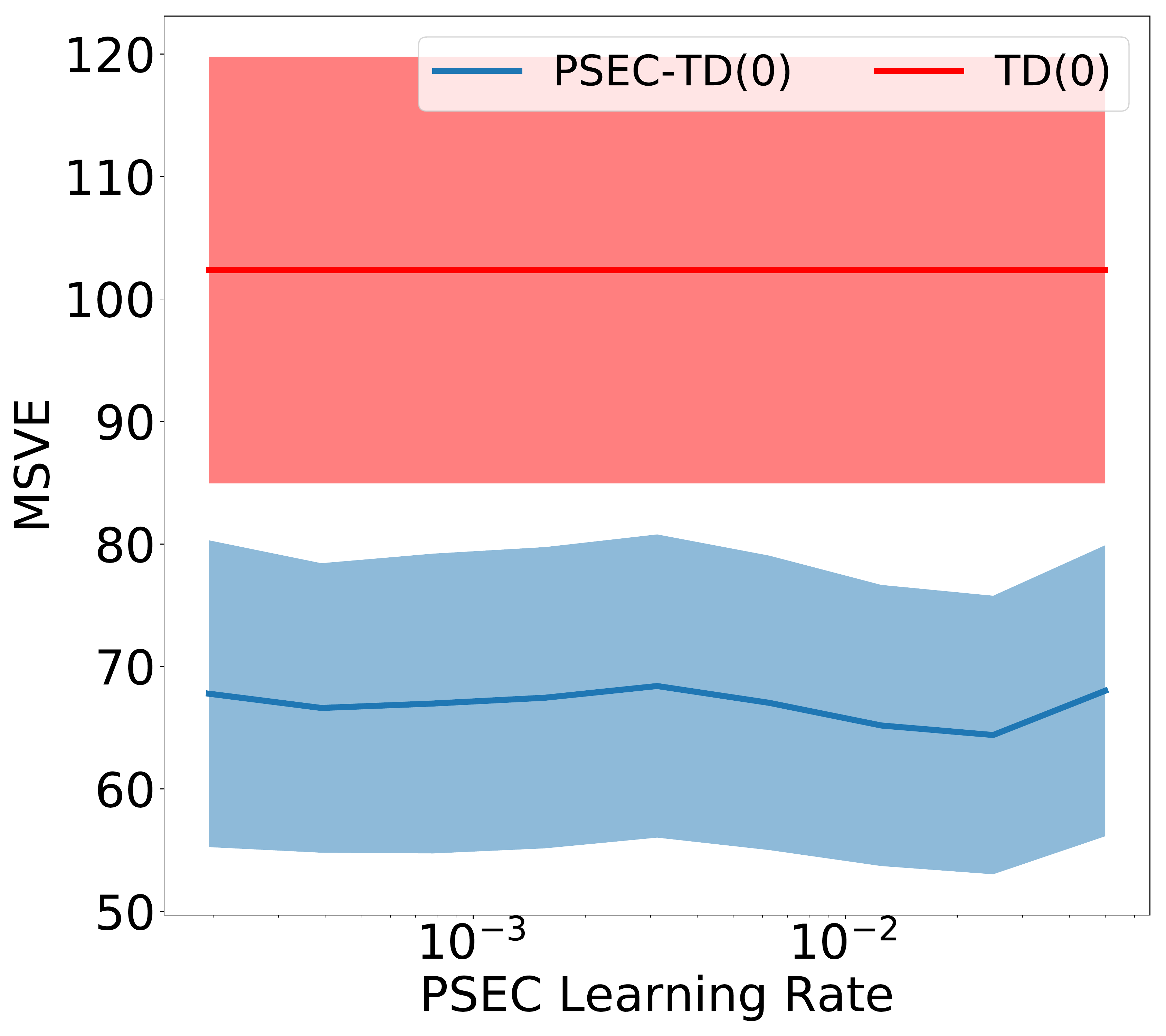}
    \caption{\footnotesize Compares data efficiency of PSEC with varying learning rates against TD respectively on CartPole. Results use a batch size of $10$ episodes, and
    results are averaged over $300$ trials with $95\%$ confidence error bars.}
    \label{fig:exp_psec_vs_lr}
\end{figure}

Figure \ref{fig:exp_psec_vs_lr} compares the data efficiency of PSEC-TD vs TD for varying learning rates of the PSEC
policy, while holding the value function, PSEC model, and behavior policy
architectures fixed, on CartPole. Since TD does not use PSEC,
its error for a given batch size is independent of the PSEC learning rate. From above, PSEC appears
to be relatively stable in its improvement over TD regardless of the learning rate used. The PSEC policy used in this experiment was a neural network with: $3$  hidden layers with $16$ neurons each and tanh activation. 

\subsubsection{Effect of PSEC Model Architecture}
From Figure \ref{fig:exp_psec_arch}. PSEC used a learning
rate of $0.025$. The chosen PSEC neural network models are with respect to the behavior policy described earlier, a 
$2$ hidden layered with $16$ neurons architecture.

\subsubsection{Varying PSEC Training Style}

\begin{figure}[H]
    \centering
    \includegraphics[scale=0.4]{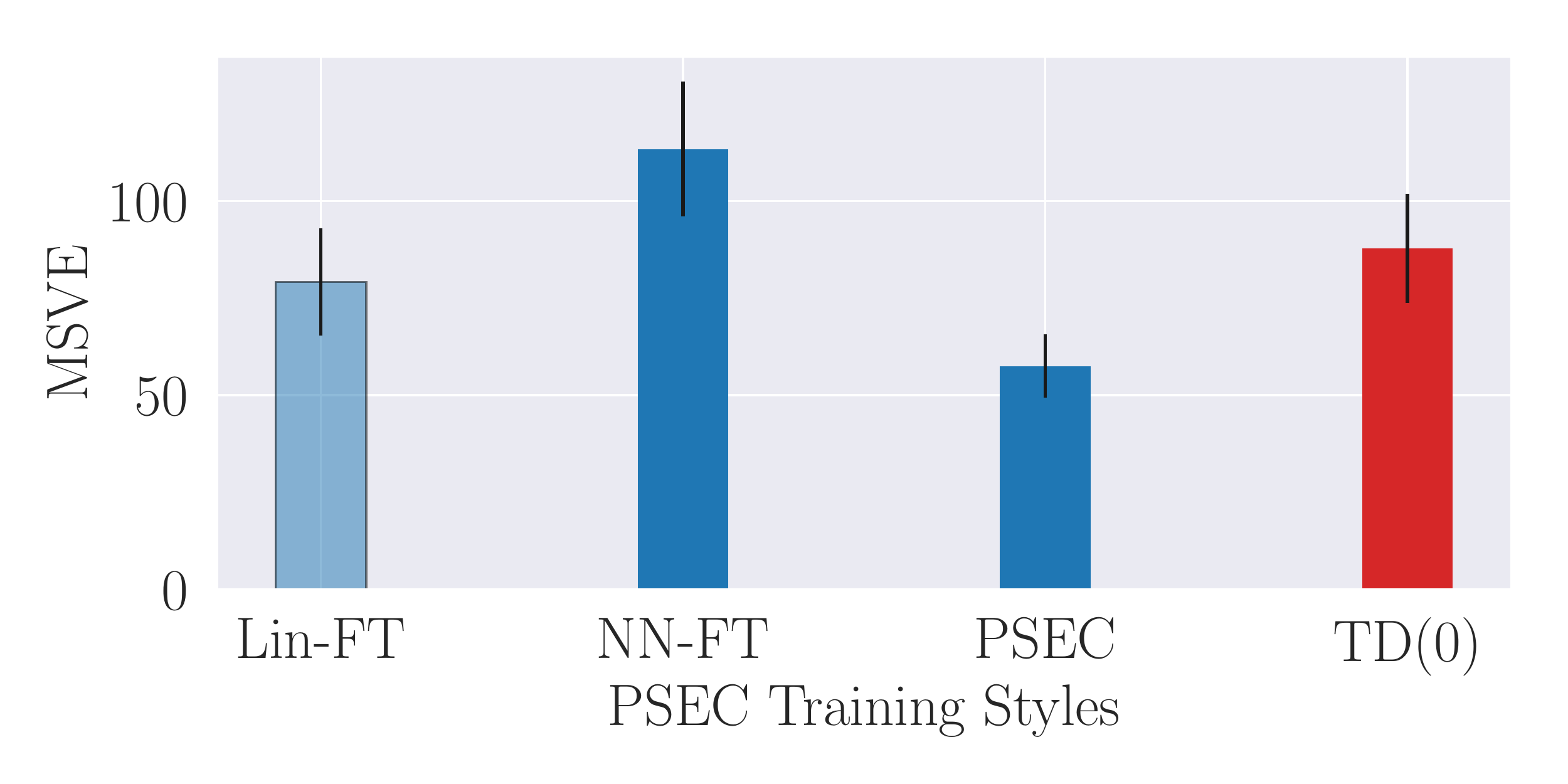}
    \caption{Comparing data efficiency of varying training styles of PSEC against TD for a fixed batch size of $10$ episodes. 
    Results shown are averaged over $300$ trials
    and shaded region is $95\%$ confidence. Darker shades represent statistically significant result.}
    \label{fig:exp_varying_spec}
\end{figure}

Figure \ref{fig:exp_varying_spec} illustrates the data efficiency of three variants of PSEC, while holding the
value-function PSEC model, and behavior policy architectures fixed. All three variants
use the same PSEC model architecture as that of the behavior policy, and each used a learning rate of $0.025$ with
tanh activation. The
three variants are as follows: 1) Lin-FT is when PSEC initializes the PSEC model to the weights of the behavior policy
and trains on the batch of data by finetuning only the last linear layer, 2) NN-FT is when PSEC initializes the
PSEC model to the weights of the behavior policy but finetunes the all the weights of the network, and 3) PSEC uses
the same
training style in the previous experiments, where the model is initialized randomly and all the weights are
tuned. We found that Lin-FT performed similarly to TD with a statistically insignificant improvement over TD; we believe this may be so since Lin-FT is initialized to 
the behavior policy and since there are only few weights to change in the linear layer, the newly learned
Lin-FT is still similar to the behavior policy, which would produce PSEC corrections close to 1 (equivalent to TD). Interestingly, tuning all the weights of the neural network did better when the model
was initialized randomly versus when it was initialized to the behavior policy.

\subsubsection{Effect of Underfitting and Overfitting during PSEC Policy Training}

This experiment attempts to give an understanding of how the MSVE achieved by each PSEC variant is
dependent on the number of epochs the PSEC model was trained for, while holding the value-function, PSEC
and behavior policy architectures fixed. We conduct the experiment as follows:
the PSEC algorithm performs $10$ gradient descent steps (epochs) on the full batch of data, after which the resulting training
and validation mean cross-entropy losses are plotted along with the MSVE achieved by that trained PSEC
policy. For example, after $10$ epochs, the training and validation loss of the PSEC model was nearly $0.5$
and the model achieved an MSVE of nearly $150$.

\begin{figure}[H]
    \centering
        \subfigure{\includegraphics[scale=0.2]{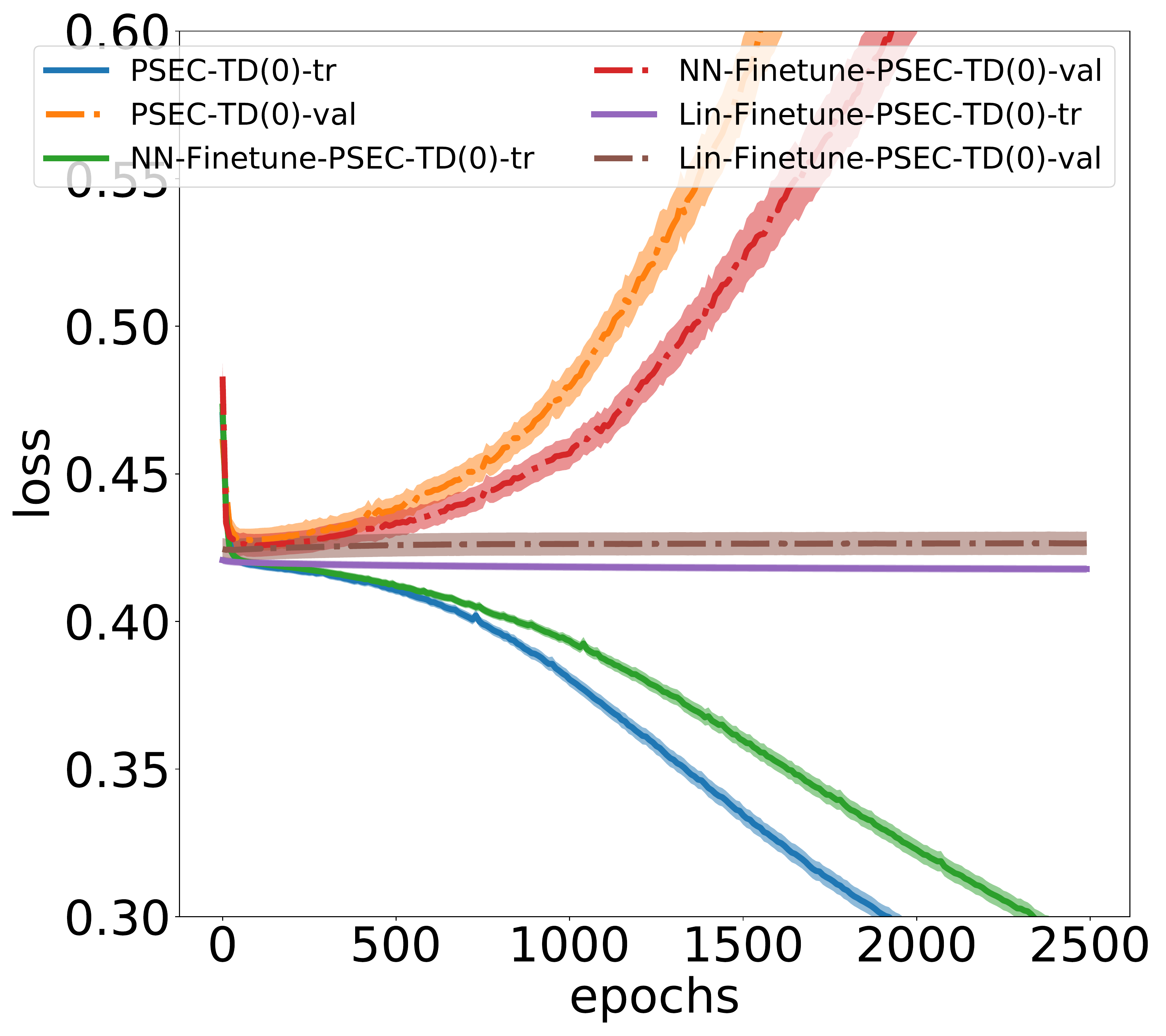}}
        \subfigure{\includegraphics[scale=0.2]{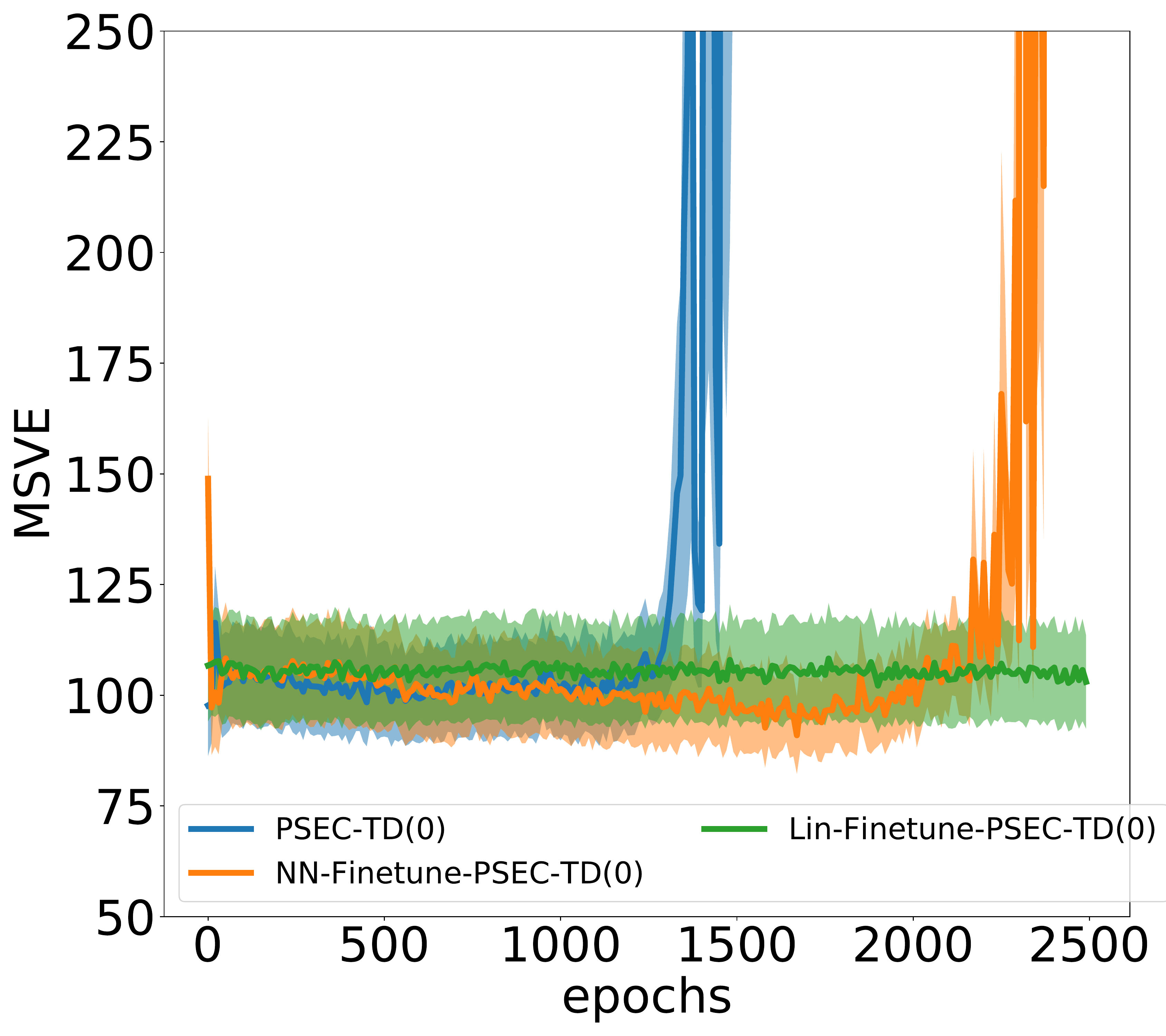}}
\caption{Comparing MSVE achieved and cross entropy loss (training (tr) and validation (val)) by variants of PSEC after each epoch of training for a fixed batch size of $10$ episodes. Results shown are averaged over $50$ trials
    and shaded region is $95\%$ confidence.}
	\label{fig:exp_psec_overfitting}
\end{figure}

Since computing the MSVE can be computationally expensive, as it requires processing the batch until the value function
converges, we change the learning rate decay schedule to starting with
a learning rate of $1.0$ but decaying learning rate by $50\%$ every $50$ presentations of the full batch
to the algorithm (this change is also the
reason why these results may be different from the ones shown earlier). All PSEC variants used a learning rate of $0.025$ and PSEC model architecture of $2$ hidden layers
with $16$ neurons each and tanh activation.

Figure \ref{fig:exp_psec_overfitting} suggests that performance of PSEC, regardless of the variant, depends on the
number of epochs it was trained for. Naturally, we do want to fit sufficiently well to the data, and the graph suggests
that some overfitting is tolerable. However, if overfitting becomes extreme, PSEC's performance suffers, resulting
in MSVE nearly $1000$ times larger than the minimum error achieved (not shown for clarity). From the graph, we
can see that the PSEC variant, which is initialized randomly, starts to extremely overfit before the NN-finetune variant
does, causing its MSVE to degrade before that of NN-finetune variant. We also see that the Lin-finetune variant
is not able to overfit since the last linear layer may not be expressible enough to overfit, causing it to have a relatively
stable MSVE across all epochs.

\subsubsection{Effect of Behavior Policy Distribution}
So far, PSEC has used a function approximator of the similar function class as that of the behavior policy i.e.
both the PSEC models and the behavior policy were neural networks of similar architectures. In this experiment, we evaluate
the performance of PSEC with a neural network policy when the behavior policy that models a discontinuous function 
generates a larger batch size
of $100$ episodes, while the value function and PSEC model architectures are fixed. In particular, we use
a behavior policy in CartPole that does the following: if the sign of the pole angle is negative, move left with probability
$0.75$ and right with probability $0.25$, and if the sign of the pole angle is positive, move right with probability $0.75$ and left with
probability $0.25$. The PSEC policy is a neural network with $3$ hidden layers with $16$ neurons each with tanh activation.
\begin{figure}[H]
    \centering
    \includegraphics[scale=0.5]{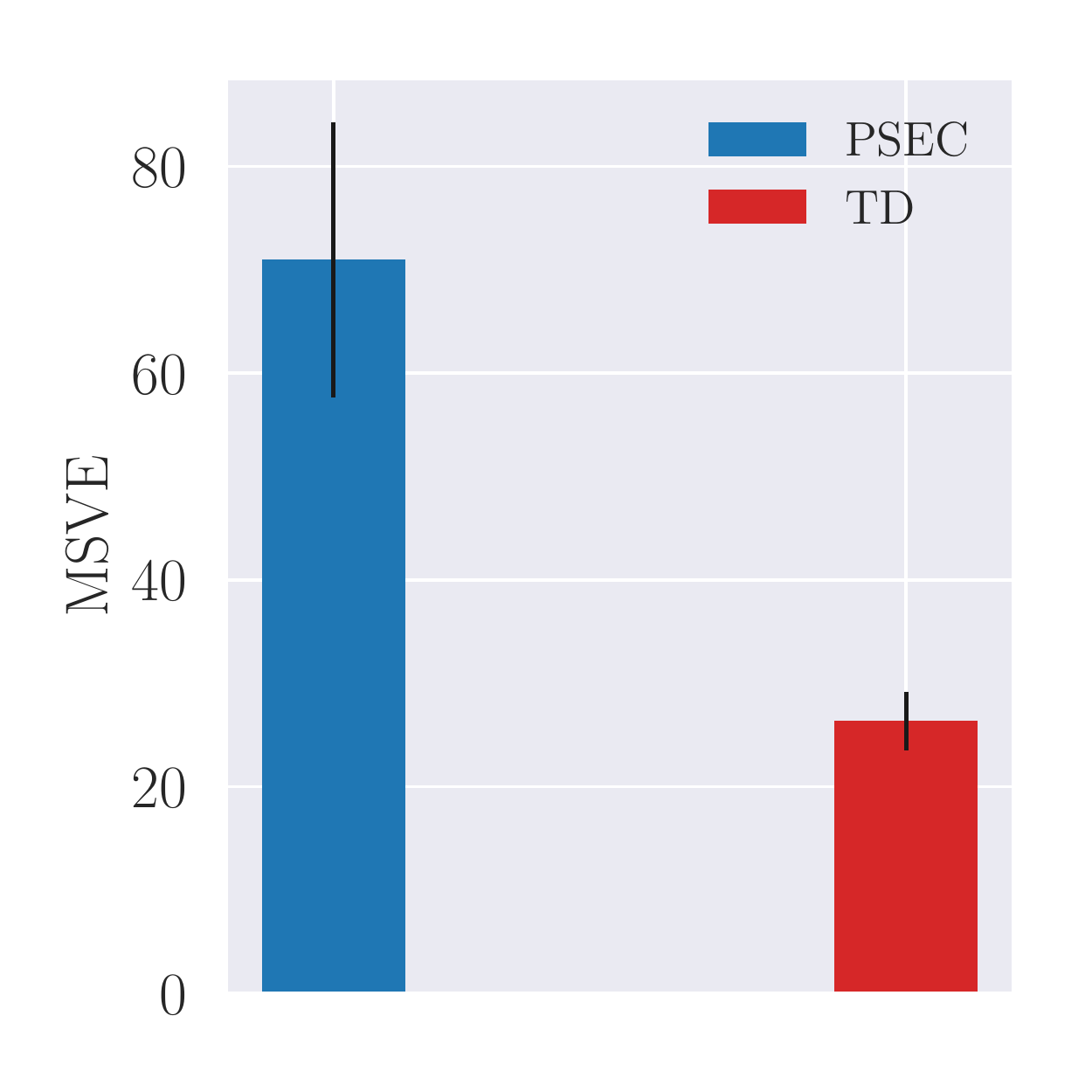}
    \caption{Comparing data efficiency of PSEC against TD for a fixed batch size of $100$ episodes when the
    behavior policy models a discontinuous function.
    Results shown are averaged over $30$ trials
    and shaded region is $95\%$ confidence.}
    \label{fig:exp_hard_policy}
\end{figure}

Figure \ref{fig:exp_hard_policy} shows that PSEC performs much worse than TD when the behavior policy is 
the discontinuous type function described above. We reason that the neural network finds it difficult to compute
the MLE of the data since this discontinuous distribution is ``hard" to model; therefore, producing incorrect 
PSEC weights, which degrade its  performance. While we can largely ignore the distribution of the behavior policy,
this experiment shows that PSEC may suffer in situations like the one described.

\subsection{Function Approximation: Continuous States and Actions (InvertedPendulum)}
\label{app:inverted_pend_exps}

In each experiment below, unless stated, the following components were fixed: a batch size of $20$ episodes, the value function was represented
with a neural network of $2$ hidden layer with $64$ neurons each using tanh activation, 
the gradients were normalized to unit norm 
before the gradient descent step was performed, we used a learning rate of $1.0$ and decayed the learning rate by $5\%$ every
$10$ presentations of the batch to the algorithm. The true MSVE was computed by $100$ Monte Carlo rollouts for
$100$ sampled states.

\subsubsection{Data Efficiency}

From Figure \ref{fig:exp_invpen_msve_batch}. The PSEC model architecture was a neural
network with $2$ hidden layers with $64$ neurons each. Batch
sizes $10, 50, 100, 500, 1000$ used a learning rate of $0.000781$. Batch
size of $10$ used an L$2$ weight penalization
of $0.02$. All used a value
function model architecture of $3$ hidden layers with $64$
neurons each.

\subsubsection{Effect of Value Function Model Architecture}
\begin{figure}[H]
    \centering
    \subfigure[InvertedPendulum]{\label{fig:exp_psec_vs_vf_cont_act}\includegraphics[scale=0.35]{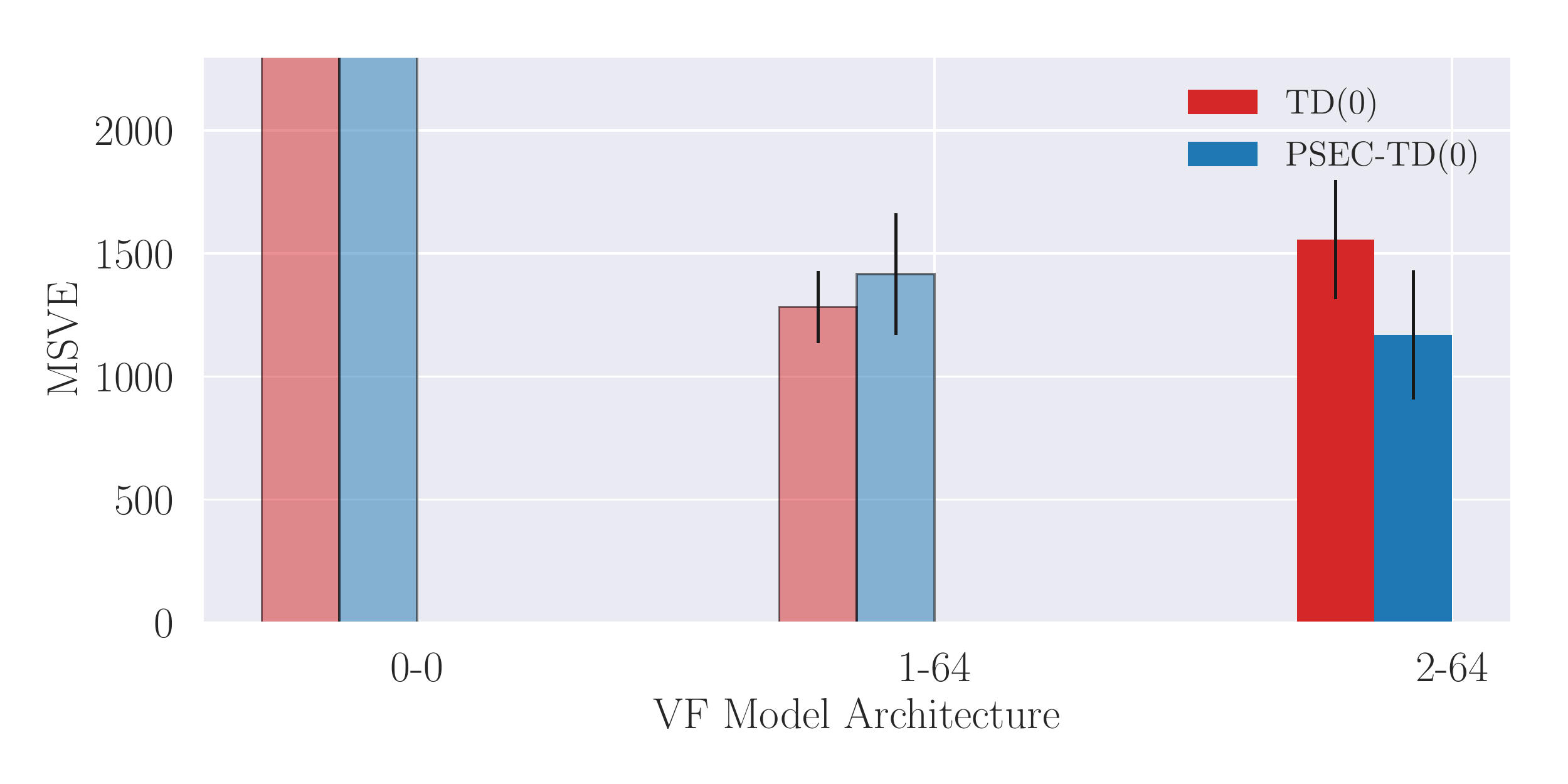}}
    \caption{\footnotesize Comparing data efficiency of PSEC with varying VF model architectures against TD. Figure \ref{fig:exp_psec_vs_vf_cont_act} use batch size of $20$ episodes , and results shown are averaged over $350$ trials respectively with error bars of $95\%$ confidence. Darker shades represent statistically significant result. The label on
    the $x$ axis shown is ($\#$ hidden layers - $\#$ neurons).}
\end{figure}

From Figure \ref{fig:exp_psec_vs_vf_cont_act}. The PSEC policy is
 $2$ hidden layers with $64$ neurons each and used a learning rate of $0.000781$.

\subsubsection{Effect of PSEC Learning Rate}
\begin{figure}[H]
    \centering
    \subfigure[]{\includegraphics[scale=0.18]{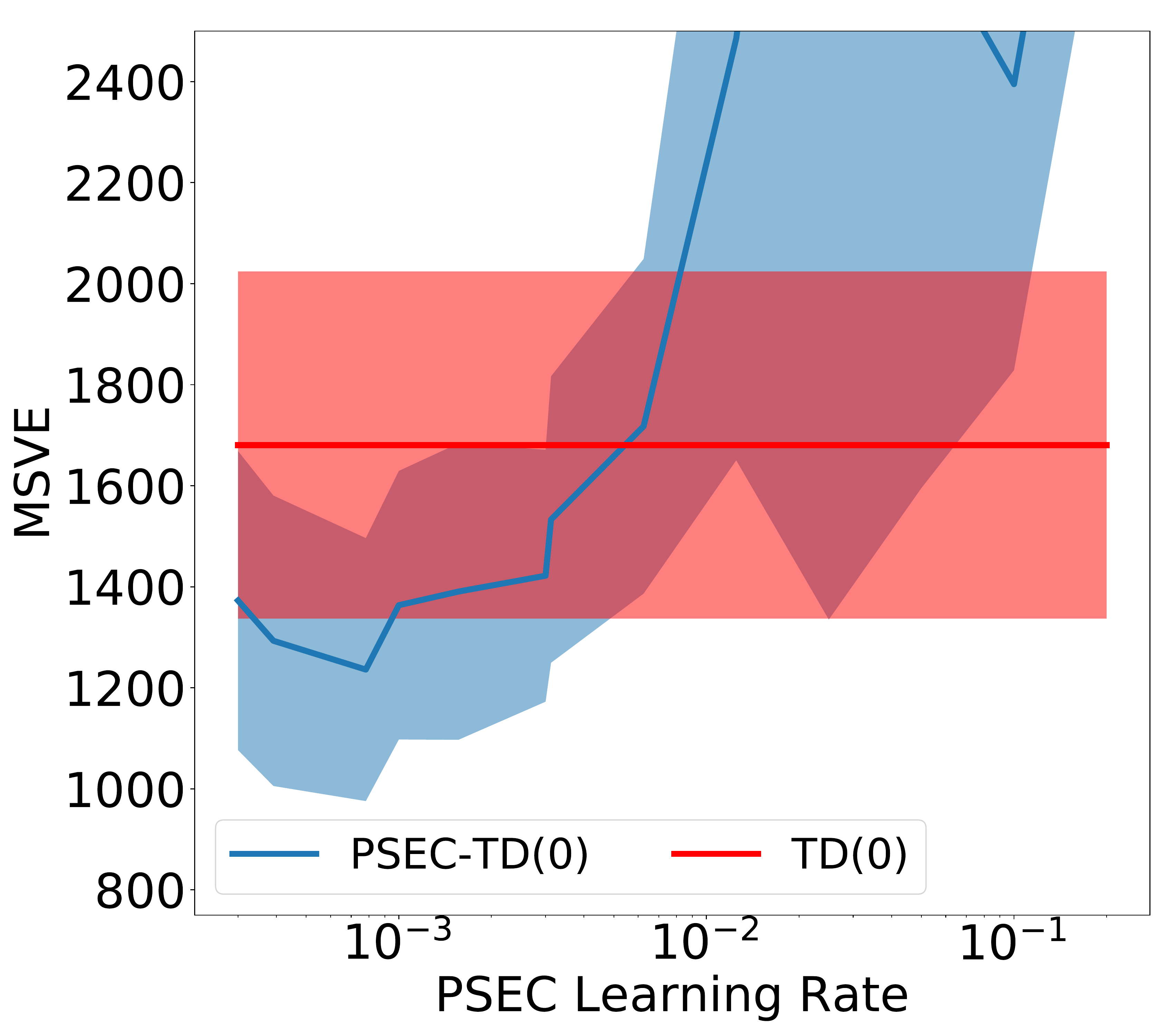}}
    \caption{Comparing data efficiency of PSEC with varying learning rates against TD for a fixed batch size of $20$ episodes. 
    Results shown are averaged over $200$ trials
    and error bar is $95\%$ confidence.}
    \label{fig:exp_psec_vs_lr_cont_act}
\end{figure}

Figure \ref{fig:exp_psec_vs_lr_cont_act} compares the data efficiency of PSEC-TD to TD with varying learning rates
for PSEC, while holding the PSEC and value function architecture fixed. Unlike earlier, the PSEC learning rate heavily
influences the learned value function in the continuous state and action setting. In general, PSEC performance heavily
degraded when the learning rate increased ($y$-axis limited for clarity). Among the tested learning rates, $0.000781$ was the optimal, giving
a statistically significant result.

\subsubsection{Effect of PSEC Model Architecture}

\begin{figure}[H]
    \centering
    \includegraphics[scale=0.5]{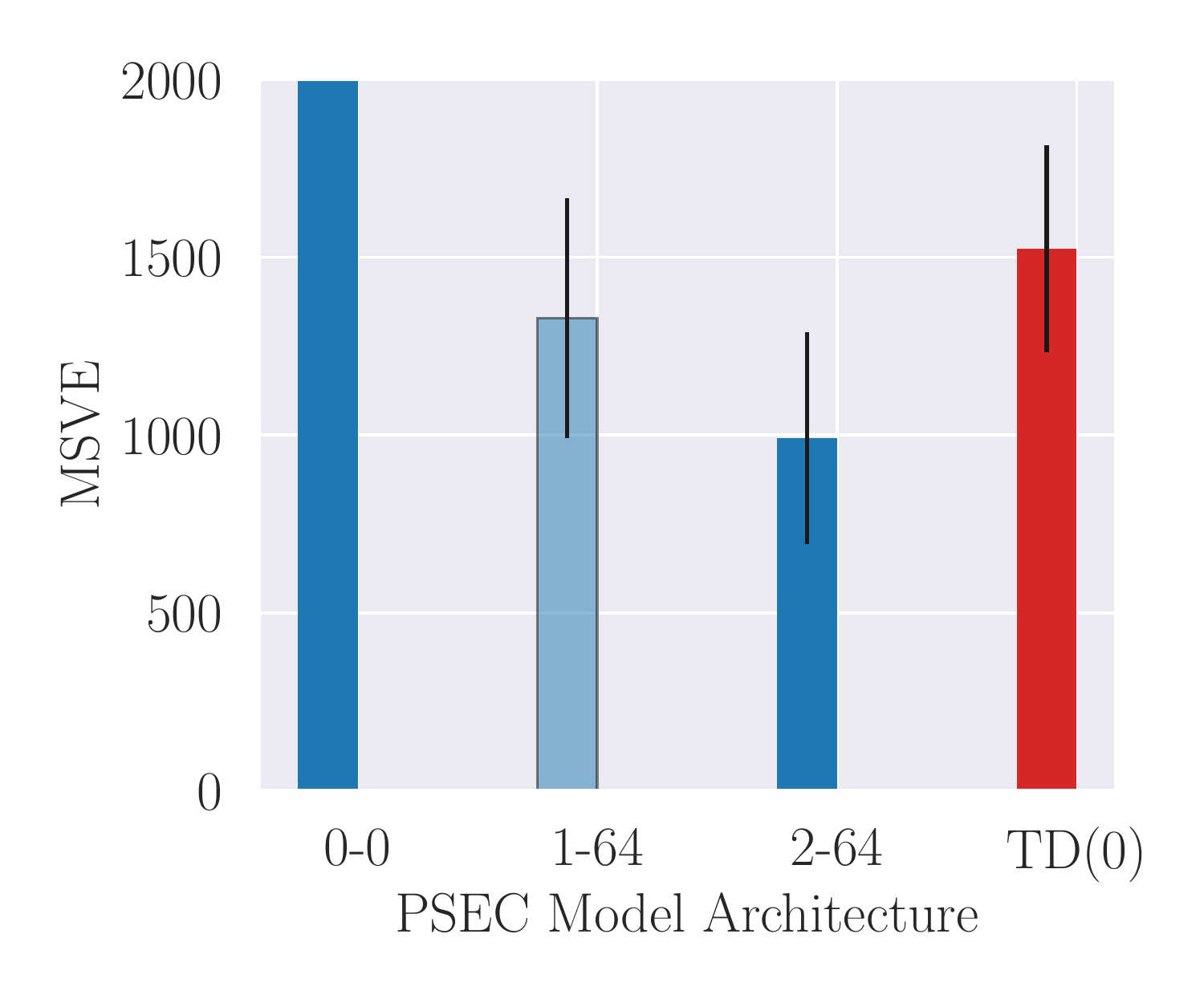}
    \caption{Comparing data efficiency of PSEC with varying model architectures against TD for a fixed batch size of $20$ episodes. 
    Results shown are averaged over $200$ trials
    and error bar is $95\%$ confidence. Darker shades represent statistically significant result. The label on
    the $x$ axis shown is ($\#$ hidden layers - $\#$ neurons). The value function represented by $0$-$0$ is
    a linear mapping with no activation function.}
    \label{fig:exp_psec_arch_cont_act}
\end{figure}

Figure \ref{fig:exp_psec_arch_cont_act} compares the data efficiency of PSEC-TD with TD with varying PSEC
model architectures, while holding the value-function and behavior policy architectures fixed. All the shown
PSEC architectures used a learning rate of learning rate $0.000781$.  Similar to our earlier findings,
a more expressive network was able to better model the batch of data and produce a statistically significant
improvement over TD. Less expressive PSEC models performed worse than TD, and any improvement
was statistically insignificant. Note that the linear architecture used produced an MSVE of $\sim 5800$ (not shown for clarity) 
and its poor data efficiency with respect to TD(0) was statistically significant. 

\subsubsection{Effect of Underfitting and Overfitting during PSEC Policy Training}

\begin{figure}[H]
    \centering
        \subfigure{\includegraphics[scale=0.15]{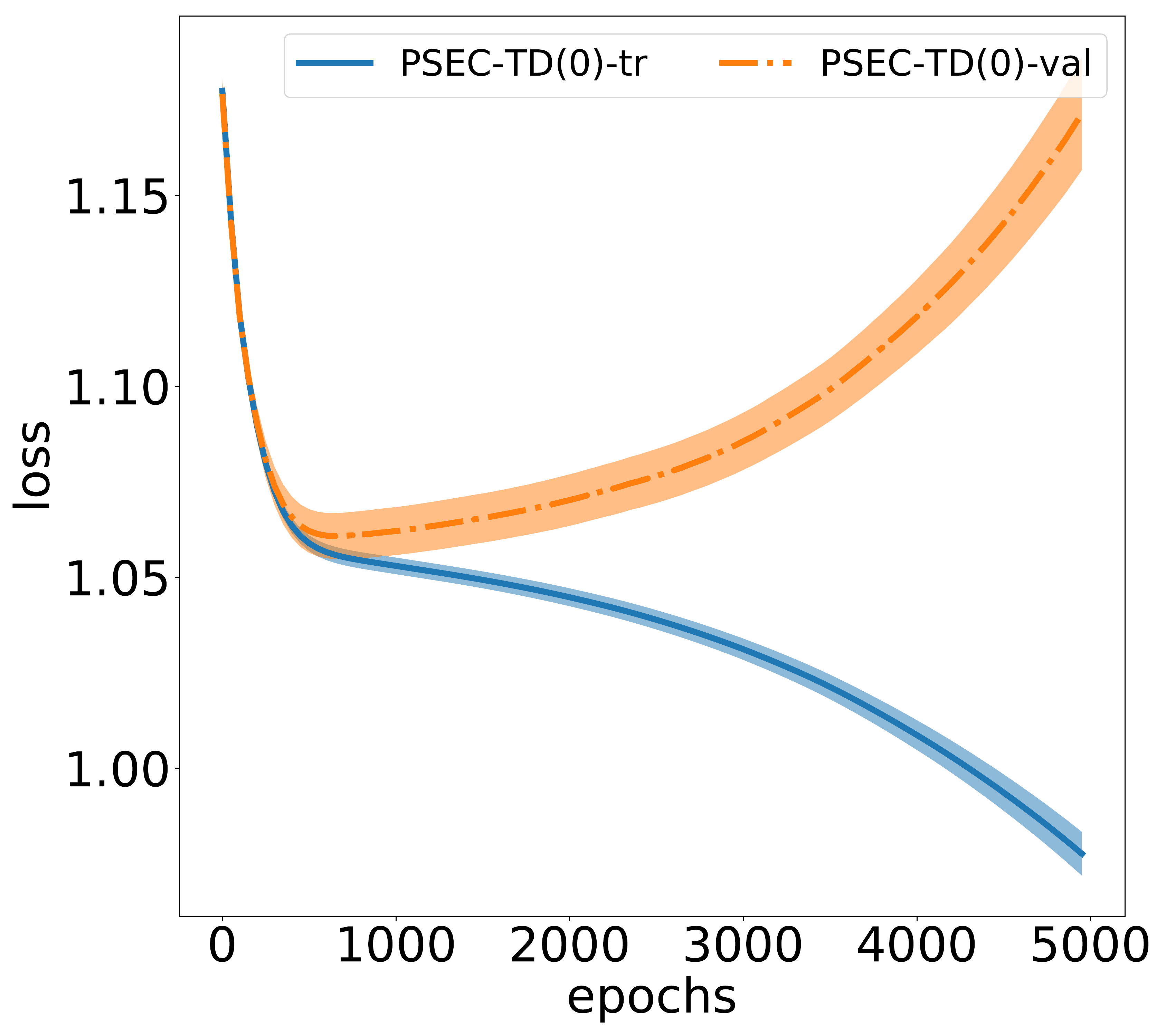}}
        \subfigure{\includegraphics[scale=0.15]{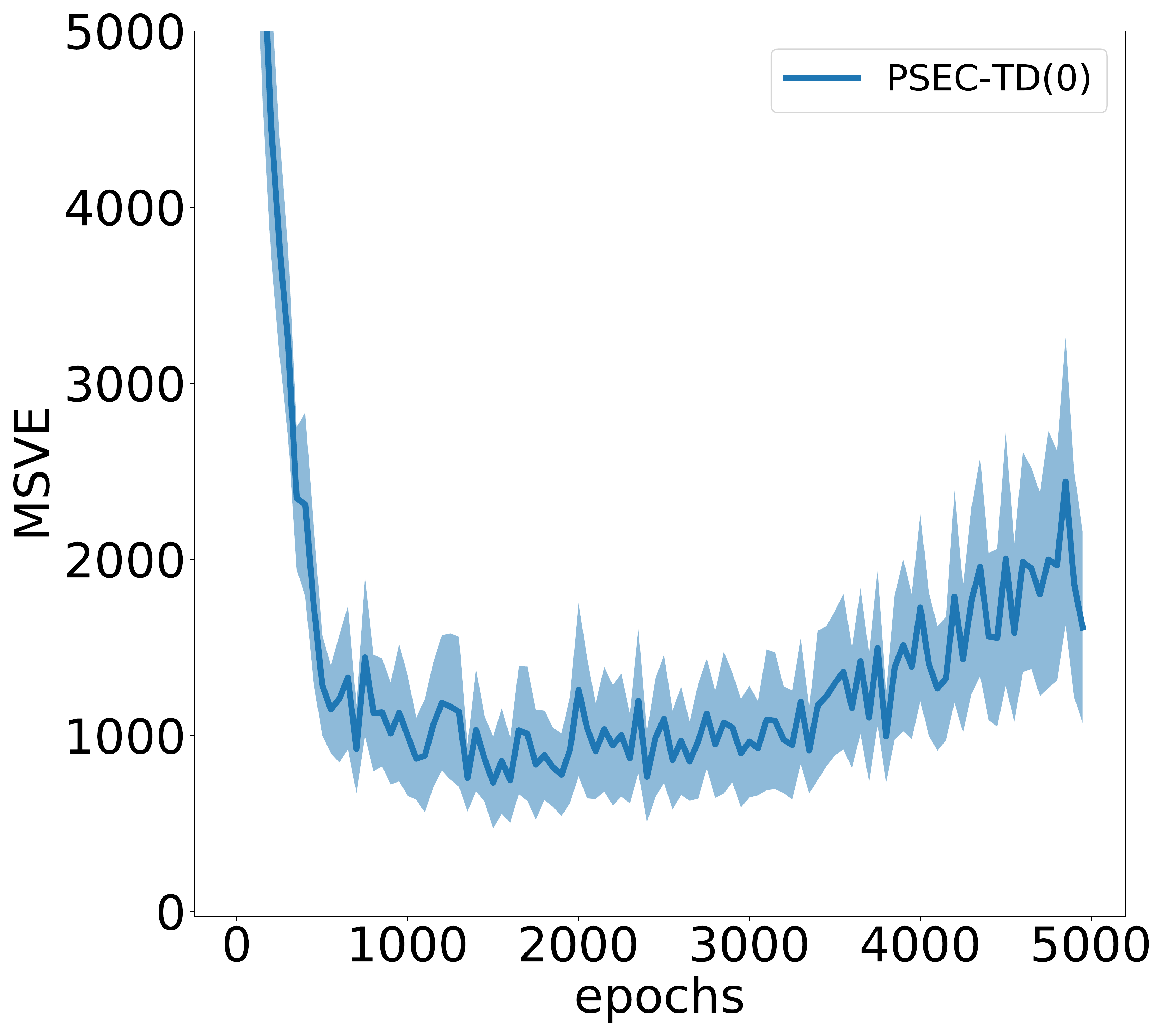}}
\caption{\footnotesize Comparing MSVE achieved, and training (tr)
and validation (val) loss by PSEC after each epoch of training for a fixed batch size of $20$ episodes on InvertedPendulum. Results shown are averaged over $100$ trials
    and shaded region is $95\%$ confidence.}
	\label{fig:exp_psec_overfitting_cont_act}
\end{figure}

This experiment attempts to give an understanding of how the MSVE achieved by each PSEC variant is
dependent on the number of epochs the PSEC model was trained for, while holding the value-function, PSEC
and behavior policy architectures fixed on InvertedPendulum. The experiment is conducted in a similar manner as before except we perform
$50$ gradient descent steps  (epochs) before plotting the training and validation loss, and MSVE achieved by PSEC after the 
gradient steps. The loss shown here is a regression loss detailed in Appendix \ref{app:invertedpendulum}. Figure \ref{fig:exp_psec_overfitting_cont_act} suggests that performance of PSEC depends on the
number of epochs it was trained for. Naturally, we do want to fit sufficiently well to the data, and the graph suggests
that some overfitting is tolerable. However, if overfitting becomes extreme, PSEC's performance suffers.  If some overfitting is desirable, then early stopping
is not the preferred principled approach to terminate PSEC model training.  When computing MSVE, we used the learning rate schedule specified at the beginning of this section. PSEC used a learning rate of $0.000781$ and model architecture of $2$ hidden layers
with $64$ neurons each and tanh activation.

\section{Extended Empirical Description}
\label{app:exp_details}

In this appendix we provide additional details for our empirical evaluation.

\subsection{Gridworld}
\label{app:gridoworld}
\begin{figure}[!h]
    \centering
    \includegraphics{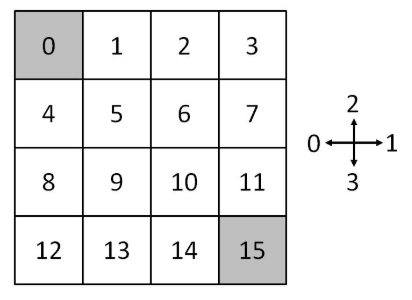}
    \caption{The Gridworld environment. Start at top left, bottom right is terminal state, discrete action space consists of the cardinal directions,  and
    discrete state space is the location in the grid. This specific image was taken from \href{https://cran.csiro.au/web/packages/reinforcelearn/vignettes/environments.html}{this link}.}
    \label{fig:gridworld}
\end{figure}

This domain is a $4 \times 4$ grid, where an agent starts at $(0, 0)$
and tries to navigate to $(3,3)$. The states are the discrete positions in
the grid and actions are the $4$ cardinal directions. The reward function 
is $100$ for reaching $(3,3)$, $-10$ for reaching $(1,1)$, $1$ for
reaching $(1,3)$, and $-1$ for reaching all other states. If an agent takes
an action that hits a wall, the agent stays in the same location. The
transition dynamics are controlled by a parameter, $p$, where with probability
$p$, an agent takes the intended action, else  it takes
an adjacent action with probability $(1 - p)/2$.  All policies use a softmax action selection distribution with value $\theta_{sa}$, for each state, s, and action a. The probability of
taking action a in state s is given by:

\begin{equation*}
    \pi(a|s) = \frac{e^{\theta_{sa}}}{\sum_{a' \in \mathcal{A}}e^{\theta_{sa'}}}
\end{equation*}

In the on-policy experiments, the evaluation and behavior
policies were equiprobable policies in each cardinal direction.
In the off-policy experiments, the evaluation policy was
such that each $\theta$ was generated
from a standard normal distribution and behavior policy was the
equiprobable policy.

For the comparisons of batch linear PSEC-TD(0) and TD(0), we
conducted a parameter sweep of the learning rates for the
varying batch sizes. The parameter sweep was over: 
$\{5e^{-3}, 1e^{-3}, 5e^{-2}, 1e^{-2}, 5e^{-1}\}$. We used
a value function convergence threshold of $1e^{-10}$. For PSEC-LSTD and LSTD, we stabilized the matrix, $A$,
before inverting it by adding $\epsilon I$ to the
computed $A$. We conducted a parameter sweep over
the following: 
$\epsilon \in \{1e^{-6}, 1e^{-5}, 1e^{-4}, 1e^{-3}, 1e^{-2}, 1e^{-1}\}$. 

\subsection{CartPole}

\begin{figure}[H]
    \centering
    \includegraphics[scale=0.3]{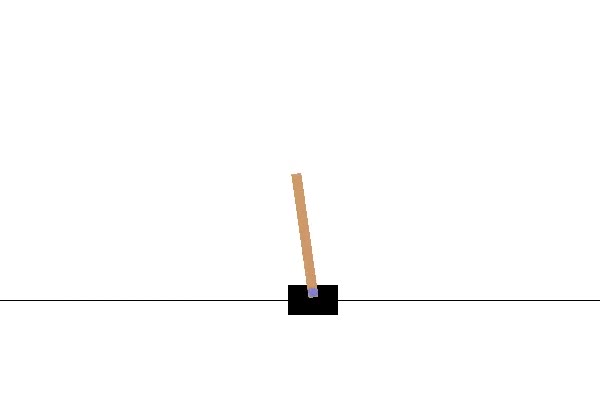}
    \caption{CartPole-v0 from OpenAI Gym \citep{Brockman2016OpenAIG}}
    \label{fig:cartpole}
\end{figure}

In this domain, the goal of the agent is to balance a pole for as long as possible. We trained our behavior policy
using REINFORCE \citep{REINFORCE} with the Adam optimizer \citep{adam} with learning rate $3e^{-4}$, $\beta_1 = 0.9$,
and $\beta_2 = 0.999$. The behavior policy mapped raw state features to a softmax distribution over actions. The policy was a neural network with $2$ hidden layers with $16$ neurons each, and used
the tanh activation function and was initialized with Xavier initialization \citep{xavier}. 

The value function used by all algorithms was initialized by Xavier initialization and used the tanh activation function, and was trained using semi-gradient TD \citep{Sutton:1998:IRL:551283}. We used a convergence threshold
of $0.1$.

The PSEC policy was initialized by Xavier initialization and used the tanh activation function. PSEC-TD sweeped
over the following learning rates $\alpha \in \{0.1 \times 2.0^j
| j = -7, -6, ... 1, 2\}$. It used a validation set of $10\%$ the size of the batch size. It used an L$2$ regularization of
$2e^{-2}$.  More details can be found in Section \ref{sec:exp} and Appendix \ref{app:cartpole_exps}.

\subsection{InvertedPendulum}
\label{app:invertedpendulum}

\begin{figure}[!h]
    \centering
    \includegraphics[scale=0.3]{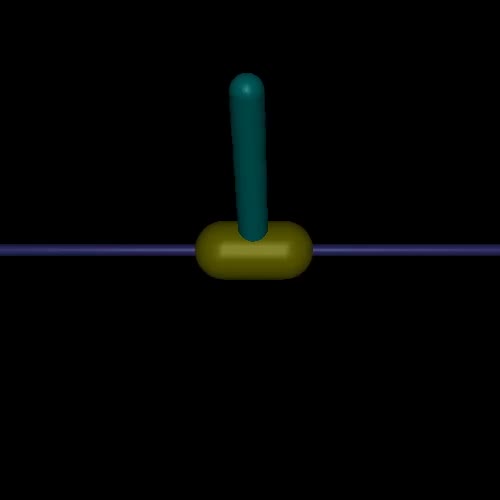}
    \caption{InvertedPendulum-v2 from OpenAI Gym and MuJoCo \citep{Brockman2016OpenAIG, mujoco}}
    \label{fig:invertedpendulum}
\end{figure}

In this domain, the goal of the agent is to balance a pole for as long as possible. We trained our behavior policy
using PPO \citep{ppo} with the default settings found on Gym \citep{Brockman2016OpenAIG}. The policy was a neural network with $2$ hidden layers with $64$ neurons each, and used
the tanh activation function and was initialized with Xavier initialization \citep{xavier}. It mapped state features to
an output vector that represented the mean vector of a Gaussian distribution. This mapping along with a separate
parameter set representing the log standard deviation of each element in the output vector, make up the policy. The policy
was trained by minimizing the following loss function:

\begin{align*}
\mathcal{L} = \sum_{i=1}^m 0.5 ((a_i - \mu(s_i))/e^\sigma)^2 + \sigma
\end{align*}

where $m$ are the number of state-action training examples, $a_i$ is the action vector of the $i^{th}$ example,
$\mu(s_i)$ is the mean vector outputted by the neural network of the Gaussian distribution for state $s_i$, and 
$\sigma$ is the the seperate parameter representing the log standard deviation of each element in the output vector,
$\mu(s_i)$.

The value function used by all algorithms was initialized by Xavier initialization and used the tanh activation function,
and was trained using semi-gradient TD \citep{Sutton:1998:IRL:551283}. We used a convergence threshold
of $0.1$.

The PSEC policy was initialized by Xavier initialization and used the tanh activation function. PSEC-TD swept
over the following learning rates $\alpha \in \{0.1 \times 2.0^j
| j = -8, -6, ..., 1, 2\}$. It used a validation set of $20\%$ the size of the batch size. More details can be found in
Section  \ref{sec:exp} and Appendix \ref{app:cartpole_exps}.
\end{document}